\definecolor{Gred}{RGB}{219, 50, 54}
\definecolor{Ggreen}{RGB}{60, 186, 84}
\definecolor{Gblue}{RGB}{72, 133, 237}
\definecolor{Gyellow}{RGB}{247, 178, 16}
\definecolor{ToCgreen}{RGB}{0, 128, 0}
\definecolor{myGold}{RGB}{231,141,20}
\definecolor{myBlue}{rgb}{0.19,0.41,.65}
\definecolor{myPurple}{RGB}{175,0,124}
\newcommand{\Mul}{\text{Mul}}
\newcommand{\Bin}{\text{Bin}}
\newcommand{\calW}{\mathcal{W}}
\newcommand{\calV}{\mathcal{V}}
\newcommand{\calN}{\mathcal{N}}
\newcommand{\calK}{\mathcal{K}}
\newcommand{\calA}{\mathcal{A}}
\newcommand{\AKnorm}[1]{\norm{#1}_{\mathcal{A}_{\ell}}}
\newcommand{\sosnorm}[1]{\norm{#1}_{\calK}}
\newcommand{\haar}[1]{\mathbf{h}^{(#1)}}
\newcommand{\barmu}{\overline{\mu}}
\newcommand{\mother}{\text{mother}}
\newcommand{\father}{\text{father}}
\title{Learning Structured Distributions From Untrusted Batches: \\
Faster and Simpler}
\author{Sitan Chen\thanks{EECS, Massachusetts Institute of Technology. Email: {\tt sitanc@mit.edu}. This work was supported in part by a Paul and Daisy Soros Fellowship, NSF CAREER Award CCF-1453261, and NSF Large CCF-1565235.} \and Jerry Li\thanks{Microsoft Research AI. Email: {\tt jerrl@microsoft.com}.} \and Ankur Moitra\thanks{Department of Mathematics, Massachusetts Institute of Technology. Email: {\tt moitra@mit.edu}. This work was supported in part by a Microsoft Trustworthy AI Grant, NSF CAREER Award CCF-1453261, NSF Large CCF-1565235, a David and Lucile Packard Fellowship, an Alfred P. Sloan Fellowship and an ONR Young Investigator Award.}}
\newcommand*\circled[1]{\tikz[baseline=(char.base)]{
            \node[shape=circle,draw,inner sep=2pt,scale=0.7] (char) {#1};}}
\begin{document}

\maketitle

\begin{abstract}
We revisit the problem of learning from untrusted batches introduced by Qiao and Valiant \cite{qiao2017learning}.
Recently, Jain and Orlitsky \cite{jain2019robust} gave a simple semidefinite programming approach based on the cut-norm that achieves essentially information-theoretically optimal error in polynomial time.
Concurrently, Chen et al. \cite{chen2019efficiently} considered a variant of the problem where $\mu$ is assumed to be structured, e.g. log-concave, monotone hazard rate, $t$-modal, etc. In this case, it is possible to achieve the same error with sample complexity \emph{sublinear in $n$}, and they exhibited a quasi-polynomial time algorithm for doing so using Haar wavelets.

In this paper, we find an appealing way to synthesize~\cite{jain2019robust} and~\cite{chen2019efficiently} to give the best of both worlds: an algorithm which runs in polynomial time and can exploit structure in the underlying distribution to achieve sublinear sample complexity. Along the way, we simplify the approach of \cite{jain2019robust} by avoiding the need for SDP rounding and giving a more direct interpretation of it via soft filtering, a powerful recent technique in high-dimensional robust estimation.
We validate the usefulness of our algorithms in preliminary experimental evaluations.


\end{abstract}


\section{Introduction}


In this paper, we consider the problem of \emph{learning structured distributions from untrusted batches}.
This is a variant on the problem of \emph{learning from untrusted batches}, as introduced in~\cite{qiao2017learning}.
Here, there is an unknown distribution $\mu$ over $\{1, \ldots, n\}$, and we are given $N$ batches of samples, each of size $k$.
A $(1 - \epsilon)$-fraction of these batches are ``good,'' and consist of $k$ i.i.d. samples from some distribution $\mu_i$ with distance at most $\omega$ from $\mu$ in total variation distance,\footnote{The total variation distance between two distributions $\mu, \nu$ over a shared probability space $\Omega$ is defined to be $\sup_{U \subseteq \Omega} \mu(U) - \nu(U)$.} but an $\epsilon$-fraction of these batches are ``bad,'' and can be adversarially corrupted. The goal then is to estimate $\mu$ in total variation distance.

This problem models a situation where we get batches of data from many different users, for instance, in a crowdsourcing application.
Each honest user provides a relatively small batch of data, which is by itself insufficient to learn a good model, and moreover, can come from slightly different distributions depending on the user, due to heterogeneity.
At the same time, a non-trivial fraction of data can come from malicious users who wish to game our algorithm to their own ends.
The high level question is whether or not we can exploit the batch structure of our data to improve the robustness of our estimator.

For this problem, there are three separate, but equally important, metrics under which we can evaluate any estimator:
\begin{description}
\item[Robustness] How accurately can we estimate $\mu$ in total variation distance?
\item[Runtime] Are there algorithms that run in polynomial time in all the relevant parameters?
\item[Sample complexity] How few samples do we need in order to estimate $\mu$?
\end{description}

%

In the original paper, Qiao and Valiant \cite{qiao2017learning} focus primarily on robustness.
They give an algorithm for learning general $\mu$ from untrusted batches that uses a polynomial number of samples, and estimates $\mu$ to within 
$$O\left(\omega + \epsilon/\sqrt{k}\right)$$ 
in total variation distance, and they proved that this is the best possible up to constant factors. 
However, their estimator runs in time $2^n$.
Qiao and Valiant \cite{qiao2017learning} also gave an $n^k$ time algorithm based on low-rank tensor approximation, however their algorithm also needs $n^k$ samples. 

A natural question is whether or not this robustness can be achieved \emph{efficiently}.
\cite{chen2019efficiently} gave an $n^{\log 1/\epsilon}$ time algorithm with $n^{\log 1/\epsilon}$ sample complexity for the general problem based on the sum-of-squares hierarchy. It estimates $\mu$ to within $$O\left(\omega + \frac{\epsilon}{\sqrt{k}}\sqrt{\log 1/\epsilon}\right)$$ in total variation distance. In concurrent and independent work Jain and Orlitsky \cite{jain2019robust}  gave a polynomial time algorithm based on a much simpler semidefinite program that estimates $\mu$ to within the same total variation distance. Their approach was based on an elegant way to combine approximation algorithms for the cut-norm \cite{alon2004approximating} with the filtering approach for robust estimation \cite{diakonikolas2019robust,steinhardt2018resilience,diakonikolas2017being,diakonikolas2018sever,dong2019quantum}. 

To some extent, the results of~\cite{chen2019efficiently,jain2019robust} also address the third consideration, sample complexity.
In particular, the estimator of~\cite{jain2019robust} requires $N = \widetilde{\Omega} (n / \epsilon^2)$ batches to achieve the error rate mentioned above.
Without any assumptions on the structure of $\mu$, even in the case where there are no corruptions, any algorithm must take at least $\Omega(n / \epsilon^2)$ batches of size $k$ are required in order to learn $\mu$ to within total variation distance $O(\omega + \epsilon / \sqrt{k})$.
Thus, this sample complexity is nearly-optimal for this problem, unless we make additional assumptions.

Unfortunately, in many cases, the domain size $n$ can be very large, and a sample complexity which strongly grows with $n$ can render the estimator impractical.
However in most applications, we have prior knowledge about the shape of $\mu$ that could in principle be used to drastically reduce the sample complexity. 
For example, if $\mu$ is log-concave, monotone or multimodal with a bounded number of modes, it is known that $\mu$ can be approximated by a piecewise polynomial function and when there are no corruptions, this meta structural property can be used to reduce the sample complexity to logarithmic in the domain size \cite{chan2014efficient}. 
An appealing aspect of the relaxation in~\cite{chen2019efficiently} was that it was possible to incorporate shape-constraints into the relaxation, through the Haar wavelet basis, which allowed us to improve the sample complexity to quasipolynomial in $d$ and $s$, respectively the degree and number of parts in the piecewise polynomial approximation, and quasipolylogarithmic in $n$.
Unfortunately, while~\cite{jain2019robust} achieves better runtime and sample complexity in the unstructured setting, their techniques do not obviously extend to obtain a similar sample complexity under structural assumptions.

This raises a natural question: can we build on \cite{jain2019robust}  and \cite{chen2019efficiently}, to incorporate shape constraints into a simple semidefinite programming approach, that can achieve nearly-optimal robustness, in polynomial runtime, and with sample complexity which is sublinear in $n$? 
In this paper, we answer this question in the affirmative:
\begin{theorem}[Informal, see Theorem~\ref{thm:main}]
	Let $\mu$ be a distribution over $[n]$ that is approximated by an $s$-part piecewise polynomial function with degree at most $d$. Then there is a polynomial-time algorithm which estimates $\mu$ to within
	\begin{equation}
		O\left(\omega + \frac{\epsilon}{\sqrt{k}}\sqrt{\log 1/\epsilon}\right)
	\end{equation} in total variation distance after drawing $N$ $\epsilon$-corrupted batches, each of size $k$, where \begin{equation}N = \widetilde{O}\left((s^2d^2/\epsilon^2) \cdot \log^3(n) \right)\end{equation} is the number of batches needed. 
	\label{thm:main_informal}
\end{theorem}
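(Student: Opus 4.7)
The plan is to synthesize the two approaches mentioned in the introduction: use the Haar wavelet representation of \cite{chen2019efficiently} to exploit shape structure, while using a filtering-based SDP in the spirit of \cite{jain2019robust} to achieve polynomial runtime. The unifying observation is that when $\mu$ is well-approximated by an $s$-part degree-$d$ piecewise polynomial, its Haar wavelet expansion is concentrated on an $\widetilde{O}(sd\log n)$-dimensional collection of scales and locations. So rather than certifying closeness to $\mu$ in the full cut-norm (which requires $\widetilde{\Omega}(n)$ batches), it suffices to certify closeness in the weaker restricted norm $\AKnorm{\cdot}$ dual to this low-complexity Haar test class. A preliminary approximation-theoretic lemma, essentially already in \cite{chen2019efficiently}, gives $\norm{\widehat{\mu}-\mu}_{\mathrm{TV}} \lesssim \AKnorm{\widehat{\mu}-\mu} + \omega$ for structured $\mu$.

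First, I would set up a weighted-empirical-mean SDP whose value upper bounds $\AKnorm{\cdot}$, analogous to the Jain--Orlitsky cut-norm SDP but with the outer test directions drawn from the Haar subclass $\calA_{\ell}$ instead of $\{\pm 1\}^{n}$-indicators of arbitrary subsets. The primal variable is a weighting $\{w_i\}$ over the $N$ batches, constrained by $w_i \le 1/((1-\epsilon)N)$ and $\sum_i w_i = 1$ so that the good batches can never be entirely killed. The SDP objective is the worst-case $\AKnorm{\cdot}$ deviation of the weighted empirical distribution from its true-batch counterpart, and its dual produces a ``bad direction'' inside the Haar class that will drive the reweighting step.

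Second, in place of SDP rounding I would use soft filtering: iteratively downweight each $w_i$ by an amount proportional to its contribution to the current dual witness, following the now-standard SEVER-style potential argument referenced in the excerpt \cite{diakonikolas2018sever,steinhardt2018resilience}. The invariant is that the total weight on good batches stays $\ge 1-O(\epsilon)$; the key lemma is that whenever the weighted mean is still $\AKnorm{\cdot}$-far from $\mu$, the bad batches collectively contribute strictly more to the dual objective than the good ones. Termination in $\mathrm{poly}$-many iterations follows from a doubling/regret argument on the total weight removed.

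Finally, the sample complexity comes from a uniform-convergence argument over the Haar test class: bounded batch empirical averages concentrate at rate $1/\sqrt{kN}$, and the effective dimension of $\calA_{\ell}$ is $\widetilde{O}(sd\log n)$, yielding $N = \widetilde{O}((sd)^2 \log^3 n / \epsilon^2)$ to drive the target error to $O(\epsilon/\sqrt{k})$. The main obstacle I foresee is the soft-filtering correctness analysis in the restricted norm: the variance-contribution inequalities that make filtering work are naturally phrased in $\ell_2$ over all of $[n]$, and one must check that restricting both the primal constraint and the dual witness to $\calA_{\ell}$ preserves the gap between good and bad batches along the selected ``bad direction.'' Once this restricted-norm filtering lemma is in place, combining it with the approximation-theoretic reduction and the uniform-convergence bound gives the claimed polynomial runtime, sublinear sample complexity, and near-optimal robustness.
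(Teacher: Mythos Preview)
Your high-level plan is the same as the paper's: combine the Haar-wavelet relaxation of \cite{chen2019efficiently} with a Jain--Orlitsky-style filtering loop, and use a net/covering argument over the restricted test class to get the $\widetilde{O}((sd)^2\log^3 n/\epsilon^2)$ sample bound. Two points in your write-up should be sharpened, however, because as stated they would not go through.

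First, your SDP formulation is garbled. You describe the weights $\{w_i\}$ as the primal variable of an SDP whose dual produces a ``bad direction.'' That is not how the argument works. The weights are \emph{not} SDP variables at all; they are maintained externally and updated by the filter. At each iteration, with $w$ fixed, one solves a convex program over a set $\calK$ of \emph{matrices} $\Sigma$ (the Haar-constrained relaxation of $\{vv^\top: v\in\calV^n_\ell\}$), maximizing $\langle M(w),\Sigma\rangle$. The optimizing $\Sigma$ is generally not rank one, and the paper's point is precisely that you never round it to a vector: the scores $\tau_i=\langle(X_i-\mu(w))^{\otimes 2},\Sigma\rangle$ and the soundness/progress lemmas are all proved for matrix $\Sigma\in\calK$ directly. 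Your description in terms of a ``direction in the Haar class'' obscures this and would lead you to look for a rounding step that is both unnecessary and (for $\calV^n_\ell$) not obviously available.

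Second, and more substantively, you do not mention the Jain--Orlitsky variance correction. The skewness measure is not $\max_{\Sigma}\langle A(w,\mu(w)),\Sigma\rangle$ but $\max_{\Sigma}\langle A(w,\mu(w))-B(\mu(w)),\Sigma\rangle$, where $B(\nu)=\tfrac{1}{k}(\mathrm{diag}(\nu)-\nu^{\otimes 2})$ is the covariance the multinomial \emph{would} have under the current empirical mean. Without subtracting $B$, the good batches themselves can dominate the objective (since the true multinomial covariance is unknown and nontrivial), and the ``bad batches contribute more'' inequality you need for the filtering invariant fails. This is exactly the obstacle you flag at the end (``the variance-contribution inequalities\ldots''), and the fix is not a restricted-norm argument but this explicit additive correction. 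Once $B$ is in place, the spectral-signature lemma and the filter-progress lemma go through for arbitrary $\Sigma\in\calK$ with only Constraint~\ref{constraint:entrywise} ($\|\Sigma\|_{\max}\le 1$) and PSDness used in the analysis.
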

\noindent Any algorithm for learning structured distributions from untrusted batches must take at least $\Omega(sd/\epsilon^2)$ batches to achieve error $O(\omega + \epsilon / \sqrt{k})$, and an interesting open question is whether there is a polynomial time algorithm that achieves these bounds. For robustly estimating the mean of a Gaussian in high-dimensions, there is evidence for a $\Omega(\sqrt{\log 1/\epsilon})$ gap between the best possible estimation error and what can be achieved by polynomial time algorithms \cite{diakonikolas2017statistical}. It seems plausible that the $\Omega(\sqrt{\log 1/\epsilon})$ gap between the best possible estimation error and what we achieve is unavoidable in this setting as well. 


\subsection{High-Level Argument}

\cite{jain2019robust} demonstrated how to learn \emph{general} distributions from untrusted batches in polynomial time using a filtering algorithm similar to those found in \cite{diakonikolas2019robust,steinhardt2018resilience,diakonikolas2017being,diakonikolas2018sever,dong2019quantum}, and in \cite{chen2019efficiently} it was shown how to learn \emph{structured} distributions from untrusted batches in quasipolynomial time using an SoS relaxation based on Haar wavelets.

In this work we show how to combine the filtering framework of \cite{jain2019robust} with the Haar wavelet technology of \cite{chen2019efficiently} to obtain a polynomial-time, sample-efficient algorithm for learning structured distributions from untrusted batches. In the discussion in this section, we will specialize to the case of $\omega = 0$ for the sake of clarity.

\paragraph{Learning via Filtering}
\label{subsubsec:learnviafilter}

A useful first observation is that the problem of learning from untrusted batches can be thought of as robust mean estimation of multinomial distributions in $L_1$ distance: given a batch of samples $Y_i = (Y^1_i,...,Y^k_i)$ from a distribution $\mu$ over $[n]$, the frequency vector $\{\frac{1}{k}\sum^k_{j=1}\mathds{1}[Y^j_i = a]\}_{a\in[n]}$ is distributed according to the normalized multinomial distribution $\Mul_k(\mu)$ given by $k$ draws from $\mu$. Note that $\mu$ is precisely the mean of $\Mul_k(\mu)$, so the problem of estimating $\mu$ from an $\epsilon$-corrupted set of $N$ frequency vectors is equivalent to that of robustly estimating the mean of a multinomial distribution.

As such, it is natural to try to adapt the existing algorithms for robust mean estimation of other distributions; the fastest of these are based on a simple filtering approach which works as follows. We maintain weights for each point, initialized to uniform. At every step, we measure the maximum ``skew'' of the weighted dataset in any direction, and if this skew is still too high, update the weights by \begin{enumerate}
\itemsep0em 
	\item Finding the direction $v$ in which the corruptions ``skew'' the dataset the most.
	\item Giving a ``score'' to each point based on how badly it skews the dataset in the direction $v$
	\item Downweighting or removing points with high scores.
\end{enumerate} 
Otherwise, if the skew is low, output the empirical mean of the weighted dataset.

To prove correctness of this procedure, one must show three things for the particular skewness measure and score function chosen: \begin{itemize}
	\itemsep0em 
	\item \textbf{Regularity}: For any sufficiently large collection of $\epsilon$-corrupted samples, a particular deterministic regularity condition holds (Definition~\ref{defn:eps_good} and Lemma~\ref{lem:conc})
	\item \textbf{Soundness}: Under the regularity condition, if the skew of the weighted dataset is small, then the empirical mean of the weighted dataset is sufficiently close to the true mean (Lemma~\ref{lem:spectral_sig}).
	\item \textbf{Progress}: Under the regularity condition, if the skew of the weighted dataset is large, then one iteration of the above update scheme will remove more weight from the bad samples than from the good samples (Lemma~\ref{lem:filter_guarantee_key}). 
\end{itemize}

For isotropic Gaussians, skewness is just given by the maximum variance of the weighted dataset in any direction, i.e. $\max_{v\in\S^{n-1}}\langle vv^{\top}, \tilde{\Sigma}\rangle$ where $\tilde{\Sigma}$ is the empirical covariance of the weighted dataset. Given maximizing $v$, the ``score'' of a point $X$ is then simply its contribution to the skewness.

To learn in $L_1$ distance, the right set of test vectors $v$ to use is the Hamming cube $\{0,1\}^n$, so a natural attempt at adapting the above skewness measure to robust mean estimation of multinomials is to consider the quantity $\max_{v\in\{0,1\}^n}\langle vv^{\top}, \tilde{\Sigma}\rangle $. But one of the key challenges in passing from isotropic Gaussians to multinomial distributions is that this quantity above is not very informative because we do not have a good handle on the covariance of $\Mul_k(\mu)$. In particular, it could be that for a direction $v$, $\langle vv^{\top}, \tilde{\Sigma}\rangle$ is high simply because the good points have high variance to begin with.

\paragraph{The Jain-Orlitsky Correction Term}

The clever workaround of \cite{jain2019robust} was to observe that we know exactly what the projection of a multinomial distribution $\Mul_k(\mu)$ in any $\{0,1\}^n$ direction $v$ is, namely $\Bin(k,\langle v,\mu\rangle)$. And so to discern whether the corrupted points skew our estimate in a given direction $v$, one should measure not the variance in the direction $v$, but rather the following \emph{corrected quantity}: the variance in the direction $v$, \emph{minus} what the variance would be if the distribution of the projections in the $v$ direction were actually given by $\Bin(k,\langle v,\tilde{\mu} \rangle)$, where $\tilde{\mu}$ is the empirical mean of the weighted dataset. This new skewness measure can be written as
\begin{equation}
	\max_{v\in\{0,1\}^n}\left\{\langle vv^{\top},\tilde{\Sigma}\rangle - \frac{1}{k}(\langle v,\tilde{\mu}\rangle - \langle v,\tilde{\mu}\rangle^2)\right\}.
	\label{eq:JO}
\end{equation}
Finding the direction $v\in\{0,1\}^n$ which maximizes this corrected quantity is some Boolean quadratic programming problem which can be solved approximately by solving the natural SDP relaxation and rounding to a Boolean vector $v$ using the machinery of \cite{alon2004approximating}. Using this approach, \cite{jain2019robust} obtained a polynomial-time algorithm for learning \emph{general} discrete distributions from untrusted batches.

\paragraph{Learning Structured Distributions} 

\cite{chen2019efficiently} introduced the question of learning from untrusted batches when the distribution is known to be \emph{structured}. Learning structured distributions in the classical sense is well-understood: if a distribution $\mu$ is $\eta$-close in total variation distance to being $s$-piecewise degree-$d$, then to estimate $\mu$ in total variation distance it is enough to approximate $\mu$ in a much weaker norm which we will denote by $\norm{\cdot}_{\calA_K}$, where $K$ is a parameter that depends on $s$ and $d$. We review the details for this in Section~\ref{subsec:AK_VC}.

\cite{chen2019efficiently} gave a sum-of-squares algorithm for robust mean estimation in the $\calA_K$ norm that achieved $\frac{\epsilon}{\sqrt{k}}\sqrt{\log 1/\epsilon}$ error in quasipolynomial time, and a natural open question was to achieve this with a polynomial-time algorithm.

The key challenge that \cite{chen2019efficiently} had to address was that unlike the Hamming cube or $\S^{n-1}$, it is unclear how to optimize over the set of test vectors dual to the $\calA_K$ norm. Combinatorially, this set is easy to characterize: $\norm{\mu - \hat{\mu}}_{\calA_K}$ is small if and only if $\langle\mu - \hat{\mu},v\rangle$ is small for all $v\in\calV^n_{2K}\subset\{\pm 1\}^n$, where $\calV^n_{2K}$ is the set of all $v\in\{\pm 1\}^n$ with at most $2K$ sign changes when read as a vector from left to right (for example, $(1,1,-1,-1,1,1,1)\in\calV^7_2$).

The main observation in \cite{chen2019efficiently} is that vectors with few sign changes admit sparse representations in the \emph{Haar wavelet basis}, so instead of working with $\calV^n_{2K}$, one can simply work with a convex relaxation of this Haar-sparsity constraint. As such, if we let $\calK\subseteq\R^{n\times n}$ denote the relaxation of the set of $\{vv^{\top} | v\in\calV^n_{2K}\}$ to all matrices $\Sigma$ whose Haar transforms are ``analytically sparse'' in some appropriate, convex sense (see Section~\ref{subsec:calK} for a formal definition), then as this set of test matrices contains the set of test matrices $vv^{\top}$ for $v\in\calV^n_{2K}$, it is enough to learn $\mu$ in the norm associated to $\calK$, which is strictly stronger than the $\calA_K$ norm.\footnote{Note that in \cite{chen2019efficiently}, because moment bounds beyond degree 2 were used, they also needed to use higher-order tensor analogues of $\calK$, but in this work it will suffice to work with degree 2.}

Our goal then is to produce $\hat{\mu}$ for which $\sosnorm{\hat{\mu} - \mu} \triangleq \sup_{\Sigma\in\calK}\langle \Sigma, (\hat{\mu} - \mu)^{\otimes 2}\rangle^{1/2}$ is small. And even though $\sosnorm{\cdot}$ is a stronger norm, it turns out that the metric entropy of $\calK$ is still small enough that one can get good sample complexity guarantees. Indeed, showing that this is the case (see Lemma~\ref{lem:net}) was where the bulk of the technical machinery of \cite{chen2019efficiently} went, and as we elaborate on in Appendix~\ref{app:net}, the analysis there left some room for tightening. In this work, we give a refined analysis of $\calK$ which allows us to get nearly tight sample complexity bounds.

\paragraph{Putting Everything Together}

Almost all of the pieces are in place to instantiate the filtering framework: in lieu of the quantity in \eqref{eq:JO}, which can be phrased as the maximization of some quadratic $\langle vv^{\top},M(w)\rangle$ over $\{\pm 1\}^n$, where $M(w)\in\R^{n\times n}$ depends on the dataset and the weights $w$ on its points,\footnote{Note that we have switched to $\{\pm 1\}^n$ in place of $\{0,1\}^n$. We do not belabor this point here, as the difference turns out to be immaterial, and the former is more convenient for understanding how we handle $\calV^n_{2K}$, which is a subset of $\{\pm 1\}^n$.} we can define our skewness measure as $\max_{\Sigma\in\calK}\langle \Sigma,M(w)\rangle = \sosnorm{M(w)}$, and we can define the score for each point in the dataset to be its contribution to the skewness measure (see Section~\ref{subsec:spec}).

At this point the reader may be wondering why we never round $\Sigma$ to an actual vector $v\in\calV^n_{2K}$ before computing skewness and scores. As our subsequent analysis will show, it turns out that rounding is unnecessary, both in our setting and even in the unstructured distribution setting considered in \cite{jain2019robust}. Indeed, if one examines the three proof ingredients of regularity, soundness, and progress that we enumerated above, it becomes evident that the filtering framework for robust mean estimation does not actually require finding a concrete direction in $\R^n$ in which to filter, merely a skewness measure and score functions which are amenable to showing the above three statements. That said, as we will see, it becomes more technically challenging to prove these ingredients when $\Sigma$ is not rounded to an actual direction (see e.g. the discussion after Lemmas~\ref{lem:apply_bernstein1} and \ref{lem:apply_bernstein2} in Appendix~\ref{sec:conc}), though nevertheless possible. We hope that this observation will prove useful in future applications of filtering.

\subsection{Related Work}
The problem of learning from untrusted batches was introduced by~\cite{qiao2017learning},  and is motivated by problems in reliable distributed learning and federated learning~\cite{44822,konevcny2016federated}.
The general question of learning from batches has been considered in a number of settings~\cite{levi2013testing,tian2017learning} in the theoretical computer science community, but these algorithms do not work in the presence of adversarial noise.

The study of univariate shape constrained density estimation has a long history in statistics and computer science, and we cannot hope to do justice to it here.
See~\cite{barlow1972statistical} for a survey of classical results in the area, and~\cite{o2016nonparametric,diakonikolas2016learning} for a survey of more recent results in this area.
Of particular relevance to us are the techniques based on the classical piecewise polynomial (or spline) methods, see e.g.~\cite{WegW83, Stone94, Stone97,WillettN07}. 
Recent work, which we build off of, demonstrates that this framework is capable of achieving nearly-optimal sample complexity and runtime, for a large class of structured distributions~\cite{chan2013learning,chan2014efficient,CDSS14b,ADHLS15,acharya2017sample}.

Our techniques are also related to a recent line of work on robust statistics~\cite{diakonikolas2019robust, lai2016agnostic, charikar2017learning, diakonikolas2017being,hopkins2018mixture, kothari2018robust}, a classical problem dating back to the 60s and 70s~\cite{anscombe1960rejection,tukey1960survey,huber1992robust,tukey1975mathematics}.
See~\cite{li2018principled,steinhardt2018robust,diakonikolas2019recent} for a more comprehensive survey of this line of work.

Finally, the most relevant papers to our result are~\cite{chen2019efficiently,jain2019robust}, which improve upon the result of~\cite{qiao2017learning} in terms of runtime and sample complexity.
As mentioned above, our result can be thought of as a way to combine the improved filtering algorithm of~\cite{jain2019robust} and the shape-constrained technology introduced in~\cite{chen2019efficiently}.

Concurrently and independently of this work, a newer work of Jain and Orlitsky \cite{jain2020general} obtains very similar results, though our quantitative guarantees are incomparable: the number of batches $N$ they need scales linearly in $s\cdot d$ and independently of $n$, but also scales with $\sqrt{k}$ and $1/\epsilon^3$.

\paragraph{Roadmap}

In Section~\ref{sec:prelims}, we overview notation, formally define our generative model, give miscellaneous technical tools, and review the basics on classical learning of structured distributions and on Haar wavelets. In Section~\ref{subsec:calK}, we define the semidefinite program that we use to compute skewness. In Section~\ref{sec:main}, we give our algorithm {\sc LearnWithFilter} and prove our main result, Theorem~\ref{thm:main_informal}. In Section~\ref{sec:experiments}, we describe our empirical evaluations of {\sc LearnWithFilter} on synthetic data. In Appendices~\ref{sec:conc}, \ref{app:net}, and \ref{app:subexp}, we complete the proofs of some deferred technical statements relating to deterministic regularity conditions and metric entropy bounds.


\section{Technical Preliminaries}
\label{sec:prelims}




\subsection{Notation}

\begin{itemize}[leftmargin=*]
 	\item Given $p\in[0,1]$, let $\Bin(k,p)$ denote the \emph{normalized} binomial distribution, which takes values in $\{0,1/k,\cdots ,1\}$ rather than $\{0,1,\cdots ,k\}$.
	\item Let $\Delta^n\subset\R^n$ be the simplex of nonnegative vectors whose coordinates sum to 1. Any $p\in\Delta^n$ naturally corresponds to a probability distribution over $[n]$.
	\item Let $\vec{1}_n \in \R^n$ denote the all-ones vector. We omit the subscript when the context is clear.
	\item Given matrix $M\in\R^{n\times n}$, let $\norm{M}_{\max}$ denote the maximum absolute value of any entry in $M$, let $\norm{M}_{1,1}$ denote the absolute sum of its entries, and let $\norm{M}_F$ denote its Frobenius norm.
	\item Given $\mu\in\Delta^n$, let $\Mul_k(\mu)$ denote the distribution over $\Delta^n$ given by sampling a frequency vector from the multinomial distribution arising from $k$ draws from the distribution over $[n]$ specified by $\mu$, and dividing by $k$.
	\item Given samples $X_1,\cdots ,X_N\sim \Mul_k(\mu)$ and $U\subseteq[N]$, define $w(U):[N]\to[0,1/N]$ to be the set of weights which assigns $1/N$ to all points in $U$ and 0 to all other points. Also define its normalization $\hat{w}(U) \triangleq w(U)/\norm{w}_1$. Let $\calW_{\epsilon}$ denote the set of weights $w: [N]\to[0,1/N]$ which are convex combinations of such weights for $|U|\ge (1 - \epsilon)N$. Given $w$, define $\mu(w) \triangleq \sum^N_{i=1}\frac{w_i}{\norm{w}_1}X_i$, and define $\mu(U) \triangleq \mu(w(U))$, that is, the empirical mean of the samples indexed by $U$.
	\item Given samples $X_1,\cdots ,X_N\sim \Mul_k(\mu)$, weights $w$, and $\nu_1,...,\nu_N\in\Delta^n$, define the matrices \begin{equation}
		A(w,\{\nu_i\}) = \sum^N_{i=1}w_i (X_i - \nu_i)^{\otimes 2} \ \ \ \ \ \text{and} \ \ \ \ \ B(\{\nu_i\}) = \frac{1}{N}\sum^N_{i=1} \E_{X\sim\Mul_k(\nu_i)}[(X - \nu_i)^{\otimes 2}].
	\end{equation} When $\nu_1 = \cdots = \nu_N = \nu$, denote these matrices by $A(w,\nu)$ and $B(\nu)$ and note that \begin{equation}
		B(\nu) = \frac{1}{k}\left(\diag(\nu) - \nu^{\otimes 2}\right).\label{eq:Bdef}
	\end{equation} Also define $M(w,\{\nu_i\})) \triangleq A(w,\{\nu_i\}) - B(\{\nu_i\})$ and $M(w,\nu)\triangleq A(w,\nu) - B(\nu)$. We will also denote $M(w,\mu(w))$ by $M(w)$ and $M(\hat{w}(U))$ by $M_U$. 

	To get intuition for these definitions, note that any bitstring $v\in\{0,1\}^n$ corresponding to $S\subseteq[n]$ induces a normalized binomial distribution $Y\triangleq\Bin(n,\langle \mu,v\rangle)\in[0,1]$, and any sample $X_i\sim\Mul_k(\mu)$ induces a corresponding sample $\langle X_i, v\rangle$ from $Y$. Then $\langle vv^{\top}, M_U\rangle$ is the difference between the empirical variance of $Y$ and the variance of the binomial distribution $\Bin(n,\langle \mu(U),v\rangle)$. 
\end{itemize}

\subsection{The Generative Model}
\label{sec:define_model}

Throughout the rest of the paper, let $\epsilon, \omega > 0$, $n,k,N\in\N$, and let $\mu$ be some probability distribution over $[n]$.

\begin{definition}
	We say $Y_1,...,Y_N$ is an \emph{$\epsilon$-corrupted $\omega$-diverse set of $N$ batches of size $k$ from $\mu$} if they are generated via the following process:

	\begin{itemize}
	 	\item For every $i\in[(1-\epsilon)N]$, $\tilde{Y}_i = (\tilde{Y}^1_i,...,\tilde{Y}^k_i)$ is a set of $k$ iid draws from $\mu_i$, where $\mu_i\in\Delta^n$ is some probability distribution over $[n]$ for which $\tvd(\mu,\mu_i) \le \omega$.
	 	\item A computationally unbounded adversary inspects $\tilde{Y}_1,...,\tilde{Y}_{(1-\epsilon)N}$ and adds $\epsilon N$ arbitrarily chosen tuples $\tilde{Y}_{(1-\epsilon)N+1},...,\tilde{Y}_N\in[n]^k$, and returns the entire collection of tuples in any arbitrary order as $Y_1,...,Y_N$.
 	\end{itemize}

 	Let $S_G,S_B\subset [N]$ denote the indices of the uncorrupted (good) and corrupted (bad) batches.
\end{definition}

It turns out that we might as well treat each $Y_i$ as an unordered tuple. That is, for any $Y_i$, define $X_i\in\Delta^n$ to be the vector of frequencies whose $a$-th entry is $\frac{1}{k}\sum^k_{j=1}\bone{Y^j_i = a}$ for all $a\in[n]$. Then for each, $i\in S_G$, $X_i$ is an independent draw from $\Mul_k(\mu_i)$. Henceforth, we will work solely in this frequency vector perspective.

\subsection{Elementary Facts}

In this section we collect miscellaneous elementary facts that will be useful in subsequent sections.

\begin{fact}
	For $X_1,\cdots ,X_m\in \R^n$, weights $w:[m]\to \R_{\ge 0}$, $v\in\R^n$, $\mu\in\R^n$, and $\Sigma\in\R^{n\times n}$ symmetric, \begin{equation}
		\sum w_i\left\langle (X_i - \mu)^{\otimes 2},\Sigma\right\rangle = \sum w_i\left\langle (X_i - \mu(w))^{\otimes 2},\Sigma\right\rangle + \norm{w}_1 \cdot \left\langle(\mu(w) - \mu)^{\otimes 2},\Sigma\right\rangle.\label{eq:matrix_diff_squares}
	\end{equation} In particular, by taking $\Sigma = vv^{\top}$ for any $v\in\R^n$, \begin{equation}
		\sum w_i\langle X_i - \mu,v\rangle^2 = \sum w_i\langle X_i - \mu(w),v\rangle^2 + \norm{w}_1 \cdot \langle \mu(w) - \mu,v\rangle^2.
	\end{equation} That is, the function $\nu\mapsto \sum_i w_i\langle X_i - \nu,v\rangle^2$ is minimized over $\nu\in\R^n$ by $\nu = \mu(w)$.
	\label{fact:minimize}
\end{fact}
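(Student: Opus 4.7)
This is a standard bias-variance decomposition; the plan is to prove the matrix identity \eqref{eq:matrix_diff_squares} directly by adding and subtracting $\mu(w)$ inside each tensor square, then reading off the two corollaries (the $vv^{\top}$ case and the minimization statement).

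The main step is to write $X_i - \mu = (X_i - \mu(w)) + (\mu(w) - \mu)$ and expand the tensor square, which yields four terms:
\begin{equation*}
(X_i - \mu)^{\otimes 2} = (X_i - \mu(w))^{\otimes 2} + (X_i - \mu(w))\otimes(\mu(w)-\mu) + (\mu(w)-\mu)\otimes(X_i - \mu(w)) + (\mu(w) - \mu)^{\otimes 2}.
\end{equation*}
After pairing with $\Sigma$ and weighting by $w_i$, I would use linearity to pull the sum inside the two cross terms. The key cancellation is that $\sum_i w_i (X_i - \mu(w)) = \|w\|_1\mu(w) - \|w\|_1\mu(w) = 0$ by the very definition of $\mu(w) = \sum_i (w_i/\|w\|_1) X_i$, so both cross terms vanish regardless of $\Sigma$. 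The last term is independent of $i$, so summing contributes exactly $\|w\|_1\langle (\mu(w)-\mu)^{\otimes 2},\Sigma\rangle$, giving \eqref{eq:matrix_diff_squares}.

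Specializing to $\Sigma = vv^{\top}$ is immediate since $\langle a^{\otimes 2}, vv^{\top}\rangle = \langle a,v\rangle^2$ for any $a\in\R^n$. For the final claim that $\nu\mapsto \sum_i w_i\langle X_i - \nu,v\rangle^2$ is minimized at $\nu = \mu(w)$, I would simply apply the identity with $\mu$ replaced by an arbitrary $\nu$: both sides remain valid because the derivation only used that $\mu$ is a fixed vector, so
\begin{equation*}
\sum_i w_i\langle X_i - \nu, v\rangle^2 = \sum_i w_i\langle X_i - \mu(w),v\rangle^2 + \|w\|_1\langle \mu(w) - \nu,v\rangle^2,
\end{equation*}
and the first summand does not depend on $\nu$ while the second is nonnegative and equals zero at $\nu = \mu(w)$. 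There is no real obstacle here — the only thing to be careful about is being precise that the cancellation of the cross terms is a consequence of the normalization in the definition of $\mu(w)$, not of any property of $\Sigma$, so the identity holds for arbitrary symmetric $\Sigma$ as stated.
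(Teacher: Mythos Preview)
Your proposal is correct and is essentially the same argument as the paper's: both exploit that $\sum_i w_i(X_i - \mu(w)) = 0$ to kill the cross terms, with the only cosmetic difference being that the paper packages the expansion via the difference-of-squares identity $\langle u^{\otimes 2},\Sigma\rangle - \langle v^{\otimes 2},\Sigma\rangle = (u-v)^{\top}\Sigma(u+v)$ rather than expanding $(X_i - \mu(w)) + (\mu(w)-\mu)$ directly.
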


\begin{proof}
	Without loss of generality we may assume $\norm{w}_1 = 1$. Using the fact that $\langle u^{\otimes 2},\Sigma\rangle - \langle v^{\otimes 2},\Sigma\rangle = (u-v)^{\top}\Sigma(u+v)$ for symmetric $\Sigma$, we see that \begin{align*}
		\left\langle (X_i - \mu^{\otimes 2} - (X_i - \mu(w))^{\otimes 2},\Sigma\right\rangle &= (\mu(w) - \mu)^{\top}\Sigma(2X_i - \mu - \mu(w)).
	\end{align*} Because $\sum w_i X_i = \mu(w)$, we see that \begin{equation}
		\sum w_i (\mu(w) - \mu)^{\top}\Sigma(2X_i - \mu - \mu(w)) = \left\langle (\mu(w) - \mu)^{\otimes 2},\Sigma\right\rangle,
	\end{equation} from which \eqref{eq:matrix_diff_squares} follows. The remaining parts of the claim follow trivially.
\end{proof}

\begin{fact}
	For any $0 < \epsilon < 1$, let weights $w:[N]\to[0, 1/N]$ satisfy $\sum_{i\in[N]}w_i \ge 1 - O(\epsilon)$. If $w'$ is the set of weights defined by $w'_i = w_i$ for $i\in S_G$ and $w'_i = 0$ otherwise, and if $|S_G| \ge (1 - \epsilon)N$, then we have that $\norm{\mu(w) - \mu(w')}_1 \le O(\epsilon)$.\label{fact:trivial_shift}
\end{fact}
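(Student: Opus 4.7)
The plan is to unfold the definition of $\mu(w)$, split the sum according to the good/bad partition, and control two sources of error: the contribution of the bad batches that are removed, and the re-normalization caused by passing from $\norm{w}_1$ to $\norm{w'}_1$. Let me write $W \triangleq \norm{w}_1$ and $W' \triangleq \norm{w'}_1$. The key point is that both normalizers are close to $1$: we are told $W \ge 1 - O(\epsilon)$, and since $w_i \le 1/N$ for each $i$ and $|S_B| \le \epsilon N$, the mass removed when zeroing out the bad indices satisfies
\begin{equation}
	W - W' \;=\; \sum_{i \in S_B} w_i \;\le\; \epsilon N \cdot \tfrac{1}{N} \;=\; \epsilon,
\end{equation}
so $W' \ge 1 - O(\epsilon)$ as well.

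Next I would write
\begin{equation}
	\mu(w) - \mu(w') \;=\; \left(\frac{1}{W} - \frac{1}{W'}\right)\sum_{i \in S_G} w_i X_i \;+\; \frac{1}{W}\sum_{i \in S_B} w_i X_i,
\end{equation}
and take the $L_1$ norm of both sides. Crucially, each $X_i$ is a frequency vector, hence lies in $\Delta^n$ with $\norm{X_i}_1 = 1$, so by the triangle inequality
\begin{equation}
	\norm{\mu(w) - \mu(w')}_1 \;\le\; \left|\frac{1}{W} - \frac{1}{W'}\right| \cdot W' \;+\; \frac{1}{W} (W - W') \;=\; \frac{W - W'}{W} + \frac{W - W'}{W} \;=\; \frac{2(W - W')}{W}.
\end{equation}
Combining the two bounds above, this is at most $2\epsilon / (1 - O(\epsilon)) = O(\epsilon)$, which is exactly what is claimed.

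There is no real obstacle here; the statement is a pure bookkeeping fact about reweighting. The only mild subtlety is making sure both normalizations $W$ and $W'$ are bounded away from zero, which is immediate from the assumptions $\sum_i w_i \ge 1 - O(\epsilon)$ and $w_i \le 1/N$ together with $|S_B| \le \epsilon N$. Everything else is the triangle inequality plus the fact that the samples $X_i$ are probability vectors.
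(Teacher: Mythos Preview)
Your proof is correct and follows essentially the same approach as the paper: both decompose $\mu(w)-\mu(w')$ into the bad-index contribution $\tfrac{1}{W}\sum_{i\in S_B}w_iX_i$ plus the renormalization term $(\tfrac{1}{W}-\tfrac{1}{W'})\sum_{i\in S_G}w_iX_i$, and then bound each piece using $\norm{X_i}_1=1$ together with $W-W'\le\epsilon$ and $W,W'\ge 1-O(\epsilon)$. Your write-up is slightly more explicit in tracking the constants, but the argument is the same.
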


\begin{proof}
	We may write \begin{align}
		\norm{\mu(w) - \mu(w')}_1 &\le \norm{\frac{1}{\norm{w}_1}\sum_{i\in S_B}w_i X_i}_1 + \left(\frac{1}{\norm{w}_1} - \frac{1}{\norm{w'}_1}\right)\norm{\sum_{i\in S_G}w_i X_i}_1 \\
		&\le O(\epsilon) + \left(\frac{1}{\norm{w}_1} - \frac{1}{\norm{w'}_1}\right)\norm{\sum_{i\in S_G}w_i X_i}_1 \le O(\epsilon),
	\end{align} where the first step follows by definition of $\mu(\cdot)$ and by triangle inequality, the second step follows by the fact that $|S_B|\le \epsilon N$, and the third step follows by the fact that $|\norm{w}_1 - \norm{w'}_1| = \left|\sum_{i\in S_B}w_i\right| \le \epsilon$, while $\norm{\sum_{i\in S_G}w_i X_i}_1 \le 1$ as the samples $X_i$ lie in $\Delta^n$.
\end{proof}


It will be useful to have a basic bound on the Frobenius norm of $M(w,\nu)$.

\begin{lemma}
	For any $\nu\in\Delta^n$ and any weights $w$ for which $\sum w_i = 1$, we have that $\norm{M(w,\nu)}_F \le 3$.
	\label{lem:frob_bound}
\end{lemma}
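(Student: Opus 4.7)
The plan is to apply the triangle inequality to split $M(w,\nu) = A(w,\nu) - B(\nu)$ and bound each piece separately, using the fact that both $A(w,\nu)$ and $B(\nu)$ are positive semidefinite.

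First, I would observe that $A(w,\nu) = \sum_i w_i (X_i - \nu)^{\otimes 2}$ is a convex combination of rank-one PSD matrices and hence PSD, and that $B(\nu) = \frac{1}{k}(\diag(\nu) - \nu^{\otimes 2})$ is $1/k$ times the covariance of a categorical distribution and hence also PSD. For any PSD matrix $P$ with eigenvalues $\lambda_i \ge 0$, one has
\begin{equation}
\norm{P}_F^2 = \sum_i \lambda_i^2 \le \left(\sum_i \lambda_i\right)^2 = \mathrm{tr}(P)^2,
\end{equation}
so it suffices to bound the traces of $A(w,\nu)$ and $B(\nu)$.

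For $A(w,\nu)$, the trace equals $\sum_i w_i \norm{X_i - \nu}_2^2$. Since $X_i, \nu \in \Delta^n$ have nonnegative entries summing to $1$, we have $\norm{X_i}_2^2, \norm{\nu}_2^2 \le 1$ and $\langle X_i,\nu\rangle \ge 0$, so $\norm{X_i - \nu}_2^2 \le 2$; together with $\sum_i w_i = 1$ this gives $\mathrm{tr}(A(w,\nu)) \le 2$. For $B(\nu)$, the trace equals $\frac{1}{k}\sum_i \nu_i(1 - \nu_i) \le \frac{1}{k}\sum_i \nu_i = 1/k \le 1$. Combining via triangle inequality yields $\norm{M(w,\nu)}_F \le \norm{A(w,\nu)}_F + \norm{B(\nu)}_F \le 2 + 1 = 3$.

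I do not expect any real obstacle here; the one detail to be careful about is using PSD-ness of each summand (rather than a naive bound on $\norm{(X_i-\nu)^{\otimes 2}}_F = \norm{X_i-\nu}_2^2$ inside the triangle inequality), since the trace-to-Frobenius bound for PSD matrices is what makes the constant come out as small as $3$.
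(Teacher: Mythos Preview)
Your proof is correct and follows essentially the same approach as the paper: split $M(w,\nu)=A(w,\nu)-B(\nu)$ by the triangle inequality and bound each piece separately. One small note: your closing caveat is unnecessary, since the ``naive'' per-term triangle inequality $\norm{A(w,\nu)}_F \le \sum_i w_i \norm{(X_i-\nu)^{\otimes 2}}_F = \sum_i w_i \norm{X_i-\nu}_2^2 \le 2$ gives exactly the same bound as your trace argument (for rank-one PSD matrices the Frobenius norm and the trace coincide), and this is precisely what the paper does; your PSD/trace route does yield a marginally sharper $\norm{B(\nu)}_F \le 1/k$ versus the paper's $2/k$, but this does not affect the final constant.
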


\begin{proof}
	For any sample $X\in\Delta^n$, we have that \begin{equation}
		\norm{(X - \nu)(X - \nu)^{\top}}_F \le \norm{X - \nu}^2_2 \le 2
	\end{equation} and \begin{equation}
		\norm{B(\nu)}_F \le \frac{1}{k}\norm{\nu}_2 + \frac{1}{k}\norm{\nu}^2_2 \le 2/k,
	\end{equation} from which the lemma follows by triangle inequality and the assumption that $\sum w_i = 1$.
\end{proof}

\subsection{\texorpdfstring{$\mathcal{A}_K$}{AK} Norms and VC Complexity}
\label{subsec:AK_VC}

In this section we review basics about learning distributions which are close to piecewise polynomial.

\begin{definition}[$\calA_K$ norms, see e.g.~\cite{devroye2001combinatorial}]\label{def:ak}
	For positive integers $K\le n$, define $\calA_K$ to be the set of all unions of at most $K$ disjoint intervals over $[n]$, where an interval is any subset of $[n]$ of the form $\{a,a+1,\cdots ,b-1,b\}$. The $\calA_{K}$ distance between two distributions $\mu, \nu$ over $[n]$ is \begin{equation}
		\norm{\mu - \nu}_{\calA_{K}} = \max_{S\in\calA_{K}}|\mu(S) - \mu(S)|.
	\end{equation} Equivalently, say that $v\in\{\pm 1\}^n$ has \emph{$2K$ sign changes} if there are exactly $2K$ indices $i\in[n-1]$ for which $v_{i+1}\neq v_i$. Then if $\mathcal{V}^n_{2K}$ denotes the set of all such $v$, we have \begin{equation}
		\norm{\mu - \nu}_{\calA_{K}} = \frac{1}{2}\max_{v\in\mathcal{V}^n_{2K}}\langle \mu - \nu, v\rangle.
	\end{equation} Note that \begin{equation}
		\norm{\cdot}_{\calA_{1}} \le \norm{\cdot}_{\calA_{2}}\le \cdots \le \norm{\cdot}_{\calA_{n/2}} = \norm{\cdot}_{\text{TV}}.
	\end{equation}
\end{definition}

\begin{definition}
	We say that a distribution $\mu$ over $[n]$ is \emph{$(\eta,s)$-piecewise degree-$d$} if there is a partition of $[n]$ into $t$ disjoint intervals $\{[a_i,b_i]\}_{1\le i\le t}$, together with univariate degree-$d$ polynomials $r_1,\cdots ,r_t$ and a distribution $\mu'$ on $[n]$, such that $\tvd(\mu,\mu')\le \eta$ and such that for all $i\in[t]$, $\mu'(x) = r_i(x)$ for all $x\in[n]$ in $[a_i,b_i]$.
\end{definition}

A proof of the following lemma, a consequence of \cite{acharya2017sample}, can be found in \cite{chen2019efficiently}.

\begin{lemma}[Lemma 5.1 in \cite{chen2019efficiently}, follows by \cite{acharya2017sample}]
	Let $K = s(d+1)$. If $\mu$ is $(\eta,s)$-piecewise degree-$d$ and $\norm{\mu - \hat{\mu}}_{\calA_K} \le \zeta$, then there is an algorithm which, given the vector $\hat{\mu}$, outputs a distribution $\mu^*$ for which $\tvd(\mu,\mu^*)\le 2\zeta + 4\eta$ in time $\poly(s,d,1/\eta)$.\label{lem:piecewise_vc}
\end{lemma}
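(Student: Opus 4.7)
The plan is to output $\mu^*$ as an approximate $\calA_K$-projection of $\hat{\mu}$ onto the class of $s$-piecewise degree-$d$ distributions. This will combine two ingredients: an algorithmic primitive drawn from \cite{acharya2017sample} that produces such a projection in $\poly(s, d, 1/\eta)$ time, and the observation that on this class the $\calA_K$ norm coincides with total variation distance.

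First I would establish the structural fact: for any two $s$-piecewise degree-$d$ distributions $\nu_1, \nu_2$, one has $\tvd(\nu_1, \nu_2) = \norm{\nu_1 - \nu_2}_{\calA_K}$ with $K = s(d+1)$. Taking the common refinement of the two interval partitions gives at most $2s-1$ intervals, on each of which $\nu_1 - \nu_2$ is a polynomial of degree at most $d$, and hence has at most $d$ zero crossings on that interval. Together with the at most $2s - 2$ additional sign changes possible at the common refinement's breakpoints, the total number of sign changes of $\nu_1 - \nu_2$ is bounded by $(2s-1)d + (2s-2) \le 2s(d+1) = 2K$. Consequently, the maximum-signed set achieving $\tvd(\nu_1, \nu_2)$ can be taken to be a union of at most $K$ disjoint intervals, i.e. lies in $\calA_K$.

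Given this structural fact the rest is triangle inequality. Let $\mu'$ be the $s$-piecewise degree-$d$ distribution guaranteed by the hypothesis, so $\tvd(\mu, \mu') \le \eta$ and, since $\norm{\cdot}_{\calA_K} \le \tvd$, also $\norm{\mu' - \hat{\mu}}_{\calA_K} \le \eta + \zeta$. The algorithm will return a feasible $s$-piecewise degree-$d$ distribution $\mu^*$ with $\norm{\mu^* - \hat{\mu}}_{\calA_K} \le \norm{\mu' - \hat{\mu}}_{\calA_K} + \eta \le 2\eta + \zeta$, where the extra $\eta$ absorbs the approximate-optimization slack. Triangle inequality then gives $\norm{\mu^* - \mu'}_{\calA_K} \le 3\eta + 2\zeta$, and by the structural fact above this equals $\tvd(\mu^*, \mu')$. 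Combining with $\tvd(\mu, \mu') \le \eta$ yields $\tvd(\mu, \mu^*) \le 4\eta + 2\zeta$, as desired.

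The hard part will be producing the algorithmic primitive itself: a subroutine that in time $\poly(s, d, 1/\eta)$ approximately minimizes $\norm{\nu - \hat{\mu}}_{\calA_K}$ over all $s$-piecewise degree-$d$ distributions $\nu$, given that the objective is itself a supremum over exponentially many ``witness'' unions of $K$ intervals. The approach of \cite{acharya2017sample} handles this via a dynamic program that sweeps over interval breakpoints while maintaining a fine-enough discretization of the degree-$d$ polynomial coefficients (the $1/\eta$ dependence enters through the grid spacing), and evaluates the inner $\calA_K$-objective through an auxiliary interval-selection DP. I would simply invoke that machinery as a black box.
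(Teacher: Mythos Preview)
Your proposal is correct and matches the standard argument; the paper itself does not give a proof of this lemma but simply cites \cite{chen2019efficiently} and \cite{acharya2017sample}, and the approach you outline (the structural fact that $\calA_K$ equals total variation on pairs of $s$-piecewise degree-$d$ distributions, followed by triangle-inequality bookkeeping around an approximate $\calA_K$-projection produced by the algorithm of \cite{acharya2017sample}) is exactly what those references do. Your invocation of the algorithmic black box is also consistent with how the paper uses it in Algorithm~\ref{alg:learnwithfilter}.
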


Henceforth, we will focus solely on the problem of learning in $\mathcal{A}_{\ell}$ norm, where \begin{equation}\ell\triangleq 2s(d+1).\label{eq:elldef}
\end{equation}

\subsection{Haar Wavelets}
\label{subsec:haar}

We briefly recall the definition of Haar wavelets, further details and examples of which can be found in \cite{chen2019efficiently}.

\begin{definition}
	Let $m$ be a positive integer and let $n = 2^m$. The \emph{Haar wavelet basis} is an orthonormal basis over $\R^n$ consisting of the \emph{father wavelet} $\psi_{0_{\father},0} = n^{-1/2}\cdot \vec{1}$, the \emph{mother wavelet} $\psi_{0_{\mother},0} = n^{-1/2}\cdot(1,\cdots ,1,-1,\cdots ,-1)$ (where $(1,\cdots ,1,-1,\cdots ,-1)$ contains $n/2$ 1's and $n/2$ -1's), and for every $i,j$ for which $1\le i < m$ and $0\le j< 2^{i}$, the wavelet $\psi_{i,j}$ whose $2^{m - i}\cdot j + 1,\cdots ,2^{m-i}\cdot j + 2^{m-i-1}$-th coordinates are $2^{-(m-i)/2}$ and whose $2^{m-i}\cdot j + (2^{m-i-1} + 1),\cdots ,2^{m-i}\cdot j + 2^{m-i}$-th coordinates are $-2^{-(m-i)/2}$, and whose remaining coordinates are 0.
\end{definition}

Additionally, we will use the following notation when referring to Haar wavelets: \begin{itemize}
	\item Let $H_m$ denote the $n\times n$ matrix whose rows consist of the vectors of the Haar wavelet basis for $\R^n$. When the context is clear, we will omit the subscript and refer to this matrix as $H$.
	\item For $\nu\in[n]$, if the $\nu$-th element of the Haar wavelet basis for $\R^n$ is some $\psi_{i,j}$, then define the weight $\haar{\nu}\triangleq 2^{-(m-i)/2}$. 
	\item For any index $i\in \{0_{\father}, 0_{\mother}, 1,\cdots ,m-1\}$, let $T_i\subset[n]$ denote the set of indices $\nu$ for which the $\nu$-th Haar wavelet is of the form $\psi_{i,j}$ for some $j$.
	\item Given any $p\ge 1$, define the \emph{Haar-weighted $L^p$ norm} $\norm{\cdot}_{p;\vec{h}}$ on $\R^n$ by $\norm{w}_{p;\vec{h}} \triangleq \norm{w'}_p$, where for every $a\in[n]$, $w'_a \triangleq \haar{a}w_a$. Likewise, given any norm $\norm{\cdot}_*$ on $\R^{n\times n}$, define the Haar-weighted $*$-norm $\norm{\cdot}_{*;\vec{h}}$ on $\R^{n\times n}$ by $\norm{\vec{M}}_{*;\vec{h}}\triangleq \norm{\vec{M}'}_*$, where for every $a,b\in[n]$, $\vec{M}'_{a,b} \triangleq \haar{a}\haar{b}\vec{M}_{a,b}$.
\end{itemize}


The key observation is that any $v\in\{\pm 1\}^n$ with at most $\ell$ sign changes, where $\ell$ is given by \eqref{eq:elldef}, has an $(\ell\log n + 1)$-sparse representation in the Haar wavelet basis. We will use the following fundamental fact about Haar wavelets, part of which appears as Lemma 6.3 in \cite{chen2019efficiently}.

\begin{lemma}
	Let $v\in\{\pm 1\}^n$ have at most $\ell$ sign changes. Then $Hv$ has at most $\ell\log n + 1$ nonzero entries, and furthermore $\norm{Hv}_{\infty;\vec{h}} \le 1$. In particular, $\norm{Hv}_{2;\vec{h}}^2, \norm{Hv}_{1;\vec{h}} \le \ell\log n + 1$.
	\label{lem:haar}
\end{lemma}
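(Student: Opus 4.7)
The plan is to exploit the two basic features of the Haar basis: each wavelet $\psi_{i,j}$ at scale $i\ge 1$ is supported on a dyadic interval of length $2^{m-i}$, and it takes the value $+2^{-(m-i)/2}$ on the left half of this support and $-2^{-(m-i)/2}$ on the right half. The key observation is that $\langle \psi_{i,j},v\rangle = 0$ whenever $v$ is constant on the support of $\psi_{i,j}$, since the positive and negative halves then cancel. So a nonzero Haar coefficient $(Hv)_\nu$ (for $\nu\in T_i$, $i\ge 1$, or for the mother) forces $v$ to have at least one sign change inside the corresponding dyadic interval. This reduces the sparsity claim to a counting argument against the $\ell$ sign changes of $v$.

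For the sparsity bound, I would first note that at each scale $i\in\{1,\dots,m-1\}$, the $2^i$ dyadic supports of the wavelets in $T_i$ are disjoint and partition $[n]$, so any single sign change of $v$ is contained in the interior of at most one such support. Hence at scale $i$ there are at most $\ell$ nonzero coefficients. The mother wavelet (scale $0_\mother$) contributes at most one more nonzero coefficient per sign change counted across its single support (but it is only a single wavelet, so it contributes at most $1$ nonzero coefficient total; I will absorb this into the scale count by noting the mother behaves as an extra ``scale''). Summing over the $m$ scales (mother through $i=m-1$) gives $\ell m = \ell \log n$, and adding the single father coefficient yields the claimed $\ell\log n+1$.

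For the weighted $\infty$-bound, I would observe that for any wavelet $\psi_{i,j}$ with nonzero entries $\pm 2^{-(m-i)/2}$ on a support of size $2^{m-i}$ (and analogously $\pm n^{-1/2}$ on a support of size $n$ for the father and mother), the fact that $v\in\{\pm 1\}^n$ gives the crude bound
\[
|(Hv)_\nu| = |\langle \psi_{i,j}, v\rangle| \le 2^{-(m-i)/2}\cdot 2^{m-i} = 2^{(m-i)/2}.
\]
Multiplying by the weight $\haar{\nu}=2^{-(m-i)/2}$ cancels exactly, yielding $|\haar{\nu}(Hv)_\nu|\le 1$, uniformly over $\nu$. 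The father and mother are handled identically with $2^m$ in place of $2^{m-i}$.

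Finally, I would combine the sparsity bound with the weighted $\infty$-bound: since $Hv$ has at most $\ell\log n+1$ nonzero coordinates, each contributing at most $1$ in the weighted $\infty$-norm, we immediately get $\|Hv\|_{1;\vec{h}}\le \ell\log n+1$ and $\|Hv\|_{2;\vec{h}}^2\le \ell\log n+1$. I do not anticipate any real obstacle here; the only mild bookkeeping step is making sure the father and mother wavelets are accounted for correctly in both the scale-counting and in the $\haar{\cdot}$ weights (since they are at ``scale $0$'' but play slightly different roles, with the father contributing the extra $+1$ in the sparsity count independent of sign changes).
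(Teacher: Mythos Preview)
Your proposal is correct and essentially the same as the paper's proof. The only cosmetic difference is in organizing the sparsity count: the paper fixes a sign-change position and notes it lies in the support of at most $\log n$ non-father wavelets (one per scale), whereas you fix a scale and note that at most $\ell$ of its disjoint supports can contain a sign change; both are the same double-counting argument, and your treatment of the weighted $\infty$-bound and the deduction of the $L_1$ and $L_2$ bounds match the paper's exactly.
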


\begin{proof}
	We first show that $Hv$ has at most $\ell\log n + 1$ nonzero entries. For any $\psi_{i,j}$ with nonzero entries at indices $[a,b]\subset[n]$ and such that $i\neq 0_{\father}$, if $v$ has no sign change in the interval $[a,b]$, then $\langle \psi_{i,j},v\rangle = 0$. For every index $\nu\in[n]$ at which $v$ has a sign change, there are at most $m = \log n$ choices of $i,j$ for which $\psi_{i,j}$ has a nonzero entry at index $\nu$, from which the claim follows by a union bound over all $\ell$ choices of $\nu$, together with the fact that $\langle \psi_{0_{\father},0},v\rangle$ may be nonzero.

	Now for each $(i,j)$ for which $\langle\psi_{i,j},v\rangle\neq 0$, note that \begin{equation}2^{-(m-i)/2}\cdot |\langle\psi_{i,j},v\rangle| \le 2^{-(m-i)/2}\cdot\left(2^{-(m-i)/2}\cdot 2^{m-i}\right) = 1,\end{equation} as claimed. The bounds on $\norm{Hv}_{1;\vec{h}},\norm{Hv}^2_{2;\vec{h}}$ follow immediately.
\end{proof}

\section{SDP for Finding the Direction of Largest Variance}
\label{subsec:calK}

Recall that in \cite{jain2019robust}, the authors consider the binary optimization problem $\max_{v\in\{0,1\}^n} |v^{\top}M_U v|$. We would like to approximate the optimization problem $\max_{v\in\calV^n_{\ell}}|v^{\top} M_U v|$. Motivated by \cite{chen2019efficiently} and Lemma~\ref{lem:haar}, we consider the following convex relaxation:

\begin{definition}
	Let $\ell$ be given by \eqref{eq:elldef}. Let $\calK$ denote the (convex) set of all matrices $\Sigma\in\R^{n\times n}$ for which \begin{enumerate}
		\item $\norm{\Sigma}_{\max} \le 1$.\label{constraint:entrywise}
		\item $\norm{H\Sigma H^{\top}}_{1,1;\vec{h}} \le \ell\log n + 1$.\label{constraint:L1}
		\item $\norm{H\Sigma H^{\top}}^2_{F;\vec{h}}\le \ell\log n + 1$.\label{constraint:L2}
		\item $\norm{H\Sigma H^{\top}}_{\max;\vec{h}} \le 1$.\label{constraint:Linf}
		\item $\Sigma\succeq 0$. \label{constraint:psd}
	\end{enumerate} Let $\sosnorm{\cdot}$ denote the associated norm given by $\sosnorm{\vec{M}}\triangleq \sup_{\Sigma\in\calK}|\langle \vec{M},\Sigma\rangle|$. By abuse of notation, for vectors $v\in\R^n$ we will also use $\sosnorm{v}$ to denote $\sosnorm{vv^{\top}}^{1/2}$.

	Because $\calK$ has an efficient separation oracle, one can compute $\sosnorm{\cdot}$ in polynomial time.
	\label{defn:calK}
\end{definition}

\begin{remark}
	Note that, besides not being a sum-of-squares program like the one considered in \cite{chen2019efficiently}, this relaxation is also slightly different because of Constraints~\ref{constraint:L2} and \ref{constraint:Linf}. As we will see in Section~\ref{app:net}, these additional constraints will be crucial for getting refined sample complexity bounds.
	\label{remark:tight}
\end{remark}
Note that Lemma~\ref{lem:haar} immediately implies that $\calK$ is a relaxation of $\calV^n_{\ell}$:

\begin{corollary}[Corollary of Lemma~\ref{lem:haar}]
 	$vv^{\top} \in \calK$ for any $v\in\calV^n_{\ell}$.
 	\label{cor:relax}
\end{corollary}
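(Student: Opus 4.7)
The plan is to verify each of the five defining constraints of $\calK$ for $\Sigma = vv^\top$. The one structural observation doing all the work is the rank-one factorization
\begin{equation*}
H(vv^\top)H^\top = (Hv)(Hv)^\top,
\end{equation*}
which converts every Haar-weighted matrix norm appearing in Definition~\ref{defn:calK} into the corresponding Haar-weighted vector norm of the single object $Hv$, precisely the quantity that Lemma~\ref{lem:haar} controls.

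Constraints~\ref{constraint:entrywise} and~\ref{constraint:psd} are immediate from $v \in \{\pm 1\}^n$: every entry of $vv^\top$ equals $\pm 1$, and $vv^\top$ is a rank-one positive semidefinite matrix by construction. For Constraints~\ref{constraint:L1}, \ref{constraint:L2}, and~\ref{constraint:Linf}, I would unpack the definition of the Haar-weighted matrix norms using the factorization above. Writing $M \triangleq (Hv)(Hv)^\top$, the weighted $(a,b)$-entry $\haar{a}\haar{b}M_{a,b}$ splits as a product $(\haar{a}(Hv)_a)\cdot(\haar{b}(Hv)_b)$, so the three matrix norms factor multiplicatively into powers of the corresponding vector norms:
\begin{equation*}
\norm{M}_{1,1;\vec{h}} = \norm{Hv}_{1;\vec{h}}^2, \qquad \norm{M}_{F;\vec{h}}^2 = \norm{Hv}_{2;\vec{h}}^4, \qquad \norm{M}_{\max;\vec{h}} = \norm{Hv}_{\infty;\vec{h}}^2.
\end{equation*}
Invoking the three vector-norm bounds $\norm{Hv}_{1;\vec{h}}, \norm{Hv}_{2;\vec{h}}^2 \le \ell\log n + 1$ and $\norm{Hv}_{\infty;\vec{h}} \le 1$ from Lemma~\ref{lem:haar} on the right-hand sides then verifies each of the three remaining constraints in turn.

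There is no substantive obstacle here: once the rank-one factorization of $H\Sigma H^\top$ is spotted, the corollary is essentially a mechanical read-off of Lemma~\ref{lem:haar} against each matrix-norm constraint, with the PSD and entrywise constraints being free from the form $\Sigma = vv^\top$ with $v \in \{\pm 1\}^n$.
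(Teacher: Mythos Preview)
Your approach is exactly the paper's intended one (the paper simply states that the corollary follows from Lemma~\ref{lem:haar}), and your rank-one factorization $H(vv^\top)H^\top=(Hv)(Hv)^\top$ together with the multiplicative splitting of the Haar-weighted matrix norms is precisely the right mechanism.

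There is, however, a mismatch you did not flag. Your identities give
\[
\norm{H\Sigma H^\top}_{1,1;\vec{h}}=\norm{Hv}_{1;\vec{h}}^2\le(\ell\log n+1)^2,\qquad
\norm{H\Sigma H^\top}_{F;\vec{h}}^2=\norm{Hv}_{2;\vec{h}}^4\le(\ell\log n+1)^2,
\]
whereas Constraints~\ref{constraint:L1} and~\ref{constraint:L2} in Definition~\ref{defn:calK} are literally written with the bound $\ell\log n+1$, not its square. So as stated your final sentence (``verifies each of the three remaining constraints'') does not actually go through for those two constraints. This is a typo in the paper rather than a flaw in your reasoning: the only place these constraints are consumed, the proof of Lemma~\ref{lem:net}, reads them as $\norm{\vec{L}}_{1,1;\vec{h}}\le s^2$ and $\norm{\vec{L}}_{F;\vec{h}}^2\le s^2$ with $s=\ell\log n+1$, i.e.\ the squared thresholds. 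You should note that the intended thresholds in Definition~\ref{defn:calK} are $(\ell\log n+1)^2$, after which your argument is complete and identical to the paper's.
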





Note also that Constraint~\ref{constraint:entrywise} in Definition~\ref{defn:calK} ensures that $\sosnorm{\cdot}$ is weaker than $\norm{\cdot}_1$ and more generally that:

\begin{fact}
	For any $a,b\in\R^n$ and $\Sigma \in \calK$, $a^{\top}\cdot \Sigma \cdot b \le \norm{a}_1\cdot \norm{b}_1$. In particular, for any $v\in\R^n$, $\sosnorm{v} \le \norm{v}_1$.
	\label{fact:naive_sos}
\end{fact}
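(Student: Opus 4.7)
The plan is to unwind the definition of $\sosnorm{\cdot}$ and exploit only Constraint~\ref{constraint:entrywise} of Definition~\ref{defn:calK}, namely $\norm{\Sigma}_{\max} \le 1$. All the other constraints on $\calK$ are irrelevant for this particular fact; the whole point is that a matrix in $\calK$ has entries bounded by $1$ in absolute value, so the bilinear form $a^\top \Sigma b$ is controlled by the $\ell_1$ norms of $a$ and $b$ in a completely elementary way.

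Concretely, I would expand $a^\top \Sigma b = \sum_{i,j \in [n]} a_i \Sigma_{i,j} b_j$, apply the triangle inequality to obtain $|a^\top \Sigma b| \le \sum_{i,j} |a_i|\,|\Sigma_{i,j}|\,|b_j|$, and then use $|\Sigma_{i,j}| \le \norm{\Sigma}_{\max} \le 1$ by Constraint~\ref{constraint:entrywise}. This gives $|a^\top \Sigma b| \le \sum_{i,j} |a_i|\,|b_j| = \bigl(\sum_i |a_i|\bigr)\bigl(\sum_j |b_j|\bigr) = \norm{a}_1 \norm{b}_1$, which is the first assertion.

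For the second assertion, by definition $\sosnorm{v}^2 = \sosnorm{vv^\top} = \sup_{\Sigma \in \calK} |\langle vv^\top, \Sigma\rangle| = \sup_{\Sigma \in \calK} |v^\top \Sigma v|$. Applying the first part with $a = b = v$, every $\Sigma \in \calK$ satisfies $|v^\top \Sigma v| \le \norm{v}_1^2$, so taking the supremum and then a square root yields $\sosnorm{v} \le \norm{v}_1$.

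There is no real obstacle here; the only thing to be careful about is that $\sosnorm{v}$ is defined as $\sosnorm{vv^\top}^{1/2}$ (by the ``abuse of notation'' remark in Definition~\ref{defn:calK}), so one must remember to take the square root at the end. The fact is ultimately just a restatement of the entrywise bound $\norm{\Sigma}_{\max} \le 1$ imposed in the definition of $\calK$.
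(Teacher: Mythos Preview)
Your proof is correct and matches the paper's intended argument: the paper does not actually write out a proof of this fact, but immediately preceding its statement remarks that ``Constraint~\ref{constraint:entrywise} in Definition~\ref{defn:calK} ensures that $\sosnorm{\cdot}$ is weaker than $\norm{\cdot}_1$,'' which is precisely the route you take. Your expansion of the bilinear form together with the entrywise bound $|\Sigma_{i,j}|\le 1$ and the unwinding of $\sosnorm{v} = \sosnorm{vv^\top}^{1/2}$ is exactly right.
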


As a consequence, we conclude the following useful fact about stability of the $B(\cdot)$ matrix.

\begin{corollary}
	For any $\mu,\mu'\in\Delta^n$, $\sosnorm{B(\mu) - B(\mu')} \le \frac{3}{k}\norm{\mu - \mu'}_1$.\label{cor:Bcor}
\end{corollary}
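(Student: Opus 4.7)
The plan is to decompose $B(\mu)-B(\mu')$ into a diagonal piece and a rank-at-most-two piece using the formula $B(\nu)=\tfrac{1}{k}(\diag(\nu)-\nu^{\otimes 2})$, then bound each piece separately using Fact~\ref{fact:naive_sos} (which is the only property of $\calK$ we really need here, via Constraint~\ref{constraint:entrywise}) and the triangle inequality for $\sosnorm{\cdot}$.

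Concretely, writing $v = \mu - \mu'$, I would first observe
\begin{equation}
B(\mu)-B(\mu') = \frac{1}{k}\bigl(\diag(v) - (\mu\mu^{\top}-\mu'\mu'^{\top})\bigr),
\end{equation}
and use the telescoping identity $\mu\mu^{\top}-\mu'\mu'^{\top} = v\mu^{\top}+\mu' v^{\top}$ to reduce the non-diagonal part to two rank-one matrices whose row and column factors lie in $\Delta^n$ or are equal to $v$.

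For the diagonal piece, I would note that for any $\Sigma\in\calK$, Constraint~\ref{constraint:entrywise} gives $|\langle\diag(v),\Sigma\rangle|=|\sum_i v_i\Sigma_{ii}|\le\|v\|_1$, so $\sosnorm{\diag(v)}\le\|v\|_1$. For each rank-one piece, I would apply Fact~\ref{fact:naive_sos} directly: $|\mu^{\top}\Sigma v|\le\|\mu\|_1\|v\|_1=\|v\|_1$ since $\mu\in\Delta^n$, and similarly $|\mu'^{\top}\Sigma v|\le\|v\|_1$. Combining via the triangle inequality for $\sosnorm{\cdot}$ yields $\sosnorm{\mu\mu^{\top}-\mu'\mu'^{\top}}\le 2\|v\|_1$, so
\begin{equation}
\sosnorm{B(\mu)-B(\mu')}\le \frac{1}{k}\bigl(\sosnorm{\diag(v)}+\sosnorm{\mu\mu^{\top}-\mu'\mu'^{\top}}\bigr)\le \frac{3}{k}\|\mu-\mu'\|_1.
\end{equation}

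There is no real obstacle here; this is essentially a one-line computation once the telescoping trick is written down, and the only subtle point is recognizing that all the heavy Haar-wavelet constraints in Definition~\ref{defn:calK} are irrelevant: the elementary entrywise bound $\|\Sigma\|_{\max}\le 1$ alone suffices, precisely because we are bounding the effect of replacing one probability vector with another, not the effect of a structured high-variance direction.
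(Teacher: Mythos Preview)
Your proof is correct and follows essentially the same approach as the paper's: decompose $B(\mu)-B(\mu')$ into its diagonal part and the difference of outer products, bound the diagonal part using Constraint~\ref{constraint:entrywise}, and bound the outer-product part via Fact~\ref{fact:naive_sos}. The only cosmetic difference is that the paper collapses the rank-two piece using symmetry of $\Sigma$ as $(\mu'-\mu)^{\top}\Sigma(\mu'+\mu)$ rather than your telescoping $v\mu^{\top}+\mu'v^{\top}$, but both yield the same $2\|\mu-\mu'\|_1$ bound.
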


\begin{proof}
	Take any $\Sigma\in\calK$. By symmetry, it is enough to show that $\langle B(\mu) - \B(\mu'),\Sigma\rangle \le \frac{3}{k}\norm{\mu - \mu'}_1$.
	By Constraint~\ref{constraint:entrywise}, we have that $\langle \mu - \mu',\diag(\Sigma)\rangle \le \norm{\mu - \mu'}_1$. On the other hand, note that \begin{equation}
		\mu'^{\top}\Sigma\mu' - \mu^{\top}\Sigma\mu = (\mu'-\mu)^{\top}\Sigma(\mu' + \mu) \le \norm{\mu'-\mu}_1 \cdot \norm{\mu' + \mu}_1 \le 2\norm{\mu' - \mu}_1,
	\end{equation} where the second step follows from Fact~\ref{fact:naive_sos}. The corollary now follows.
\end{proof}


Note that if the solution to the convex program $\text{argmax}_{\Sigma\in\calK}\langle M_U,\Sigma\rangle$ were actually integral, that is, some rank-1 matrix $vv^{\top}$ for $v\in\calV^n_{\ell}$, it would correspond to the direction $v$ in which the samples in $U$ have the largest discrepancy between the empirical variance and the variance predicted by the empirical mean. Then $v$ would correspond to a subset of the domain $[s]$ on which one could filter out bad points as in \cite{jain2019robust}. In the sequel, we will show that this kind of analysis applies \emph{even if the solution to $\text{argmax}_{\Sigma\in\calK}\langle M_U,\Sigma\rangle$ is not integral}.





\section{Filtering Algorithm and Analysis}
\label{sec:main}

In this section we prove our main theorem, stated formally below:

\begin{theorem}
	Let $\mu$ be an $(\eta,s)$-piecewise degree-$d$ distribution over $[n]$. Then for any $0<\epsilon<1/2$ smaller than some absolute constant, and any $0<\delta<1$, there is a $\poly(n,k,1/\epsilon,1/\delta)$-time algorithm {\sc LearnWithFilter} which, given \begin{equation}N = \widetilde{O}\left(\log(1/\delta) (s^2d^2/\epsilon^2) \log^3(n) \right),\end{equation} $\epsilon$-corrupted, $\omega$-diverse batches of size $k$ from $\mu$, outputs an estimate $\hat{\mu}$ such that $\norm{\hat{\mu} - \mu}_1 \le O\left(\eta + \omega + \frac{\epsilon\sqrt{\log 1/\epsilon}}{\sqrt{k}}\right)$ with probability at least $1 - \delta$ over the samples.\label{thm:main}
\end{theorem}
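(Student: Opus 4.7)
The plan is to reduce the total-variation learning problem to estimation in the stronger norm $\sosnorm{\cdot}$, run a filter in that norm, and then post-process. By Corollary~\ref{cor:relax}, every $v\in\calV^n_{\ell}$ with $\ell = 2s(d+1)$ satisfies $vv^\top\in\calK$, so $\sosnorm{\hat\mu - \mu}\ge \max_{v\in\calV^n_\ell}|\langle v,\hat\mu-\mu\rangle| = 2\norm{\hat\mu-\mu}_{\calA_\ell}$. Hence it is enough to produce $\hat\mu$ with $\sosnorm{\hat\mu-\mu}\le O(\omega + \epsilon\sqrt{\log(1/\epsilon)/k})$, at which point Lemma~\ref{lem:piecewise_vc} converts the $\calA_\ell$ guarantee into the claimed TV bound at the cost of an additive $O(\eta)$ term.

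To produce such a $\hat\mu$, I would instantiate the filtering template of \cite{jain2019robust} using the skewness measure $\sosnorm{M(w)} = \sup_{\Sigma\in\calK}\langle \Sigma, M(w)\rangle$. The algorithm {\sc LearnWithFilter} maintains weights $w\in\calW_\epsilon$, initialized to uniform. At each iteration, solve the convex program (polynomial time, since $\calK$ has an efficient separation oracle) to obtain both the skewness and a witnessing $\Sigma^\star\in\calK$. If $\sosnorm{M(w)}\le \tau$ for a threshold $\tau$ of order $\epsilon\log(1/\epsilon)/k$, output $\mu(w)$; otherwise assign each sample a score $\tau_i = w_i\langle (X_i-\mu(w))^{\otimes 2},\Sigma^\star\rangle$ and perform a soft downweighting $w_i\leftarrow w_i(1-\tau_i/\tau_{\max})$. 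Correctness then reduces to the usual three ingredients. \emph{Regularity} (Definition~\ref{defn:eps_good}, Lemma~\ref{lem:conc}): for the good batches $\{X_i\}_{i\in S_G}$, uniformly over $\Sigma\in\calK$ the centered empirical second moment $\langle \Sigma, A(\hat w(S_G),\{\mu_i\}) - B(\{\mu_i\})\rangle$ and the bias $\sosnorm{\mu(S_G) - \mu}^2$ are both $O(\omega^2 + \epsilon\log(1/\epsilon)/k)$. \emph{Soundness} (Lemma~\ref{lem:spectral_sig}): under regularity, small $\sosnorm{M(w)}$ forces $\sosnorm{\mu(w) - \mu}$ to be small --- this follows by expanding $M(w)$ via Fact~\ref{fact:minimize} to isolate a $(\mu(w)-\mu)^{\otimes 2}$ term, absorbing the difference $B(\mu(w))-B(\mu_i)$ using Corollary~\ref{cor:Bcor}, and bounding the remaining cross-term by regularity. \emph{Progress} (Lemma~\ref{lem:filter_guarantee_key}): when $\sosnorm{M(w)}>\tau$, bad samples account for more than half of the total score, so the update removes strictly more bad weight than good. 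A standard potential argument on $\sum_{i\in S_B}w_i - \sum_{i\in S_G}(1/N - w_i)$ then shows termination after polynomially many rounds with $w\in\calW_{O(\epsilon)}$, and soundness yields the desired $\sosnorm{\cdot}$ bound on $\mu(w)$.

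The principal obstacle is the regularity lemma, which drives the sample complexity. Controlling $\sup_{\Sigma\in\calK}\langle \Sigma, A - B\rangle$ uniformly requires a metric-entropy argument over $\calK$ using the scalar Bernstein inequality together with all four non-trivial constraints defining $\calK$: the $\norm{\cdot}_{1,1;\vec h}$, $\norm{\cdot}_{F;\vec h}$, and $\norm{\cdot}_{\max;\vec h}$ bounds in the Haar basis (Constraints~\ref{constraint:L1}--\ref{constraint:Linf}) each enter a different regime of the Bernstein trade-off, and it is precisely the interaction of Constraints~\ref{constraint:L2} and \ref{constraint:Linf}, absent from the relaxation in \cite{chen2019efficiently}, that shaves the analysis down to the claimed $\widetilde O(s^2 d^2 \log^3(n)/\epsilon^2)$ rate (see Remark~\ref{remark:tight} and Appendix~\ref{app:net}). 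A secondary technical subtlety is that, unlike \cite{jain2019robust}, we never round $\Sigma^\star$ to a vector in $\calV^n_\ell$: the soundness and progress arguments must be carried out directly at the matrix level, using $\Sigma^\star\succeq 0$ (Constraint~\ref{constraint:psd}) to split contributions sample-by-sample, and Corollary~\ref{cor:Bcor} together with Fact~\ref{fact:trivial_shift} to control how the binomial-correction matrix $B(\mu(w))$ drifts as weights evolve. Once these are established, the conversion from $\sosnorm{\cdot}$ to total variation via the $\calA_\ell$-norm bound and Lemma~\ref{lem:piecewise_vc} is routine, completing the proof.
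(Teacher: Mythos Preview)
Your proposal is correct and takes essentially the same route as the paper: reduce to a $\sosnorm{\cdot}$ bound via Corollary~\ref{cor:relax} and Lemma~\ref{lem:piecewise_vc}, then combine regularity (Lemma~\ref{lem:conc}), soundness (Lemma~\ref{lem:spectral_sig}), and progress (Lemma~\ref{lem:filter_guarantee_key}) to conclude. Two small deviations worth noting: the paper's scores are $\tau_i = \langle (X_i-\mu(w))^{\otimes 2},\Sigma^\star\rangle$ without the extra $w_i$ factor (the weight enters only through {\sc 1DFilter}), and termination is argued not via a potential but by Lemma~\ref{lem:unifilter_guarantees}(b), which guarantees the support of $w$ strictly shrinks each iteration and hence at most $N$ rounds.
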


In Section~\ref{subsec:univariatefilter}, we first describe and prove guarantees for a basic but important subroutine, {\sc 1DFilter}, of our algorithm. In Section~\ref{subsec:spec}, we describe our learning algorithm, {\sc LearnWithFilter}, in full. In Section~\ref{sec:conditions} we define the deterministic conditions that the dataset must satisfy for {\sc LearnWithFilter} to succeed, deferring the proof that these deterministic conditions hold with high probability (Lemma~\ref{lem:conc}) to Appendix~\ref{sec:conc}. In Section~\ref{sec:sig} we prove a key geometric lemma (Lemma~\ref{lem:spectral_sig}). Finally, in Section~\ref{subsec:outer_argument}, we complete the proof of correctness of {\sc LearnWithFilter}.

\subsection{Univariate Filter}
\label{subsec:univariatefilter}

In this section, we define and analyze a simple deterministic subroutine {\sc 1DFilter} which takes as input a set of weights $w$ and a set of scores on the batches $X_1,\cdots ,X_N$, and outputs a new set of weights $w'$ such that, if the weighted average of the scores among the bad batches exceeds that of the scores among the good batches, then $w'$ places even less weight relatively on the bad batches than does $w$. This subroutine is given in Algorithm~\ref{alg:1dfilter} below.

\begin{algorithm}
\DontPrintSemicolon
\caption{{\sc 1DFilter}($\tau,w$)}
\label{alg:1dfilter}
	\KwIn{Scores $\tau: [N]\to\R_{\ge 0}$, weights $w: [N]\to\R_{\ge 0}$}
	\KwOut{New weights $w'$ with even less mass on bad points than good points (see Lemma~\ref{lem:unifilter_guarantees})}
	$\tau_{\max}\gets \max_{i: w_i>0}\tau_i$\;
	$w'_i\gets \left(1 - \frac{\tau_i}{\tau_{\max}}\right)w_i$ for all $i\in[N]$\;
	Output $w'$\;
\end{algorithm}

\begin{lemma}
	Let $\tau:[N]\to\R_{\ge 0}$ be a set of scores, and let $w:[N]\to\R_{\ge 0}$ be a weight. Given a partition $[N] = S_G\sqcup S_B$ for which \begin{equation}
		\sum_{i\in S_G}w_i\tau_i < \sum_{i\in S_B}w_i\tau_i,
	\end{equation} then the output $w'$ of {\sc 1DFilter}$(\tau,w)$ satisfies $(a)$ $w'_i \le w_i$ for all $i\in[N]$, $(b)$ the support of $w'$ is a strict subset of the support of $w$, and $(c)$ $\sum_{i\in S_G}w_i - w'_i < \sum_{i\in S_B}w_i - w'_i$.
	\label{lem:unifilter_guarantees}
\end{lemma}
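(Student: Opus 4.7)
The plan is to unpack the definition of \textsc{1DFilter} and verify each of the three conclusions $(a)$, $(b)$, $(c)$ by direct computation from the formula $w'_i = (1 - \tau_i/\tau_{\max}) w_i$. There is no real obstacle here; the lemma is essentially just recording what the one-dimensional filter does, and the three properties follow from the nonnegativity of the scores together with the choice of normalization $\tau_{\max}$.

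For part $(a)$, I would observe that since $\tau_i \ge 0$ for all $i$ and $\tau_i \le \tau_{\max}$ for every $i$ in the support of $w$, the multiplier $1 - \tau_i/\tau_{\max}$ lies in $[0,1]$, so $0 \le w'_i \le w_i$. (For $i$ with $w_i = 0$, trivially $w'_i = 0$ regardless of $\tau_i$.) For part $(b)$, let $i^\star \in \arg\max_{i : w_i > 0} \tau_i$, so that $\tau_{i^\star} = \tau_{\max}$; then $w'_{i^\star} = (1 - 1)\,w_{i^\star} = 0$, even though $w_{i^\star} > 0$, which together with $(a)$ shows that $\mathrm{supp}(w') \subsetneq \mathrm{supp}(w)$. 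One should also note in passing that $\tau_{\max} > 0$ whenever the hypothesis $\sum_{i \in S_G} w_i \tau_i < \sum_{i \in S_B} w_i \tau_i$ is meaningful, since otherwise both sides are zero and the strict inequality cannot hold.

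For part $(c)$, I would compute directly from the formula:
\begin{equation}
w_i - w'_i \;=\; \frac{\tau_i}{\tau_{\max}}\, w_i,
\end{equation}
and therefore
\begin{equation}
\sum_{i \in S_G}(w_i - w'_i) \;=\; \frac{1}{\tau_{\max}} \sum_{i \in S_G} w_i \tau_i \;<\; \frac{1}{\tau_{\max}} \sum_{i \in S_B} w_i \tau_i \;=\; \sum_{i \in S_B}(w_i - w'_i),
\end{equation}
where the strict inequality is the hypothesis of the lemma. This establishes $(c)$ and completes the proof. The main (very mild) thing to keep track of is simply that $\tau_{\max}$ is positive and that the multiplier is in $[0,1]$; everything else is arithmetic.
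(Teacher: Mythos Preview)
Your proof is correct and follows essentially the same approach as the paper's own proof: the paper dismisses $(a)$ and $(b)$ as immediate and proves $(c)$ by exactly the same one-line computation $w_i - w'_i = \tau_i w_i / \tau_{\max}$ followed by the hypothesis. Your added remarks (that the multiplier lies in $[0,1]$, that $\tau_{\max} > 0$ under the hypothesis) are minor but welcome clarifications.
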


\begin{proof}
	$(a)$ and $(b)$ are immediate. For $(c)$, note that \begin{equation}
		\sum_{i\in S_G}w_i - w'_i = \frac{1}{\tau_{\max}}\sum_{i\in S_G}\tau_i w_i < \frac{1}{\tau_{\max}}\sum_{i\in S_B}\tau_i w_i = \sum_{i\in S_B}w_i - w'_i,
	\end{equation} from which the lemma follows.
\end{proof}

We note that this kind of downweighting scheme and its analysis are not new, see e.g. Lemma 4.5 from \cite{charikar2017learning} or Lemma 17 from \cite{steinhardt2018resilience}.

\subsection{Algorithm Specification}
\label{subsec:spec}

We can now describe our algorithm {\sc LearnWithFilter}. At a high level, we maintain weights $w:[N]\to\R_{\ge 0}$ for each of the batches. In every iteration, we compute $\Sigma\in\calK$ maximizing $|\langle M(w),\Sigma\rangle|$. If $|\langle M(w),\Sigma\rangle| \le O\left(\frac{\epsilon}{k}\log 1/\epsilon\right)$, then output $\mu(w)$. Otherwise, update the weights as follows: for every batch $X_i$, compute the score $\tau_i$ given by
\begin{equation}
	\tau_i \triangleq \left\langle (X_i - \mu(w))^{\otimes 2},\Sigma\right\rangle,\label{eq:taudef}
\end{equation} and set the weights to be the output of {\sc 1DFilter}($\tau,w$). The pseudocode for {\sc LearnWithFilter} is given in Algorithm~\ref{alg:learnwithfilter} below.

\begin{algorithm}
\DontPrintSemicolon
\caption{{\sc LearnWithFilter}($\{X_i\}_{i\in[N]},\epsilon$)}
\label{alg:learnwithfilter}
	\KwIn{Frequency vectors $X_1,\cdots ,X_N$ coming from an $\epsilon$-corrupted, $\omega$-diverse set of batches from $\mu$, where $\mu$ is $(\eta,s)$-piecewise, degree $d$}
	\KwOut{$\hat{\mu}$ such that $\norm{\hat{\mu} - \mu}_1 \le O\left(\eta + \omega + \frac{\epsilon\sqrt{\log 1/\epsilon}}{\sqrt{k}}\right)$, provided uncorrupted samples $\epsilon$-good}

	$w\gets w([N])$\;
	\While{$\sosnorm{M(w)} \ge \Omega(\omega + \frac{\epsilon}{k}\log 1/\epsilon)$}{
		$\Sigma\gets \argmax_{\Sigma'\in\calK}|\langle M(w),\Sigma\rangle|$\;
		Compute scores $\tau:[N]\to\R_{\ge 0}$ according to \eqref{eq:taudef}.\;
		$w\gets${\sc 1DFilter}($\tau,w$)\;
	}
	Using the algorithm of \cite{acharya2017sample} (see Lemma~\ref{lem:piecewise_vc}), output the $s$-piecewise, degree-$d$ distribution $\hat{w}$ minimizing $\norm{\mu(w) - \hat{\mu}}_{s(d+1)}$ (up to additive error $\eta$).\;
\end{algorithm}

\subsection{Deterministic Condition}
\label{sec:conditions}

\begin{definition}[$\epsilon$-goodness]
	Take a set of points $U\subset[N]$, and let $\{\mu_i\}_{i\in U}$ be a collection of distributions over $[n]$. For any $W\subseteq U$, define $\barmu_W \triangleq \frac{1}{|W|}\sum_{i\in W}\mu_i$. Denote $\barmu\triangleq \barmu_U$. 

	We say $U$ is \emph{$\epsilon$-good} if it satisfies that for all $W\subset U$ for which $|W| = \epsilon|U|$, 
	\begin{enumerate}[label=(\Roman*)]
		\item (Concentration of mean)
			\begin{equation}
				\sosnorm{\mu(U) - \barmu} \le O\left(\frac{\epsilon\sqrt{\log 1/\epsilon}}{\sqrt{k}}\right) \ \ \ \text{and} \ \ \ \sosnorm{\mu(W) - \barmu_W} \le O\left(\frac{\sqrt{\log 1/\epsilon}}{\sqrt{k}}\right)
			\end{equation}
			\label{epsgood:mean_conc}
		\item (Concentration of covariance) 
			\begin{equation}
				\sosnorm{M(\hat{w}(U),\{\mu_i\}_{i\in U})} \le O\left(\frac{\epsilon\log 1/\epsilon}{k}\right) \ \ \ \text{and} \ \ \ \ \sosnorm{A(\hat{w}(W),\{\mu_i\}_{i\in W}}\le O\left(\frac{\log 1/\epsilon}{k}\right)
			\end{equation} 
			\label{epsgood:var_conc}
		\item (Concentration of variance proxy)
			\begin{equation}
				\sosnorm{B(\hat{\mu}(U)) - B(\{\mu_i\}_{i\in U})} \le O(\omega^2/k + \epsilon/k)
			\end{equation}
			\label{epsgood:proxy_conc}
		\item (Heterogeneity has negligible effect, see Lemma~\ref{lem:heterogeneity})
			\begin{equation}
				\sup_{\Sigma\in\calK}\left\{\frac{1}{|U|}\sum_{i\in U}(\mu_i - \barmu)^{\top}\cdot \Sigma \cdot (X_i - \mu_i)\right\} \le O\left(\omega\cdot\frac{\epsilon\sqrt{\log 1/\epsilon}}{\sqrt{k}}\right).
			\end{equation}
			\begin{equation}
				\sup_{\Sigma\in\calK}\left\{\frac{1}{|W|}\sum_{i\in W}(\mu_i - \barmu)^{\top}\cdot \Sigma \cdot (X_i - \mu_i)\right\} \le O\left(\omega\cdot\frac{\sqrt{\log 1/\epsilon}}{\sqrt{k}}\right).
			\end{equation}
			\label{epsgood:weird}
	\end{enumerate}
	\label{defn:eps_good}
\end{definition}

We first remark that we only need extremely mild concentration in Condition~\ref{epsgood:proxy_conc}, but it turns out this suffices in the one place where we use it (see Lemma~\ref{lem:discrep}).

Additionally, note that we can completely ignore Condition~\ref{epsgood:weird} when $\omega = 0$. The following makes clear why it is useful when $\omega > 0$.

\begin{lemma}
	For $\epsilon$-good $U$, all $W\subset U$ of size $\epsilon|U|$, and all $\Sigma\in\calK$,
	\begin{equation}
		\sosnorm{A(\hat{\mu}(U),\barmu) - A(\hat{\mu}(U),\{\mu_i\})} \le  O\left(\omega + \frac{\epsilon\sqrt{\log 1/\epsilon}}{\sqrt{k}}\right)^2
	\end{equation}
	\begin{equation}
		\sosnorm{A(\hat{\mu}(W),\barmu) - A(\hat{\mu}(W),\{\mu_i\})} \le  O\left(\omega + \frac{\sqrt{\log 1/\epsilon}}{\sqrt{k}}\right)^2.
	\end{equation}
	\label{lem:heterogeneity}
\end{lemma}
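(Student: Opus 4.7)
The plan is to compute the matrix difference inside the $\sosnorm{\cdot}$ explicitly by expanding the outer products, and then estimate the pairing against an arbitrary $\Sigma \in \calK$ using Condition~\ref{epsgood:weird} for the cross term and Fact~\ref{fact:naive_sos} for the purely quadratic term. Writing $d_i \triangleq \mu_i - \barmu$, the algebraic identity $(X_i - \barmu)^{\otimes 2} = ((X_i - \mu_i) + d_i)^{\otimes 2}$ gives
\begin{equation*}
(X_i - \barmu)^{\otimes 2} - (X_i - \mu_i)^{\otimes 2} = (X_i - \mu_i) d_i^\top + d_i (X_i - \mu_i)^\top + d_i d_i^\top.
\end{equation*}
Averaging over $i \in U$ and pairing with a symmetric $\Sigma \in \calK$, the two cross terms coincide and one obtains
\begin{equation*}
\langle \Sigma, A(\hat{w}(U),\barmu) - A(\hat{w}(U),\{\mu_i\}) \rangle = \frac{2}{|U|}\sum_{i \in U} d_i^\top \Sigma (X_i - \mu_i) + \frac{1}{|U|}\sum_{i \in U} d_i^\top \Sigma d_i.
\end{equation*}

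First I would bound the cross term by directly invoking Condition~\ref{epsgood:weird} (applied to $\pm\Sigma$ so as to control the absolute value demanded by the definition of $\sosnorm{\cdot}$), which contributes $O(\omega \cdot \epsilon\sqrt{\log(1/\epsilon)}/\sqrt{k})$. For the quadratic term, the $\omega$-diversity assumption yields, by the triangle inequality, $\|d_i\|_1 = \|\mu_i - \barmu\|_1 \le 2\tvd(\mu_i,\barmu) \le 4\omega$; combined with Fact~\ref{fact:naive_sos}, this gives $d_i^\top \Sigma d_i \le \|d_i\|_1^2 = O(\omega^2)$ uniformly in $\Sigma \in \calK$, so the averaged quadratic term is $O(\omega^2)$. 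Taking the supremum over $\Sigma \in \calK$ and summing the two contributions produces
\begin{equation*}
\sosnorm{A(\hat{w}(U),\barmu) - A(\hat{w}(U),\{\mu_i\})} \le O\!\left(\omega^2 + \omega \cdot \frac{\epsilon\sqrt{\log(1/\epsilon)}}{\sqrt{k}}\right),
\end{equation*}
which is dominated by $O(\omega + \epsilon\sqrt{\log(1/\epsilon)}/\sqrt{k})^2$ as desired.

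The second bound is entirely analogous: replace $U$ by $W$ throughout, use the $W$-version of Condition~\ref{epsgood:weird} (which loses the factor of $\epsilon$ coming from the smaller set), and observe that the purely quadratic term is still $O(\omega^2)$ because the $\omega$-diversity bound on $\|d_i\|_1$ holds pointwise regardless of the averaging set. I do not anticipate a real obstacle here, as the argument is a two-term decomposition followed by invocations of already-established tools; the only subtle point is handling the absolute value in the definition of $\sosnorm{\cdot}$ in Condition~\ref{epsgood:weird}, which requires applying the condition to the expression and its negation (or noting that the underlying concentration argument for Condition~\ref{epsgood:weird} is inherently two-sided).
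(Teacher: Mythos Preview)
Your proposal is correct and follows essentially the same route as the paper: expand $(X_i-\barmu)^{\otimes 2}-(X_i-\mu_i)^{\otimes 2}$ into a cross term plus a quadratic term, bound the cross term by Condition~\ref{epsgood:weird} and the quadratic term by Fact~\ref{fact:naive_sos} together with $\|\mu_i-\barmu\|_1=O(\omega)$. One small caveat: your first suggestion for handling the absolute value---applying Condition~\ref{epsgood:weird} to $\pm\Sigma$---does not work literally, since $-\Sigma\notin\calK$ due to Constraint~\ref{constraint:psd}; but your second alternative (that the concentration underlying Condition~\ref{epsgood:weird} is two-sided) is the right fix, and the paper implicitly relies on the same observation.
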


\begin{proof} For $S = U$ or $S = W$ and any $\Sigma\in\calK$,
	\begin{align}
		\MoveEqLeft \left\langle \Sigma, A(\hat{\mu}(S),\barmu) - A(\hat{\mu}(S),\{\mu_i\})\right\rangle \\
		&= \frac{1}{|S|}\sum_{i\in S}\langle (X_i - \barmu)^{\otimes 2} - (X_i - \mu_i)^{\otimes 2}, \Sigma\rangle \\
		&= \frac{1}{|S|}\sum_{i\in S} (\mu_i - \barmu)^{\top}\cdot \Sigma\cdot (2X_i - \mu_i - \barmu) \\
		&= \frac{2}{|S|}\sum_{i\in S} (\mu_i - \barmu)^{\top}\cdot \Sigma\cdot(X_i - \mu_i) + \frac{1}{|S|}\sum_{i\in S} \langle (\mu_i - \barmu)^{\otimes 2},\Sigma\rangle.\label{eq:Adiff}
	\end{align} The first (resp. second) part of the lemma follows by taking $S = U$ (resp. $S = W$) and invoking the first (resp. second) part of Condition~\ref{epsgood:weird} of $\epsilon$-goodness to upper bound the first term in \eqref{eq:Adiff}, and Fact~\ref{fact:naive_sos} and the fact that $\norm{\mu_i - \barmu}_1 \le \omega$ for all $i$ to upper bound the second term in \eqref{eq:Adiff}.
\end{proof}

\begin{corollary}
	If $U$ is $\epsilon$-good and $\barmu\triangleq \frac{1}{|U|}\sum_{i\in U}\mu_i$, then \begin{equation}\sosnorm{A(\hat{w}(U),\barmu) - B(\{\mu_i\})} \le O\left(\omega + \frac{\epsilon\sqrt{\log 1/\epsilon}}{\sqrt{k}}\right)^2.\end{equation}
	\label{cor:heterogeneity}
\end{corollary}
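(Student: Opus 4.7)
The plan is to obtain the corollary as a straightforward triangle inequality, combining Lemma~\ref{lem:heterogeneity} with Condition~\ref{epsgood:var_conc} of $\epsilon$-goodness. Specifically, I would decompose
\[
A(\hat{w}(U),\barmu) - B(\{\mu_i\}) \;=\; \bigl(A(\hat{w}(U),\barmu) - A(\hat{w}(U),\{\mu_i\})\bigr) \;+\; \bigl(A(\hat{w}(U),\{\mu_i\}) - B(\{\mu_i\})\bigr),
\]
take $\sosnorm{\cdot}$ of both sides, and apply the triangle inequality.

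Next, I would bound each of the two summands in turn. The first summand is exactly the quantity whose $\sosnorm{\cdot}$-norm is controlled by the first half of Lemma~\ref{lem:heterogeneity}, which yields an upper bound of $O(\omega + \epsilon\sqrt{\log 1/\epsilon}/\sqrt{k})^{2}$. The second summand is, by the very definition $M(w,\{\nu_i\}) = A(w,\{\nu_i\}) - B(\{\nu_i\})$ recorded in Section~\ref{sec:prelims}, equal to $M(\hat{w}(U),\{\mu_i\}_{i\in U})$, and Condition~\ref{epsgood:var_conc} of $\epsilon$-goodness directly bounds its $\sosnorm{\cdot}$-norm by $O(\epsilon\log(1/\epsilon)/k)$.

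Finally, I would combine the two estimates. Since $(\omega + \epsilon\sqrt{\log 1/\epsilon}/\sqrt{k})^{2}$ and $\epsilon\log(1/\epsilon)/k$ are both of the order claimed in the corollary (the latter is absorbed into the big-$O$ of the former in the regime of interest, where $\omega$ or $\epsilon^{2}\log(1/\epsilon)/k$ dominates), the triangle inequality delivers the stated bound. No obstacle is anticipated: this is purely bookkeeping once Lemma~\ref{lem:heterogeneity} and the $\epsilon$-goodness condition are in hand, which is presumably why the author states it as a corollary rather than a lemma.
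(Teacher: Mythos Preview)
Your approach is exactly the paper's: decompose via triangle inequality into $A(\hat{w}(U),\barmu) - A(\hat{w}(U),\{\mu_i\})$ (bounded by Lemma~\ref{lem:heterogeneity}) and $M(\hat{w}(U),\{\mu_i\})$ (bounded by the first part of Condition~\ref{epsgood:var_conc}). The paper's own proof is the one-line remark ``follows immediately from Lemma~\ref{lem:heterogeneity} and the first part of Condition~\ref{epsgood:var_conc},'' and your write-up simply spells this out.
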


\begin{proof}
	This follows immediately from Lemma~\ref{lem:heterogeneity} and the first part of Condition~\ref{epsgood:var_conc} of $\epsilon$-goodness.
\end{proof}

In Appendix~\ref{sec:conc}, we will show that for $N$ sufficiently large, the set $S_G$ of uncorrupted batches will satisfy the above deterministic condition.

\begin{restatable}[Regularity of good samples]{lemma}{regular}
	If $U$ is a set of $\widetilde{\Omega}\left(\log(1/\delta) (\ell^2/\epsilon^2) \cdot \log^3(n) \right)$ independent samples from $\Mul_k(\mu_1),...,\Mul_k(\mu_{|U|})$, then $U$ is $\epsilon$-good with probability at least $1 - \delta$.
	\label{lem:conc}
\end{restatable}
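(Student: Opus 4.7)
The plan is to prove each of the four parts of $\epsilon$-goodness via a unified strategy: reduce the $\sosnorm{\cdot}$ norm to a supremum over $\Sigma \in \calK$, discretize this supremum via a net $\mathcal{N} \subset \calK$, and invoke Bernstein-type concentration for each scalar random variable obtained by pairing a fixed $\Sigma \in \mathcal{N}$ with the data. The main external ingredient is a metric entropy bound $\log|\mathcal{N}| = \tilde{O}(\ell \log^2 n)$ (the content of Lemma~\ref{lem:net} in Appendix~\ref{app:net}), whose proof exploits all three Haar-weighted constraints (Constraints~\ref{constraint:L1},~\ref{constraint:L2},~\ref{constraint:Linf}) in Definition~\ref{defn:calK} to sparsify $H\Sigma H^{\top}$ down to $\tilde{O}(\ell\log n)$ effective entries with controlled error.

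For Condition~\ref{epsgood:var_conc}, I would fix $\Sigma \in \mathcal{N}$ and apply Bernstein to the centered scalars $\tau_i(\Sigma) \triangleq \langle \Sigma, (X_i - \mu_i)^{\otimes 2}\rangle - \langle \Sigma, B(\mu_i)\rangle$: the range is $O(1)$ by Fact~\ref{fact:naive_sos} and $\norm{X_i - \mu_i}_1 \le 2$, while the variance is $\tilde{O}(1/k^2)$ using the PSD and $\norm{\cdot}_{\max}$ constraints on $\Sigma$ (Constraints~\ref{constraint:entrywise} and~\ref{constraint:psd}) together with the multinomial covariance structure. A union bound over $\mathcal{N}$ then yields the full-sample bound $\sosnorm{M(\hat{w}(U),\{\mu_i\})} \le O(\epsilon \log(1/\epsilon)/k)$. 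For the subsample variant with $|W| = \epsilon|U|$, I would further union-bound over the $\binom{|U|}{\epsilon|U|}$ choices of $W$, contributing an extra $\epsilon|U|\log(1/\epsilon)$ to the total entropy; after rescaling by $1/|W|$ the resulting deviation is $O(\log(1/\epsilon)/k)$. Condition~\ref{epsgood:mean_conc} is handled by the same template applied to linear forms $\langle v, X_i - \mu_i\rangle$ for $v$ in an analogous net, using the identity $\sosnorm{\mu(U) - \barmu}^2 = \sup_{\Sigma \in \calK}(\mu(U) - \barmu)^{\top}\Sigma(\mu(U) - \barmu)$ and separating diagonal from off-diagonal quadratic contributions (the diagonal part reduces to Condition~\ref{epsgood:var_conc}, and the off-diagonal part is centered by independence and handled by the same Bernstein + net argument). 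Condition~\ref{epsgood:proxy_conc} follows from Corollary~\ref{cor:Bcor} together with the expansion $B(\hat\mu(U)) - B(\{\mu_i\}) = [B(\hat\mu(U)) - B(\barmu)] + [B(\barmu) - B(\{\mu_i\})]$ and the observation that the second bracket equals $-\frac{1}{k|U|}\sum_i (\mu_i - \barmu)(\mu_i - \barmu)^{\top}$, whose $\sosnorm{\cdot}$-norm is $O(\omega^2/k)$ by Fact~\ref{fact:naive_sos}. Condition~\ref{epsgood:weird} follows by applying the same net + Bernstein approach to the bilinear form $(\mu_i - \barmu)^{\top}\Sigma(X_i - \mu_i)$, whose per-batch variance inherits a factor $\omega^2$ from $\norm{\mu_i - \barmu}_1 \le 2\omega$ and whose mean is zero by independence of $X_i$ and $\mu_i - \barmu$.

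The main obstacle is the tight metric entropy bound on $\calK$ itself, which is what produces the $\log^3 n$ factor in $N$ rather than $\log^4 n$. A naive covering based only on Constraint~\ref{constraint:L1} would carry an extra $\log n$ penalty; the refined bound, highlighted in Remark~\ref{remark:tight}, requires a scale-by-scale decomposition of $H\Sigma H^{\top}$ across Haar levels, combined with the $\ell_2$ and $\ell_\infty$ truncations from Constraints~\ref{constraint:L2} and~\ref{constraint:Linf}, to achieve $\log|\mathcal{N}| = \tilde{O}(\ell \log^2 n)$. A secondary technical hurdle is ensuring the $\sqrt{\log(1/\epsilon)}$ factor (rather than $\log(1/\epsilon)$) appears in Condition~\ref{epsgood:mean_conc}: the Bernstein range term $R\log|\mathcal{N}|/|U|$ must not dominate the variance term, which requires either the sub-exponential tail analysis deferred to Appendix~\ref{app:subexp} or a truncation argument discarding batches with atypically large $\norm{X_i - \mu_i}_1$. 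Once these two technical inputs are in place, a final union bound with failure probability $\delta$ gives the four conditions simultaneously at the stated sample size $|U| = \tilde{\Omega}(\log(1/\delta)\,\ell^2 \log^3(n)/\epsilon^2)$.
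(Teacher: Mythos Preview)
Your high-level template (net over $\calK$, then Bernstein per net element, then union bound over the net and over subsets $W$) is the same as the paper's, and your treatment of Conditions~\ref{epsgood:proxy_conc} and~\ref{epsgood:weird} is essentially right. But two related points in your proposal do not survive contact with Lemma~\ref{lem:net} as actually stated, and they matter.

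First, the net is \emph{not} a subset of $\calK$. Lemma~\ref{lem:net} does not produce an $\eta$-cover of $\calK$ by elements of $\calK$; it produces a finite family $\calN$ of matrices with $\norm{\Sigma^*_\nu}_{\max}\le O(1)$ \emph{only}, together with a decomposition $\Sigma\approx\sum_\nu \alpha_\nu\Sigma^*_\nu$ with $\sum_\nu\alpha_\nu\le 1$. The shelling step destroys positive semidefiniteness. Consequently your plan to get the per-direction variance $\tilde O(1/k^2)$ ``using the PSD and $\norm{\cdot}_{\max}$ constraints'' does not apply to the net elements. The paper confronts exactly this issue (see the remark preceding Lemma~\ref{lem:subexp}): the moment bounds in Lemmas~\ref{lem:apply_bernstein1}--\ref{lem:apply_bernstein3} are proved from scratch assuming only $\norm{\Sigma}_{\max}\le O(1)$, via the combinatorial moment computation in Lemma~\ref{lem:subexp}. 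You do gesture at Appendix~\ref{app:subexp}, but you frame it as controlling the Bernstein range term; its real role is to replace the PSD-based variance argument that is no longer available.

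Second, your entropy count is off by a factor of $\ell\log n$. The Haar transform $H\Sigma H^\top$ of a matrix $\Sigma\in\calK$ is shelled into pieces of sparsity $s^2=(\ell\log n+1)^2$, not $\tilde O(\ell\log n)$; this is already forced by the rank-one case $\Sigma=vv^\top$ with $v\in\calV^n_\ell$, where $Hv$ has $\ell\log n+1$ nonzeros and hence $(Hv)(Hv)^\top$ has $(\ell\log n+1)^2$. Accordingly $\log|\calN|=\tilde O\bigl((\ell\log n)^2\log n\bigr)=\tilde O(\ell^2\log^3 n)$, not $\tilde O(\ell\log^2 n)$. Your claimed entropy would yield $|U|=\tilde O(\ell\log^2 n/\epsilon^2)$, strictly better than the lemma asserts---a sign that the count is too optimistic. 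The $\ell^2\log^3 n$ in the sample size is driven precisely by this quadratic sparsity.
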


\subsection{Key Geometric Lemma}
\label{sec:sig}


The key property of $\epsilon$-good sets is the following geometric lemma bounding the accuracy of an estimate $\mu(w)$ given by weights $w$ in terms of $\sosnorm{M(w)}$.

\begin{restatable}[Spectral signatures]{lemma}{sig}
	If $S_G$ is $\epsilon$-good and $|S_G| \ge (1 - \epsilon)N$, then for any $w\in\calW_{\epsilon}$,
	\begin{equation}
		\sosnorm{\mu(w) - \barmu} \le O\left(\frac{\epsilon}{\sqrt{k}}\sqrt{\log 1/\epsilon} + \epsilon\cdot \omega + \sqrt{\epsilon\left(\sosnorm{M(w)} + \omega^2 + \frac{\epsilon}{k}\log 1/\epsilon\right)}\right).
	\end{equation}
	\label{lem:spectral_sig}
\end{restatable}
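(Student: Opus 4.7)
} The plan is to split the quantity into a ``mean of the good points'' piece and a ``deviation due to bad points'' piece, and control them using, respectively, the concentration estimates in Definition~\ref{defn:eps_good} and a key matrix identity for $M(w)$. Concretely, let $w^G,w^B$ be the restrictions of $w$ to $S_G$ and $S_B$, let $\alpha=\|w^G\|_1$, $\beta=\|w^B\|_1$, and let $\nu_G,\nu_B$ be the corresponding weighted means, so $\mu(w)=p\nu_G+(1-p)\nu_B$ with $p=\alpha/(\alpha+\beta)$. Since $w\in\calW_\epsilon$ and $|S_B|\le\epsilon N$, we have $\beta\le\epsilon$, $\alpha\ge 1-2\epsilon$, and $1-p\le 2\epsilon$. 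By the triangle inequality for the semi-norm $\sosnorm{\cdot}$,
\begin{equation*}
	\sosnorm{\mu(w)-\barmu}\le \sosnorm{\mu(w)-\nu_G}+\sosnorm{\nu_G-\mu(S_G)}+\sosnorm{\mu(S_G)-\barmu},
\end{equation*}
where $\sosnorm{\mu(S_G)-\barmu}$ is handled immediately by Condition~\ref{epsgood:mean_conc}.

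For the first summand, two applications of Fact~\ref{fact:minimize} (to $w^G$ around $\nu_G$ and to $w^B$ around $\nu_B$, using $\nu_G-\mu(w)=(1-p)(\nu_G-\nu_B)$) give $A(w,\mu(w))=A(w^G,\nu_G)+A(w^B,\nu_B)+\frac{\alpha\beta}{\alpha+\beta}(\nu_G-\nu_B)^{\otimes 2}$. Combined with the direct identity $B(\mu(w))=pB(\nu_G)+(1-p)B(\nu_B)+\frac{p(1-p)}{k}(\nu_G-\nu_B)^{\otimes 2}$, and the definition $M(w)=A(w,\mu(w))-B(\mu(w))$, this yields
\begin{equation*}
	\Bigl(\tfrac{\alpha\beta}{\alpha+\beta}-\tfrac{p(1-p)}{k}\Bigr)(\nu_G-\nu_B)^{\otimes 2}=M(w)+pB(\nu_G)+(1-p)B(\nu_B)-A(w^G,\nu_G)-A(w^B,\nu_B).
\end{equation*}
Pairing with any $\Sigma\in\calK$, using $\langle A(w^B,\nu_B),\Sigma\rangle\ge 0$ and $\langle (1-p)B(\nu_B),\Sigma\rangle\le O(\epsilon/k)$, and noting that the ratio of the $\frac{p(1-p)}{k}$ coefficient to $\frac{\alpha\beta}{\alpha+\beta}$ is $\frac{1}{k(\alpha+\beta)}\le 1/2$ for $k$ above a small constant (so the $p(1-p)/k$ term may be absorbed into the LHS), one reduces to bounding $\langle pB(\nu_G)-A(w^G,\nu_G),\Sigma\rangle$. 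The key estimate, which I expect to be the main obstacle, is
\begin{equation*}
	\langle pB(\nu_G)-A(w^G,\nu_G),\Sigma\rangle\le O\!\left(\omega^{2}+\tfrac{\epsilon\log(1/\epsilon)}{k}\right).
\end{equation*}
To establish this one cannot use the naive bound $\sosnorm{B(\nu_G)}\le O(1/k)$, which would yield a looser $\sqrt{\epsilon/k}$ contribution; instead one decomposes $w^G=w(S_G)-\tilde{w}$ with $\tilde{w}\in[0,1/N]^{S_G}$ and $\|\tilde w\|_1\le O(\epsilon)$, writes $\tilde w$ as a convex combination of $(1/N)\mathbf{1}_W$ for subsets $W\subseteq S_G$ with $|W|=O(\epsilon|S_G|)$, and then invokes both parts of Condition~\ref{epsgood:var_conc}, Corollaries~\ref{cor:Bcor}--\ref{cor:heterogeneity}, and Condition~\ref{epsgood:proxy_conc} to show $\langle A(w^G,\nu_G)-\alpha B(\mu(w)),\Sigma\rangle\ge -O(\omega^2+\epsilon\log(1/\epsilon)/k)$. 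Rearranging then gives $\tfrac{\alpha\beta}{\alpha+\beta}\langle(\nu_G-\nu_B)^{\otimes 2},\Sigma\rangle\lesssim \sosnorm{M(w)}+\omega^2+\epsilon\log(1/\epsilon)/k$, and multiplying by $(1-p)^{2}(\alpha+\beta)/(\alpha\beta)\le O(\epsilon)$ yields $\sosnorm{\mu(w)-\nu_G}\le O\bigl(\sqrt{\epsilon(\sosnorm{M(w)}+\omega^2+\epsilon\log(1/\epsilon)/k)}\bigr)$.

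For the remaining piece $\sosnorm{\nu_G-\mu(S_G)}$, I would write $\nu_G-\mu(S_G)=\sum_{i\in S_G}c_i X_i=\sum_i c_i(X_i-\barmu)$ where $c_i=w_i/\alpha-1/|S_G|$ satisfies $\sum c_i=0$, $\|c\|_\infty\le O(1/|S_G|)$, and $\|c\|_1\le O(\epsilon)$, as can be checked by a direct computation using $\alpha=|S_G|/N-\epsilon_G$ with $\epsilon_G\le \epsilon$. Writing $\Sigma=UU^\top$ and applying Cauchy--Schwarz, $(\nu_G-\mu(S_G))^\top\Sigma(\nu_G-\mu(S_G))=\|\sum c_iU^\top(X_i-\barmu)\|^2\le \|c\|_1\cdot\sum|c_i|(X_i-\barmu)^\top\Sigma(X_i-\barmu)$. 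Now express $|c|$ as a convex combination of uniform weights $(1/|S_G|)\mathbf{1}_W$ over subsets $W\subset S_G$ of size $O(\epsilon|S_G|)$; the inner sum becomes a mixture of $\langle A(\hat w(W),\barmu),\Sigma\rangle$ terms, each of which is at most $O(\omega^2+\log(1/\epsilon)/k)$ by the second half of Condition~\ref{epsgood:var_conc} together with Lemma~\ref{lem:heterogeneity}. This gives $\sosnorm{\nu_G-\mu(S_G)}\le O(\epsilon\omega+\epsilon\sqrt{\log(1/\epsilon)/k})$, and combining with the contribution of $\sosnorm{\mu(S_G)-\barmu}$ from Condition~\ref{epsgood:mean_conc} and the bound on $\sosnorm{\mu(w)-\nu_G}$ above completes the proof.
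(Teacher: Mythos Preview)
Your argument is correct and relies on the same two workhorses as the paper---what the paper packages as Lemma~\ref{lem:small_quad} (controlling $A(\delta,\barmu)$ for small weights $\delta$) and Lemma~\ref{lem:discrep} (showing $\sosnorm{M(\cdot)}$ is small for weights close to uniform on $S_G$)---but the high-level decomposition is genuinely different. The paper expands the \emph{linear} form $\langle \mu(w)-\barmu,v\rangle$, moves the bad contribution to the other side to get a factor $(1-\sum_{S_B}w_i)$, squares, and bounds four cross-free terms; the crucial ``bad'' term $\circled{4}$ is then handled by Cauchy--Schwarz followed by Lemma~\ref{lem:discrep}. You instead use the triangle inequality on $\sosnorm{\cdot}$ and the exact quadratic identity
\[
A(w,\mu(w))=A(w^G,\nu_G)+A(w^B,\nu_B)+\tfrac{\alpha\beta}{\alpha+\beta}(\nu_G-\nu_B)^{\otimes 2},
\]
which lets you read off the $(\nu_G-\nu_B)^{\otimes 2}$ coefficient directly from $M(w)$. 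Your route is arguably cleaner (no cross terms to chase) and makes the dependence on $\sosnorm{M(w)}$ more transparent; the paper's route avoids having to verify the normalization $\|w^G/\alpha-\hat w(S_G)\|_1\le O(\epsilon)$ needed to invoke Lemma~\ref{lem:discrep}, and never needs to reason about the $p(1-p)/k$ correction. One small simplification: rather than absorbing the $\tfrac{p(1-p)}{k}(\nu_G-\nu_B)^{\otimes 2}$ term into the left-hand side (which forces the mild assumption $k(\alpha+\beta)\ge 2$), you can simply bound it on the right by Fact~\ref{fact:naive_sos} as $\tfrac{p(1-p)}{k}\|\nu_G-\nu_B\|_1^2\le O(\epsilon/k)$, which is already dominated by the $O(\epsilon\log(1/\epsilon)/k)$ slack and removes the restriction on $k$.
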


It turns out the proof ingredients for Lemma~\ref{lem:spectral_sig} will also be useful in our analysis of {\sc LearnWithFilter} later, so we will now prove this lemma in full.

\begin{proof}
	Take any $\Sigma\in\calK$. Recalling that $\Sigma$ is psd by Constraint~\ref{constraint:psd} in Definition~\ref{defn:calK}, we will sometimes write it as $\Sigma = \E_v[vv^{\top}]$, where the distribution over $v$ is defined according to the eigendecomposition of $\Sigma$.
	We wish to bound $\E_v\left[\langle \mu(w) - \mu,v\rangle^2\right]$. By splitting $w_i\triangleq 1/N - \delta_i$ for $i\in S_G$, we have that \begin{align}
		\MoveEqLeft\langle \mu(w) - \barmu, v\rangle = \sum^N_{i=1}w_i\langle X_i - \barmu, v\rangle \\
		&= \left\langle \frac{|S_G|}{N} (\mu(S_G) - \barmu), v\right\rangle - \sum_{i\in S_G}\delta_i \langle X_i - \barmu, v\rangle + \sum_{i\in S_B}w_i\langle X_i - \barmu, v\rangle, \\
		&= \left\langle \frac{|S_G|}{N} (\mu(S_G) - \barmu), v\right\rangle - \sum_{i\in S_G}\delta_i \langle X_i - \barmu, v\rangle +  \sum_{i\in S_B}w_i\langle X_i - \mu(w), v\rangle + \langle \mu(w) - \barmu,v\rangle \sum_{i\in S_B}w_i.
	\end{align} We may rewrite this as \begin{equation}
		\left(1 - \sum_{i\in S_B}w_i\right)\langle \barmu(w) - \mu,v\rangle = \left\langle \frac{|S_G|}{N} (\mu(S_G) - \barmu), v\right\rangle - \sum_{i\in S_G}\delta_i \langle X_i - \barmu, v\rangle +  \sum_{i\in S_B}w_i\langle X_i - \barmu(w), v\rangle.
	\end{equation} Note further that \begin{equation}
		\sum_{i\in S_G}\delta_i \langle X_i - \barmu,v\rangle = \sum_{i\in S_G}\delta_i \langle X_i - \mu_i,v\rangle + \sum_{i\in S_G}\delta_i \langle \mu_i - \barmu,v\rangle,
	\end{equation} so in particular,
	\begin{equation}
		\frac{1}{4}\left(1 - \sum_{i\in S_B}w_i\right)^2\cdot \E_v\left[\langle \mu(w) - \barmu,v\rangle^2\right] \le \circled{1} + \circled{2} + \circled{3} + \circled{4} \label{eq:main_spectral_signature}
	\end{equation} where \begin{equation}
		\circled{1}\triangleq \frac{|S_G|^2}{N^2}\E_v\left[\langle \mu(S_G) - \barmu,v\rangle^2\right] \qquad
		\circled{2}\triangleq \E_v\left[\left(\sum_{i\in S_G}\delta_i\langle X_i - \mu_i,v\rangle\right)^2\right]
	\end{equation}
	\begin{equation}
		\circled{3}\triangleq \E_v\left[\left(\sum_{i\in S_G}\delta_i\langle\mu_i - \mu,v\rangle\right)^2\right] \qquad
		\circled{4}\triangleq \E_v\left[\left(\sum_{i\in S_B}w_i\langle X_i - \mu(w),v\rangle\right)^2\right]
	\end{equation}
	For $\circled{1}$, note that \begin{equation}
		\circled{1} \le \frac{|S_G|^2}{N^2}\sosnorm{\mu(S_G) - \barmu}^2 \le O\left(\frac{\epsilon^2\log 1/\epsilon}{k}\right)
	\end{equation} by the first part of Condition~\ref{epsgood:mean_conc} of $\epsilon$-goodness of $S_G$ and the fact that $|S_G|/N\ge 1 - \epsilon$.

	For $\circled{2}$, by Cauchy-Schwarz we have that \begin{align}
		\circled{2} &\le \left(\sum_{i\in S_G}\delta_i\right)\cdot \E_v\left[\sum_{i\in S_G}\delta_i\langle X_i - \mu_i, v\rangle^2\right] \\
		&\le \epsilon\cdot \left\langle \sum_{i\in S_G}\delta_i (X_i - \mu_i)^{\otimes 2}, \E_v[vv^{\top}]\right\rangle \\
		&= \epsilon\left\langle A(\delta,\{\mu_i\}), \Sigma\right\rangle \\
		&\le O\left(\frac{\epsilon^2}{k}\log 1/\epsilon\right),\label{eq:apply_small_quad}
	\end{align} where the last step follows by Lemma~\ref{lem:small_quad} below.

	For $\circled{3}$, again by Cauchy-Schwarz, \begin{align}
		\circled{3} &\le \left(\sum_{i\in S_G}\delta_i\right)\cdot \E_v \left[\sum_{i\in S_G}\delta_i \langle \mu_i - \mu,v\rangle^2\right] \\
		&\le \epsilon\cdot \sum_{i\in S_G}\delta_i \sosnorm{\mu_i - \mu}^2 \\
		&\le \epsilon^2\cdot \max_{i\in S_G}\norm{\mu_i - \mu}^2_1 \\
		&\le \epsilon^2 \cdot \omega^2,
	\end{align} where the penultimate step follows by Fact~\ref{fact:naive_sos}.

	Finally, we will relate $\circled{4}$ to $\sosnorm{M(w)}$. Let $w'$ be the set of weights given by $w'_i = w_i$ for $i\in S_G$ and $w'_i = 0$ for $i\not\in S_G$. By another application of Cauchy-Schwarz, \begin{align}
		\circled{4} &\le \left(\sum_{i\in S_B}w_i\right)\cdot \E_v\left[\sum_{i\in S_B}w_i\langle X_i - \mu(w),v\rangle^2\right] \\
		&\le \epsilon \left(\E_v\left[\sum^N_{i=1}w_i\langle X_i - \mu(w),v\rangle^2\right] - \E_v\left[\sum_{i\in S_G}w_i\langle X_i - \mu(w),v\rangle^2\right]\right) \\ 
		&= \epsilon\left\langle A(w,\mu(w)) - A(w',\mu(w)),\Sigma\right\rangle  \label{eq:subA} \\
		&\le \epsilon\left\langle A(w,\mu(w)) - A(w',\mu(w')),\Sigma\right\rangle  \label{eq:Amin} \\
		&\le \epsilon\left\langle A(w,\mu(w)) - \frac{1}{\sum w'_i}B(\mu(w')),\Sigma\right\rangle + O\left(\epsilon\cdot \omega^2 + \frac{\epsilon^2}{k}\log 1/\epsilon\right) \label{eq:apply_discrep} \\
		&= \epsilon\langle M(w),\Sigma\rangle + \epsilon \left\langle B(\mu(w))-\frac{1}{\sum w'_i}B(\mu(w')),\Sigma\right\rangle + O\left(\epsilon\cdot\omega^2 + \frac{\epsilon^2}{k}\log 1/\epsilon\right) \\
		&\le \epsilon\sosnorm{M(w)} + \epsilon \sosnorm{B(\mu(w))-\frac{1}{\sum w'_i}B(\mu(w'))} + O\left(\epsilon\cdot\omega^2 + \frac{\epsilon^2}{k}\log 1/\epsilon\right) \label{eq:final}
	\end{align} where \eqref{eq:subA} follows by the definition of $A(w,\nu)$, \eqref{eq:Amin} follows by Fact~\ref{fact:minimize}, \eqref{eq:apply_discrep} follows by Lemma~\ref{lem:discrep} below. Lastly, by triangle inequality, we may upper bound $\sosnorm{B(\mu(w))-\frac{1}{\sum w'_i}B(\mu(w'))}$ by \begin{equation}
		\sosnorm{B(\mu(w)) - B(\mu(w'))} + O(\epsilon)\cdot \sosnorm{B(\mu(w'))} \le \frac{3}{k}\norm{\mu(w) - \mu(w')}_1 + O(\epsilon/k)\le O(\epsilon/k) \label{eq:B_diff},
	\end{equation} where the first inequality follows by Corollary~\ref{cor:Bcor}, and the bound on $\norm{\mu(w) - \mu(w')}_1$ in the last step follows from Fact~\ref{fact:trivial_shift}. The lemma then follows from \eqref{eq:main_spectral_signature}, \eqref{eq:apply_small_quad}, \eqref{eq:final}, and \eqref{eq:B_diff}.
\end{proof}

Next, we show in Lemma~\ref{lem:small_quad} that small subsets of the good samples cannot contribute too much to the total energy. Lemma~\ref{lem:discrep}, which bounds the norm of $M(w)$ for any set of weights $w$ which is close to the uniform set of weights over $S_G$, will follow as a consequence.

\begin{lemma}
	For any $0<\epsilon < 1/2$, if $U$ is $\epsilon$-good, and $\delta:U\to[0,1/|U|]$ is a set of weights satisfying $\sum_{i\in U}\delta_i \le \epsilon$, then we have the following bounds: \begin{enumerate}
		\item $\sosnorm{A(\delta,\{\mu_i\})}\le O(\frac{\epsilon}{k}\log 1/\epsilon)$
		\item $\sosnorm{\sum_{i\in U}\delta_i (X_i - \mu_i)} \le O(\frac{\epsilon}{\sqrt{k}}\sqrt{\log 1/\epsilon})$
		\item $\sosnorm{A(\delta,\barmu)}\le O\left(\epsilon\cdot \omega^2 + \frac{\epsilon\log 1/\epsilon}{k}\right)$
		\item $\sosnorm{\sum_{i\in U}\delta_i (X_i - \barmu)} \le O(\frac{\epsilon}{\sqrt{k}}\sqrt{\log 1/\epsilon} + \epsilon\cdot\omega)$.
	\end{enumerate}
	\label{lem:small_quad}
\end{lemma}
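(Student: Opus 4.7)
\medskip

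\noindent\textbf{Proof plan for Lemma~\ref{lem:small_quad}.} The core idea is to reduce bounds on the ``fractional subsample'' encoded by $\delta$ to the $\epsilon$-goodness guarantees for \emph{integer} subsamples $W\subset U$ of size $\epsilon|U|$, and then derive the vector bounds from the matrix bounds via Cauchy--Schwarz.

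First I would reduce to the case $\sum_{i\in U}\delta_i=\epsilon$ exactly. If $\sum_i\delta_i<\epsilon$, replace $\delta$ by a pointwise-larger $\delta'$ with $\delta'_i\in[0,1/|U|]$ and $\sum\delta'_i=\epsilon$; for each of the four quantities, enlarging $\delta$ only makes things worse, since $A(\delta,\cdot)$ is a nonnegative (entrywise and PSD) combination of the terms $(X_i-\cdot)^{\otimes 2}$ and $\Sigma\in\calK$ is psd, while for the vector quantities the Cauchy--Schwarz step below reduces them to the matrix quantities. Then I would use the standard fact that the polytope $\{\delta:0\le\delta_i\le 1/|U|,\ \sum\delta_i=\epsilon\}$ has extreme points of the form $\frac{1}{|U|}\mathbf{1}_W$ with $|W|=\epsilon|U|$, so
\begin{equation}
\delta=\sum_W \lambda_W\cdot \frac{1}{|U|}\mathbf{1}_W=\epsilon\sum_W\lambda_W\,\hat{w}(W),\qquad \lambda_W\ge 0,\ \sum_W\lambda_W=1.
\end{equation}
This is the bridge that lets the $\epsilon$-goodness hypotheses (which are stated for size-$\epsilon|U|$ subsets) control $\delta$.

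For part (1), this decomposition linearly gives $A(\delta,\{\mu_i\})=\epsilon\sum_W\lambda_W\,A(\hat w(W),\{\mu_i\}_{i\in W})$, so by the triangle inequality for $\sosnorm{\cdot}$ and the second bound in Condition~\ref{epsgood:var_conc},
\begin{equation}
\sosnorm{A(\delta,\{\mu_i\})}\le \epsilon\sum_W\lambda_W\cdot O(\log(1/\epsilon)/k)=O(\epsilon\log(1/\epsilon)/k).
\end{equation}
For part (2), fix $\Sigma\in\calK$ and use that $\Sigma\succeq 0$ by Constraint~\ref{constraint:psd} to write $\Sigma=\E_v[vv^\top]$. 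Then
\begin{equation}
\Bigl\langle\Bigl(\sum_i\delta_i(X_i-\mu_i)\Bigr)^{\otimes 2},\Sigma\Bigr\rangle=\E_v\Bigl(\sum_i\delta_i\langle X_i-\mu_i,v\rangle\Bigr)^{\!2}\le\Bigl(\sum_i\delta_i\Bigr)\,\langle A(\delta,\{\mu_i\}),\Sigma\rangle,
\end{equation}
by Cauchy--Schwarz; together with $\sum_i\delta_i\le\epsilon$ and part (1), taking square roots yields the claimed $O(\epsilon\sqrt{\log(1/\epsilon)/k})$.

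For parts (3) and (4), split $X_i-\barmu=(X_i-\mu_i)+(\mu_i-\barmu)$ and use the psd inequality $(a+b)^{\otimes 2}\preceq 2a^{\otimes 2}+2b^{\otimes 2}$, which is legal to pair with $\Sigma\succeq 0$. The first piece is bounded by parts (1)/(2); the second piece contributes $\sum_i\delta_i\,\langle(\mu_i-\barmu)^{\otimes 2},\Sigma\rangle$, which by Fact~\ref{fact:naive_sos} is at most $\sum_i\delta_i\,\|\mu_i-\barmu\|_1^2\le O(\epsilon\omega^2)$ using $\|\mu_i-\mu\|_1\le 2\omega$ and the triangle inequality to get $\|\mu_i-\barmu\|_1\le O(\omega)$. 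The vector bound in (4) then follows from (3) by exactly the same Cauchy--Schwarz step as (2). The only real subtlety is the fractional-to-integer decomposition in the reduction step; the rest is manipulation of Cauchy--Schwarz and the PSD inequality.
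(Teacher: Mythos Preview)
Your proof is correct and, for parts (1) and (2), essentially identical to the paper's: both normalize to $\sum_i\delta_i=\epsilon$, write $\delta$ as a convex combination $\epsilon\,\E_W[\hat w(W)]$ over size-$\epsilon|U|$ subsets, invoke the second half of Condition~\ref{epsgood:var_conc} termwise, and then get (2) from (1) by the same Cauchy--Schwarz step using $\Sigma\succeq 0$.

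Where you genuinely diverge is in parts (3) and (4). The paper handles (3) by bounding $\sosnorm{A(\delta,\barmu)-A(\delta,\{\mu_i\})}$ via Lemma~\ref{lem:heterogeneity}, which expands the difference into a cross term $\frac{2}{|W|}\sum_i(\mu_i-\barmu)^\top\Sigma(X_i-\mu_i)$ and a pure $(\mu_i-\barmu)^{\otimes 2}$ term, and controls the cross term using Condition~\ref{epsgood:weird} of $\epsilon$-goodness; then (4) is obtained from (2) by the vector triangle inequality plus Fact~\ref{fact:naive_sos}. Your route instead applies the PSD bound $(a+b)^{\otimes 2}\preceq 2a^{\otimes 2}+2b^{\otimes 2}$ pointwise, which reduces (3) directly to part (1) and the trivial estimate $\sum_i\delta_i\langle(\mu_i-\barmu)^{\otimes 2},\Sigma\rangle\le O(\epsilon\omega^2)$, and then gets (4) from (3) by the same Cauchy--Schwarz as in (2). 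This is more elementary: it never touches Condition~\ref{epsgood:weird} or Lemma~\ref{lem:heterogeneity}, at the cost only of a harmless constant factor from the $2a^{\otimes 2}+2b^{\otimes 2}$ bound. The paper's route, on the other hand, keeps the cross term explicit, which is why Condition~\ref{epsgood:weird} is present in Definition~\ref{defn:eps_good} and is used elsewhere (Lemma~\ref{lem:heterogeneity}, Corollary~\ref{cor:heterogeneity}); your argument shows that for this particular lemma it is not needed.
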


\begin{proof}
	For the first part, we may assume without loss of generality that $\sum_{i\in U}\delta_i = \epsilon$. But then we may write $\delta$ as $\epsilon \E_W[\hat{w}(W)]$ for some distribution over subsets $W\subset U$ of size $\epsilon|U|$. By Jensen's inequality and the second part of Condition~\ref{epsgood:var_conc} of $\epsilon$-goodness of $U$, we conclude that \begin{equation}
		A(\delta,\{\mu_i\}) \le \epsilon\cdot \E_W\left[\sosnorm{A(\hat{w}(W),\{\mu_i\})}\right] \le O\left(\frac{\epsilon}{k}\log 1/\epsilon\right),
	\end{equation} giving the first part of the lemma.

	For the second part, for any $\Sigma\in\calK$ of the form $\Sigma = \E[vv^{\top}]$, \begin{align}
		\left\langle\Sigma, \left(\sum_{i\in U}\delta_i (X_i - \mu_i)\right)^{\otimes 2}\right\rangle &= \E\left[\left(\sum_{i\in U}\delta_i \langle X_i - \mu_i,v\rangle\right)^2\right] \\
		&\le \E\left[\left(\sum_{i\in U} \delta_i\right) \cdot \left(\sum_{i\in U}\delta_i \langle X_i - \mu_i,v\rangle^2\right)\right] \\
		&\le \epsilon \norm{A(\delta,\{\mu_i\})} \le O\left(\frac{\epsilon^2}{k}\log 1/\epsilon\right),
	\end{align} where the second step follows by Cauchy-Schwarz, the fourth step follows by the first part of the lemma. As this holds for all $\Sigma\in\calK$, we get the second part of the lemma.

	This also implies the fourth part of the lemma because \begin{align}\sosnorm{\sum_{i\in U}\delta_i(X_i - \barmu)} &\le \sosnorm{\sum_{i\in U}\delta_i(X_i - \mu_i)} + \sosnorm{\sum_{i\in U}\delta_i(\mu_i - \barmu)} \\
	&\le O\left(\frac{\epsilon}{\sqrt{k}}\sqrt{\log 1/\epsilon}\right) + \sum_{i\in U}\delta_i\norm{\mu_i - \barmu}_1 \\
	&\le O\left(\frac{\epsilon}{\sqrt{k}}\sqrt{\log 1/\epsilon} + \epsilon\cdot\omega\right),\end{align} where the second step follows by the above together with Fact~\ref{fact:naive_sos} and triangle inequality.

	Finally, for the third part of the lemma, upon regarding the weights $\delta$ as $\epsilon \E_W[\hat{w}(W)]$ as before and applying Jensen's to the second part of Lemma~\ref{lem:heterogeneity}, we get that \begin{equation}\sosnorm{A(\delta,\barmu) - A(\delta,\{\mu_i\})} \le  \epsilon\cdot O\left(\omega + \frac{\sqrt{\log 1/\epsilon}}{\sqrt{k}}\right)^2 \le O\left(\epsilon\cdot \omega^2 + \frac{\epsilon\log 1/\epsilon}{k}\right).\end{equation} The third part of the lemma then follows by the first part, together with triangle inequality.
\end{proof}



\begin{lemma}
	If $S_G$ is $\epsilon$-good, and $w: S_G\to [0,1]$ satisfies $\norm{w - \hat{w}(S_G)}_1 \le \epsilon$ and $\sum_{i\in S_G} w_i = 1$, then $\sosnorm{M(w)} \le O(\omega^2 + \frac{\epsilon}{k}\log 1/\epsilon)$.
	\label{lem:discrep}
\end{lemma}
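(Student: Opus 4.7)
The plan is to decompose $M(w)$ around the reference matrices $A(\hat{w}(S_G),\barmu)$ and $B(\{\mu_i\}_{i\in S_G})$, which are already controlled by the $\epsilon$-goodness conditions, and then show that the corrections incurred by replacing $\hat{w}(S_G)\to w$ and $\barmu\to \mu(w)$ are each $O(\omega^2 + \epsilon\log(1/\epsilon)/k)$. Concretely, for any $\Sigma \in \calK$ I write
\begin{equation*}
\langle \Sigma, M(w)\rangle \;=\; \langle \Sigma, A(\hat{w}(S_G),\barmu) - B(\{\mu_i\})\rangle \;+\; \langle \Sigma, A(w,\mu(w)) - A(\hat{w}(S_G),\barmu)\rangle \;+\; \langle \Sigma, B(\{\mu_i\}) - B(\mu(w))\rangle.
\end{equation*}
The first piece is at most $\sosnorm{A(\hat{w}(S_G),\barmu)-B(\{\mu_i\})} \le O(\omega^2 + \epsilon^2\log(1/\epsilon)/k)$ directly from Corollary~\ref{cor:heterogeneity}. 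The third piece I split through $B(\hat{\mu}(S_G))$: Condition~\ref{epsgood:proxy_conc} handles $\sosnorm{B(\{\mu_i\}) - B(\hat{\mu}(S_G))}$, and Corollary~\ref{cor:Bcor} handles $\sosnorm{B(\hat{\mu}(S_G)) - B(\mu(w))}$ using the crude bound $\|\hat{\mu}(S_G) - \mu(w)\|_1 \le \|\hat{w}(S_G) - w\|_1 \le \epsilon$.

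The interesting piece is the middle one. Setting $\delta \triangleq w - \hat{w}(S_G)$ (a signed measure with $\|\delta\|_1 \le \epsilon$ and $\sum_i \delta_i = 0$) and applying Fact~\ref{fact:minimize} together with $\sum_i w_i = 1$, I rewrite
\begin{equation*}
A(w,\mu(w)) - A(\hat{w}(S_G),\barmu) \;=\; A(\delta,\barmu) - (\mu(w) - \barmu)^{\otimes 2}.
\end{equation*}
Splitting $\delta = \delta_+ - \delta_-$ into its positive and negative parts (each with $L^1$ mass at most $\epsilon$) and using that every $\Sigma\in\calK$ is psd, the $A(\delta,\barmu)$ contribution is sandwiched between $-\langle \Sigma, A(\delta_-, \barmu)\rangle$ and $\langle \Sigma, A(\delta_+, \barmu)\rangle$, both of magnitude $O(\epsilon\omega^2 + \epsilon\log(1/\epsilon)/k)$ via Lemma~\ref{lem:small_quad}(3). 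The $-(\mu(w) - \barmu)^{\otimes 2}$ term is nonpositive (harmless on the upper-bound side), and on the lower-bound side contributes at most $\sosnorm{\mu(w) - \barmu}^2$ in magnitude. For that I decompose $\mu(w) - \barmu = (\mu(S_G) - \barmu) + \sum_i \delta_i(X_i - \barmu)$ (well-defined because $\sum_i \delta_i = 0$), use Condition~\ref{epsgood:mean_conc} on the first summand, and apply Lemma~\ref{lem:small_quad}(4) to the positive and negative parts of the second; this gives $\sosnorm{\mu(w) - \barmu}^2 \le O(\epsilon^2\log(1/\epsilon)/k + \epsilon^2\omega^2)$, safely absorbed into the target.

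The main subtlety is applying Lemma~\ref{lem:small_quad} to $\delta_\pm$, since that lemma requires a per-coordinate bound $(\delta_\pm)_i \le 1/|S_G|$ not stated in the hypothesis here. In the way Lemma~\ref{lem:discrep} is invoked inside the proof of Lemma~\ref{lem:spectral_sig}, however, $w$ is obtained by renormalizing a weight in $\calW_\epsilon$ restricted to $S_G$, which forces $w_i \le (1+O(\epsilon))/N \le (1+O(\epsilon))/|S_G|$; after rescaling $\delta_\pm$ by the constant $1+O(\epsilon)$, Lemma~\ref{lem:small_quad} applies with only a constant-factor loss in the bound. Combining the three estimates gives $|\langle \Sigma, M(w)\rangle| \le O(\omega^2 + \epsilon\log(1/\epsilon)/k)$ uniformly over $\Sigma\in\calK$, and taking the supremum completes the proof.
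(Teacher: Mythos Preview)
Your proof is correct and follows essentially the same route as the paper: both arguments expand $M(w)$ around $A(\hat{w}(S_G),\barmu)$ and $B(\{\mu_i\})$ via Fact~\ref{fact:minimize}, then invoke Corollary~\ref{cor:heterogeneity}, Condition~\ref{epsgood:proxy_conc}, Corollary~\ref{cor:Bcor}, and Lemma~\ref{lem:small_quad}(3),(4) together with Condition~\ref{epsgood:mean_conc} to control the perturbation terms. The only difference is organizational---you group the decomposition into three pieces upfront, whereas the paper handles the $A$-part and the $B$-part sequentially---and you are more explicit about splitting $\delta$ into its positive and negative parts and about the per-coordinate bound needed to invoke Lemma~\ref{lem:small_quad}, a point the paper leaves implicit (its $\delta_i = 1/|S_G| - w_i$ is likewise signed and relies on the same contextual bound $w_i \le (1+O(\epsilon))/|S_G|$).
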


\begin{proof}
	Define $\delta_i = 1/|S_G| - w_i$ for all $i\in S_G$ and take any $\Sigma\in\calK$.

	By Fact~\ref{fact:minimize} and the assumption that $\norm{w}_1 = 1$, \begin{equation}
		\langle A(w,\mu(w)),\Sigma\rangle = \langle A(w,\barmu),\Sigma\rangle - \sosnorm{\mu(w) - \barmu}^2.\label{eq:decompose_by_minimize}
	\end{equation} For the second term on the right-hand side of \eqref{eq:decompose_by_minimize}, note that we can write \begin{align}\mu(w) - \barmu &= \sum_{i\in S_G}w_i(X_i - \barmu) \\
	&= \sum_{i\in S_G}(1/|S_G| - \delta_i)(X_i - \barmu) \\
	&= (\mu(S_G) - \barmu) - \sum_{i\in S_G}\delta_i(X_i - \barmu) \\
	&= (\mu(S_G) - \barmu) - \sum_{i\in S_G}\delta_i(X_i - \mu_i) - \sum_{i\in S_G}\delta_i(\mu_i - \barmu),\end{align} where the first step follows by the fact that $\sum_{i\in S_G}w_i = 1$. So by triangle inequality, \begin{equation}
		\sosnorm{\mu(w) - \barmu} \le \sosnorm{\mu(S_G) - \barmu} + \sosnorm{\sum_{i\in S_G}\delta_i(X_i - \barmu)} \le O\left(\frac{\epsilon}{\sqrt{k}}\sqrt{\log 1/\epsilon} + \epsilon\cdot \omega\right)\label{eq:muwmudiff}
	\end{equation} where the second step follows by the first part of Condition~\ref{epsgood:mean_conc} in the definition of $\epsilon$-goodness for $S_G$, together with the second part of Lemma~\ref{lem:small_quad}.

	Next, we bound the first term on the right-hand side of \eqref{eq:decompose_by_minimize}. We have \begin{align}\left|\langle A(w,\barmu),\Sigma\rangle\right| &\le \left|\langle A(\hat{w}(S_G),\barmu),\Sigma\rangle\right| + \left|\langle A(\delta,\barmu),\Sigma\rangle\right| \\
	&\le \left|\langle A(\hat{w}(S_G),\barmu),\Sigma\rangle\right| + O\left(\frac{\epsilon}{k}\log 1/\epsilon + \epsilon\cdot \omega^2\right) \\
	&\le \left|\langle B(\{\mu_i\}),\Sigma\rangle\right| + O\left(\omega^2 + \frac{\epsilon\log 1/\epsilon}{k}\right) \\
	&\le \left|\langle B(\hat{\mu}(S_G)),\Sigma\rangle\right| + O\left(\omega^2 + \frac{\epsilon\log 1/\epsilon}{k}\right),\label{eq:oneplus}
	\end{align} where the second step follows by the third part of Lemma~\ref{lem:small_quad}, the third step follows by Corollary~\ref{cor:heterogeneity}, and the fourth step follows by Condition~\ref{epsgood:proxy_conc} of $\epsilon$-goodness.

	Additionally, by Corollary~\ref{cor:Bcor}, we can bound \begin{equation}
		\left|\left\langle B(\mu(w)),\Sigma\right\rangle - \left\langle B(\hat{\mu}(S_G)),\Sigma\right\rangle\right| \le \frac{3}{k}\norm{\mu(w) - \hat{\mu}(S_G)}_1 \le \frac{3}{k} \norm{w - \hat{w}(S_G)}_1 \le O(\epsilon/k) \label{eq:oneplus_2}.
	\end{equation}

	By \eqref{eq:oneplus} and \eqref{eq:oneplus_2} we conclude that $\langle A(w,\barmu),\Sigma\rangle \le \langle B(\mu(w)),\Sigma\rangle + O(\frac{\epsilon}{k}\log 1/\epsilon)$, so this together with \eqref{eq:decompose_by_minimize} and \eqref{eq:muwmudiff} yields the desired bound.
\end{proof}

\subsection{Analyzing the Filter With Spectral Signatures}
\label{subsec:outer_argument}

We now use Lemma~\ref{lem:spectral_sig} to show that under the deterministic condition that the uncorrupted points are $\epsilon$-good, {\sc LearnWithFilter} satisfies the guarantees of Theorem~\ref{thm:main}.

The main step is to show that as long as we remain in the main loop of {\sc LearnWithFilter}, and we have so far thrown out more bad weight than good weight, we are guaranteed to throw out more bad weight than good weight in the next iteration of the main loop:

\begin{lemma}
	Let $w$ and $w'$ be the weights at the start and end of a single iteration of the main loop of {\sc LearnWithFilter}. There is an absolute constant $C>0$ such that if $\sosnorm{M(w)} > C\cdot \frac{\epsilon}{k}\log1/\epsilon$ and $\sum_{i\in S_G}\frac{1}{N} - w_i < \sum_{i\in S_B}\frac{1}{N} - w_i$, then $\sum_{i\in S_G}w_i - w'_i < \sum_{i\in S_B}w_i - w'_i$.
	\label{lem:filter_guarantee_key}
\end{lemma}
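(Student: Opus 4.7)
By Lemma~\ref{lem:unifilter_guarantees}, it suffices to prove $\sum_{i\in S_G} w_i\tau_i < \sum_{i\in S_B} w_i\tau_i$, since this is exactly the hypothesis that forces {\sc 1DFilter} to remove more weight from bad points than good points. Equivalently, writing $\sum_i w_i\tau_i = \langle A(w,\mu(w)),\Sigma\rangle$, the plan is to upper-bound the good piece and then subtract it from the total. For the total, the algorithm chooses $\Sigma$ to realize $\sosnorm{M(w)} = |\langle M(w),\Sigma\rangle|$; I will first observe that the negative-sign case is excluded once $C$ is taken large enough, since $\langle A(w,\mu(w)),\Sigma\rangle \ge 0$ and $\langle B(\mu(w)),\Sigma\rangle \ge 0$ together with $\langle M(w),\Sigma\rangle \le 0$ would force $\sosnorm{M(w)} \le \sosnorm{B(\mu(w))} = O(1/k)$, contradicting the hypothesis $\sosnorm{M(w)} > C(\epsilon/k)\log 1/\epsilon$ for $C$ large and $\epsilon$ small. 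Thus $\sum_i w_i\tau_i = \sosnorm{M(w)} + \langle B(\mu(w)),\Sigma\rangle$.

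To upper-bound the good contribution, apply Fact~\ref{fact:minimize} with $\mu_G := \mu(w|_{S_G})$ and $W_G := \sum_{i\in S_G} w_i$ to write
\begin{equation*}
\sum_{i\in S_G} w_i\tau_i = \langle A(w|_{S_G},\mu_G),\Sigma\rangle + W_G\,\sosnorm{\mu_G - \mu(w)}^2.
\end{equation*}
The inductive hypothesis $\sum_{i\in S_G}(1/N - w_i) < \sum_{i\in S_B}(1/N - w_i) \le \epsilon$ together with $|S_B| \le \epsilon N$ guarantees that the normalized restriction $\hat w' := w|_{S_G}/W_G$ satisfies $\|\hat w' - \hat w(S_G)\|_1 = O(\epsilon)$, which is exactly the hypothesis of Lemma~\ref{lem:discrep}. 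That lemma gives $\sosnorm{M(\hat w')} \le O(\omega^2 + (\epsilon/k)\log 1/\epsilon)$; rescaling by $W_G$ and invoking Corollary~\ref{cor:Bcor} to pass from $B(\mu_G)$ to $B(\mu(w))$ (absorbing a $O(\epsilon/k)$ correction using $\|\mu_G - \mu(w)\|_1 \le O(\epsilon)$ from Fact~\ref{fact:trivial_shift}) yields
\begin{equation*}
\langle A(w|_{S_G},\mu_G),\Sigma\rangle \le W_G\langle B(\mu(w)),\Sigma\rangle + O(\omega^2 + (\epsilon/k)\log 1/\epsilon).
\end{equation*}
The remaining shift contributes $W_G\sosnorm{\mu_G - \mu(w)}^2 \le O(\epsilon^2)$ via Fact~\ref{fact:naive_sos}, absorbed into the error. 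Hence $\sum_{i\in S_G} w_i\tau_i \le W_G\langle B(\mu(w)),\Sigma\rangle + O(\omega^2 + (\epsilon/k)\log 1/\epsilon)$.

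Subtracting this bound from the total, and using $W_B := \sum_{i\in S_B} w_i$ with $W_G + W_B = \|w\|_1 \le 1$, gives
\begin{equation*}
\sum_{i\in S_B} w_i\tau_i - \sum_{i\in S_G} w_i\tau_i \ge \sosnorm{M(w)} - (W_G - W_B)\langle B(\mu(w)),\Sigma\rangle - O(\omega^2 + (\epsilon/k)\log 1/\epsilon).
\end{equation*}
The main obstacle is handling the middle term $(W_G - W_B)\langle B(\mu(w)),\Sigma\rangle$: a naive bound via Fact~\ref{fact:naive_sos} gives $\langle B(\mu(w)),\Sigma\rangle \le O(1/k)$, which is far larger than the threshold $(\epsilon/k)\log 1/\epsilon$ on $\sosnorm{M(w)}$. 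The resolution must leverage the fact that $\Sigma$ is the maximizer of $\langle M(w),\Sigma\rangle = \langle A(w,\mu(w)),\Sigma\rangle - \langle B(\mu(w)),\Sigma\rangle$, so that $\Sigma$ implicitly trades off against the baseline $\langle B,\Sigma\rangle$: in any direction where the multinomial baseline would dominate, the maximizer would prefer a different $\Sigma$ yielding a larger net skew. Making this precise is the technical heart of the argument; once carried out, choosing $C$ as a sufficiently large absolute constant ensures the $\sosnorm{M(w)}$ term dominates both the baseline and the $O(\omega^2 + (\epsilon/k)\log 1/\epsilon)$ error, so that Bad $>$ Good as required.
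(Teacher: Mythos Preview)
There are two genuine gaps, one acknowledged and one not.

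\textbf{The positivity argument is too weak.} You argue that if $\langle M(w),\Sigma\rangle \le 0$ then $\sosnorm{M(w)} \le \langle B(\mu(w)),\Sigma\rangle \le \sosnorm{B(\mu(w))} = O(1/k)$. But this does \emph{not} contradict the hypothesis $\sosnorm{M(w)} > C\frac{\epsilon}{k}\log 1/\epsilon$: since $\epsilon\log 1/\epsilon \to 0$ as $\epsilon\to 0$, the right-hand side is much smaller than $O(1/k)$ regardless of how large $C$ is. The paper instead proves (Claim~\ref{claim:eigpositive}) the sharper statement $\min_{\Sigma\in\calK}\langle M(w),\Sigma\rangle \ge -O\bigl(\frac{\epsilon}{k}\log 1/\epsilon + \omega^2\bigr)$, by lower-bounding $M(w)$ against $M(w')$ via Fact~\ref{fact:minimize} (where $w'$ is the restriction of $w$ to $S_G$) and then invoking Lemma~\ref{lem:discrep} and Lemma~\ref{lem:small_quad}.

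\textbf{The acknowledged gap is real, and your proposed resolution does not work.} You correctly identify that comparing the full sums forces you to absorb a term of order $\langle B(\mu(w)),\Sigma\rangle$, which is generically $\Theta(1/k)$---far larger than the threshold $\frac{\epsilon}{k}\log 1/\epsilon$. Your suggestion that the maximizer $\Sigma$ ``trades off'' against $\langle B,\Sigma\rangle$ has no force: the maximizer of $\langle A,\Sigma\rangle - \langle B,\Sigma\rangle$ can perfectly well have $\langle B,\Sigma\rangle = \Theta(1/k)$ if the corruptions also inflate $\langle A,\Sigma\rangle$ in that same direction. There is no mechanism forcing $\langle B,\Sigma\rangle$ to be small at the optimum.

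The paper's resolution is structurally different and does not attempt to control $\langle B,\Sigma\rangle$ at all. It sorts the scores in decreasing order and restricts attention to the set $[T]$ of points carrying the top $2\epsilon$ weight. On this set, the good mass is at most $3\epsilon$, so the good scores there can be bounded via the \emph{small-subset} concentration in the third part of Lemma~\ref{lem:small_quad}: one gets $\sum_{i\in S_G\cap[T]}w_i\tau_i \le O\bigl(\frac{\epsilon}{k}\log 1/\epsilon + \epsilon\omega^2 + \epsilon^2\sosnorm{M(w)}\bigr)$ (Lemma~\ref{lem:goodtail_small}), with no $\langle B,\Sigma\rangle$ term appearing. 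Meanwhile, the bad scores inside $[T]$ are still $\Omega(\sosnorm{M(w)})$, because the per-point score outside $[T]$ is bounded by Corollary~\ref{cor:tails_small} and hence the bad tail outside $[T]$ is negligible. This head--tail split is the missing idea; without it, the comparison of full sums cannot be closed at the stated threshold.
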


\begin{proof}
	Suppose the scores $\tau_1,\cdots ,\tau_N$ in this iteration are sorted in decreasing order, and let $T$ denote the smallest index for which $\sum_{i\in[T]} w_i \ge 2\epsilon$. As \textsc{Filter} does not modify $w_i$ for $i > T$, we just need to show that $\sum_{i\in S_G\cap [T]}w_i - w'_i < \sum_{i\in S_B\cap [T]}w_i - w'_i$, and by Lemma~\ref{lem:unifilter_guarantees} it is enough to show that \begin{equation}
		\sum_{i\in S_G\cap[T]}w_i \tau_i < \sum_{i\in S_B\cap[T]}w_i \tau_i.
		\label{eq:goodscoresless}
	\end{equation}

	First note that because each weight is at most $\epsilon$, we may assume that $\sum_{i\in[T]}w_i \le 3\epsilon$. We begin by upper bounding the left-hand side of \eqref{eq:goodscoresless}.

	\begin{lemma}
		$\sum_{i\in S_G\cap [T]}w_i\tau_i \le O\left(\frac{\epsilon}{k}\log 1/\epsilon + \epsilon\cdot \omega^2 + \epsilon^2\sosnorm{M(w)}\right)$.\label{lem:goodtail_small}
	\end{lemma}

	\begin{proof}
		Let $w''$ be the weights given by $w''_i$ for $i\in S_G\cap[T]$ and $w''_i = 0$ otherwise. Then $\sum_{S_G\cap[T]}w_i\tau_i$ is equal to \begin{align}
			\sum_{i\in[N]}w''_i \tau_i &= \sum_{i\in[N]}w''_i \left\langle (X_i - \mu(w))^{\otimes 2},\Sigma \right\rangle \\
			&= \sum_{i\in[N]}w''_i \left\langle (X_i - \mu(w''))^{\otimes 2},\Sigma \right\rangle + \norm{w''}_1 \cdot \left\langle (\mu(w'') - \mu(w))^{\otimes 2}, \Sigma\right\rangle \label{eq:use_minimize} \\
			&\le \sum_{i\in[N]}w''_i \left\langle (X_i - \mu(w''))^{\otimes 2},\Sigma \right\rangle + O(\epsilon)\cdot \sosnorm{\mu(w'') - \mu(w)}^2 \label{eq:tails_bound} \\
			&\le \sum_{i\in[N]}w''_i \left\langle (X_i - \barmu)^{\otimes 2},\Sigma \right\rangle + O(\epsilon) \cdot\sosnorm{\mu(w'') - \mu(w)}^2\label{eq:use_minimize2} \\
			&\le O\left(\epsilon\cdot \omega^2 + \frac{\epsilon}{k}\log 1/\epsilon\right) + O(\epsilon) \cdot\sosnorm{\mu(w'') - \mu(w)}^2
		\end{align} where \eqref{eq:use_minimize} and \eqref{eq:use_minimize2} both follow from Fact~\ref{fact:minimize}, \eqref{eq:tails_bound} follows from the earlier assumption that $\sum_{i\in[T]}w_i \le 3\epsilon$ and the definition of $\sosnorm{\cdot}$, and the last step follows by the third part of Lemma~\ref{lem:small_quad}.

		Now note that \begin{align}
			\sosnorm{\mu(w'') - \mu(w)} &\le \sosnorm{\mu(w'') - \barmu} + \sosnorm{\mu(w) - \barmu} \\
			&\le O\left(\frac{\sqrt{\log 1/\epsilon}}{\sqrt{k}} + \omega\right) + \sosnorm{\mu(w) - \barmu} \\
			&\le O\left(\frac{\sqrt{\log 1/\epsilon}}{\sqrt{k}} + \omega+ \sqrt{\epsilon\left(\sosnorm{M(w)} + \omega^2 + \frac{\epsilon}{k}\log 1/\epsilon\right)}\right),
		\end{align} where the second step follows by the fourth part of Lemma~\ref{lem:small_quad} and the third step holds by Lemma~\ref{lem:spectral_sig}. The desired bound follows.
	\end{proof}

	One consequence of this is that outside of the tails, the scores among good samples are small.

	\begin{corollary}
		For all $i > T$, $\tau_i \le O(\frac{1}{k}\log 1/\epsilon + \epsilon\sosnorm{M(w)} + \omega^2)$.
		\label{cor:tails_small}
	\end{corollary}

	\begin{proof}
		Note that \begin{equation}
			\sum_{i\in S_G\cap [T]}w_i = \sum_{i\in [T]}w_i - \sum_{i\in S_B\cap[T]}w_i \ge 2\epsilon - \sum_{i\in S_B}w_i \ge \epsilon,
		\end{equation} so the claim follows from Lemma~\ref{lem:goodtail_small} and averaging.
	\end{proof}

	Next, we show that the deviation of the total scores of the good points from their expectation is negligible.

	\begin{lemma}
		$\sum_{i\in S_G}w_i\tau_i - \langle B(\mu(w)),\Sigma\rangle \le O\left(\frac{\epsilon}{k}\log 1/\epsilon + \epsilon\cdot\omega^2 + \epsilon\cdot\sosnorm{M(w)}\right)$.
		\label{lem:gooddevs_small}
	\end{lemma}

	\begin{proof}
		Let $w'$ be the weights given by $w'_i = w_i$ for $i\in S_G$ and $w'_i = 0$ otherwise. Then by Fact~\ref{fact:minimize}, \begin{align}
			\sum_{i\in S_G}w_i \tau_i &= \sum_{i\in S_G} w_i \langle (X_i - \mu(w'))^{\otimes 2},\Sigma\rangle + \norm{w}_1 \cdot \langle (\mu(w) - \mu(w'))^{\otimes 2},\Sigma\rangle \\
			&\le \frac{1}{\sum_{i\in S_G}w_i}\left(\langle B(\mu(w')),\Sigma\rangle + O\left(\frac{\epsilon}{k}\log 1/\epsilon\right)\right) + \sosnorm{\mu(w) - \mu(w')}^2
		\end{align} where in the second step we used Fact~\ref{fact:minimize}, and in the third step we used Lemma~\ref{lem:discrep} and the definition of $\sosnorm{\cdot}$. To bound the $\sosnorm{\mu(w) - \mu(w')}^2$ term, note that \begin{align}
			\sosnorm{\mu(w) - \mu(w')} &\le \sosnorm{\mu(w) - \barmu} + \sosnorm{\mu(w') - \barmu} \\
			&\le \sosnorm{\mu(w) - \barmu} + O\left(\frac{\epsilon\sqrt{\log 1/\epsilon}}{\sqrt{k}} + \epsilon\cdot\omega\right)  \\
			&\le O\left(\frac{\epsilon\sqrt{\log 1/\epsilon}}{\sqrt{k}} + \epsilon\cdot\omega + \sqrt{\epsilon\left(\sosnorm{M(w)} + \omega^2 + \frac{\epsilon}{k}\log 1/\epsilon\right)}\right),
		\end{align}
		where the second step follows by the fourth part of Lemma~\ref{lem:small_quad}, and the third step follows by Lemma~\ref{lem:spectral_sig}.
		Finally, by Corollary~\ref{cor:Bcor} we have that \begin{equation}
			\langle B(\mu(w')),\Sigma\rangle \le \langle B(\mu(w)),\Sigma\rangle + \frac{3}{k}\norm{\mu(w') - \mu(w)}_1 \le \langle B(\mu(w)),\Sigma\rangle + O(\epsilon/k),
		\end{equation} where the last step follows by Fact~\ref{fact:trivial_shift}. This completes the proof of the claim.
	\end{proof}

	We are now ready to complete the proof of Lemma~\ref{lem:filter_guarantee_key}. In light of Lemma~\ref{lem:goodtail_small}, we wish to lower bound the right-hand side of \eqref{eq:goodscoresless}.

	\begin{claim}
		If $C>0$ in the lower bound $\sosnorm{M(w)} > C(\frac{\epsilon}{k}\log 1/\epsilon + \omega^2)$ is sufficiently large, then $\langle M(w),\Sigma^*\rangle$ must be positive.\label{claim:eigpositive}
	\end{claim}

	\begin{proof}
	 	Let $w'$ denote the weights given by $w'_i = w_i$ for $i\in S_G$ and $w'_i = 0$ otherwise. We have \begin{align} 
		 	M(w) &= \sum_{i\in [N]}w_i (X_i - \mu(w))^{\otimes 2} - B(\mu(w)) \\
			&\succeq \sum_{i\in S_G}w'_i (X_i - \mu(w))^{\otimes 2} - B(\mu(w)) \\
			&\succeq \sum_{i\in S_G}w'_i (X_i - \mu(w'))^{\otimes 2} - B(\mu(w)) \\
			&= M(w') + B(\mu(w')) - B(\mu(w)) \label{eq:Mwmasterbound}
		\end{align} where the third step follows by Fact~\ref{fact:minimize}. Furthermore, \begin{equation}
			\sosnorm{B(\mu(w')) - B(\mu(w))}\le\frac{3}{k}\cdot \norm{\mu(w') - \mu(w)}_1 \le O(\epsilon/k)\label{eq:Bs}
		\end{equation} by Corollary~\ref{cor:Bcor} and Fact~\ref{fact:trivial_shift}. Lastly, we must bound $\sosnorm{M(w')}$. Letting $\hat{w}'$ denote the normalized version of $w'$, we have that \begin{align}
			\sosnorm{M(w')} &\le \sosnorm{M(\hat{w}')} + \sosnorm{M(w') - M(\hat{w}')} \\
			&\le \sosnorm{M(\hat{w}')} + \sosnorm{A(\hat{w}' - w',\barmu)} \\
			&\le O\left(\frac{\epsilon}{k}\log 1/\epsilon + \omega^2\right),\label{eq:Mwprimebound}
		\end{align} where the penultimate step follows by Fact~\ref{fact:minimize} and the definition of the matrix $M(\cdot)$, and the last step follows by Lemma~\ref{lem:discrep} and the third part of Lemma~\ref{lem:small_quad}.

		We conclude by \eqref{eq:Mwmasterbound}, \eqref{eq:Bs}, and \eqref{eq:Mwprimebound} that \begin{equation}
			\min_{\Sigma\in\calK}\langle M(w),\Sigma\rangle \ge -O\left(\frac{\epsilon}{k}\log 1/\epsilon + \omega^2\right)\label{eq:negeig},
		\end{equation} so we simply need to take $C$ larger than the constant implicit in the right-hand side of \eqref{eq:negeig} to ensure that $\langle M(w),\Sigma^*\rangle > 0$.
	\end{proof}

	By Claim~\ref{claim:eigpositive} and the definition of the scores, \begin{equation}
	 	\sum_{i\in[N]}w_i \tau_i - \langle B(\mu(w)),\Sigma^*\rangle = \langle M(w),\Sigma^*\rangle \ge \sosnorm{M(w)}.
	\end{equation} This, together with Lemma~\ref{lem:gooddevs_small}, yields $\sum_{i\in S_B}w_i\tau_i \ge C'\sosnorm{M(w)}$ for some $C' < C$ which we can take to be arbitrarily large. We want to show that this same sum, over only $S_B\cap [T]$, enjoys essentially the same bound. Indeed, \begin{align*}
	 	\sum_{i\in S_B\cap[T]} w_i \tau_i &\ge C'\sosnorm{M(w)} - \sum_{i\in S_B\backslash[T]}w_i\tau_i \\
	 	&\ge C'\sosnorm{M(w)} - \left(\sum_{i\in S_B}w_i\right)\cdot O\left(\frac{1}{k}\log 1/\epsilon + \omega^2 + \epsilon\sosnorm{M(w)}\right) \\
	 	&\ge \overline{C}\cdot\sosnorm{M(w)},
	\end{align*} for some arbitrarily large absolute constant $\overline{C}$, where the second step follows by Corollary~\ref{cor:tails_small}, and the last by the assumption that $\sosnorm{M(w)} > C\cdot(\frac{\epsilon}{k}\log 1/\epsilon + \omega^2)$. On the other hand, by this same assumption and by Lemma~\ref{lem:goodtail_small}, \begin{equation}\sum_{i\in S_G\cap [T]}w_i\tau_i\le O\left(\frac{\epsilon}{k}\log 1/\epsilon + \epsilon\cdot\omega^2 + \epsilon^2\sosnorm{M(w)}\right) \le \underline{C}\cdot\sosnorm{M(w)},
	 \end{equation} where $\underline{C}$ can be taken to be smaller than $\overline{C}$. This proves \eqref{eq:goodscoresless} and thus Lemma~\ref{lem:filter_guarantee_key}.
\end{proof}

We can now combine Lemma~\ref{lem:spectral_sig} and Lemma~\ref{lem:filter_guarantee_key} to get a proof of Theorem~\ref{thm:main}.

\begin{proof}[Proof of Theorem~\ref{thm:main}]
	Let $\hat{\mu}$ be the output of {\sc LearnWithFilter}. By Lemma~\ref{lem:piecewise_vc}, it suffices to show that $\hat{\mu}$ satisfies $\norm{\hat{\mu} - \mu}_{\calA_{s(d+1)}} \le O(\omega + \frac{\epsilon}{\sqrt{k}}\sqrt{\log 1/\epsilon})$, or equivalently that for all $v\in\calV^n_{\ell}$, where $\ell\triangleq 2s(d+1)$, we have that $\langle (\hat{\mu} - \mu)^{\otimes 2}, vv^{\top}\rangle^{1/2} \le O(\omega + \frac{\epsilon}{\sqrt{k}}\sqrt{\log 1/\epsilon})$. By Corollary~\ref{cor:relax}, it is enough to show that $\sosnorm{\hat{\mu} - \mu} \le O(\omega + \frac{\epsilon}{\sqrt{k}}\sqrt{\log 1/\epsilon})$. By Lemma~\ref{lem:spectral_sig} together with the termination condition of the main loop of {\sc LearnWithFilter}, we just need to show that the algorithm terminates (in polynomial time) and that $w\in \calW_{O(\epsilon)}$.

	But by induction and Lemma~\ref{lem:filter_guarantee_key}, every iteration of the loop removes more mass from the bad points than from the good points. Furthermore, by Lemma~\ref{lem:unifilter_guarantees}, the support of $w$ goes down by at least one every time {\sc 1DFilter} is run, so the loops terminates after at most $N$ iterations, each of which can be implemented in polynomial time. At the end, at most an $\epsilon$ fraction of the total mass on $S_G$ has been removed, so the final weights $w$ satisfy $w\in\calW_{2\epsilon}$ as desired.
\end{proof}

\section{Numerical Experiments}
\label{sec:experiments}

\begin{figure}[h]
\centering
	\includegraphics[width=0.95\textwidth]{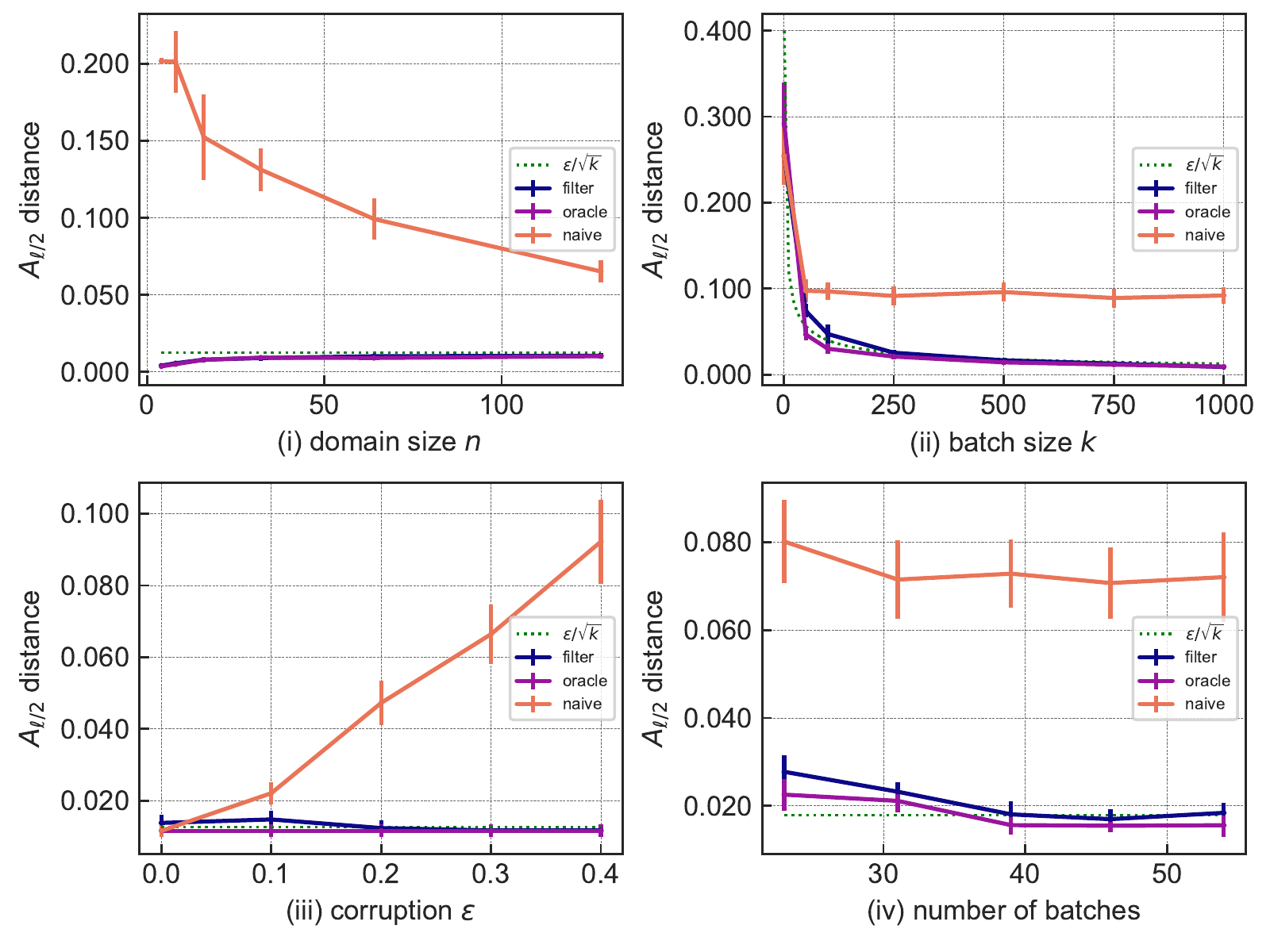}
	\caption{\textbf{Arbitrary Distributions}: }
	\label{fig:unstruct}
\end{figure}
	
\begin{figure}[h]
\centering
	\includegraphics[width=0.95\textwidth]{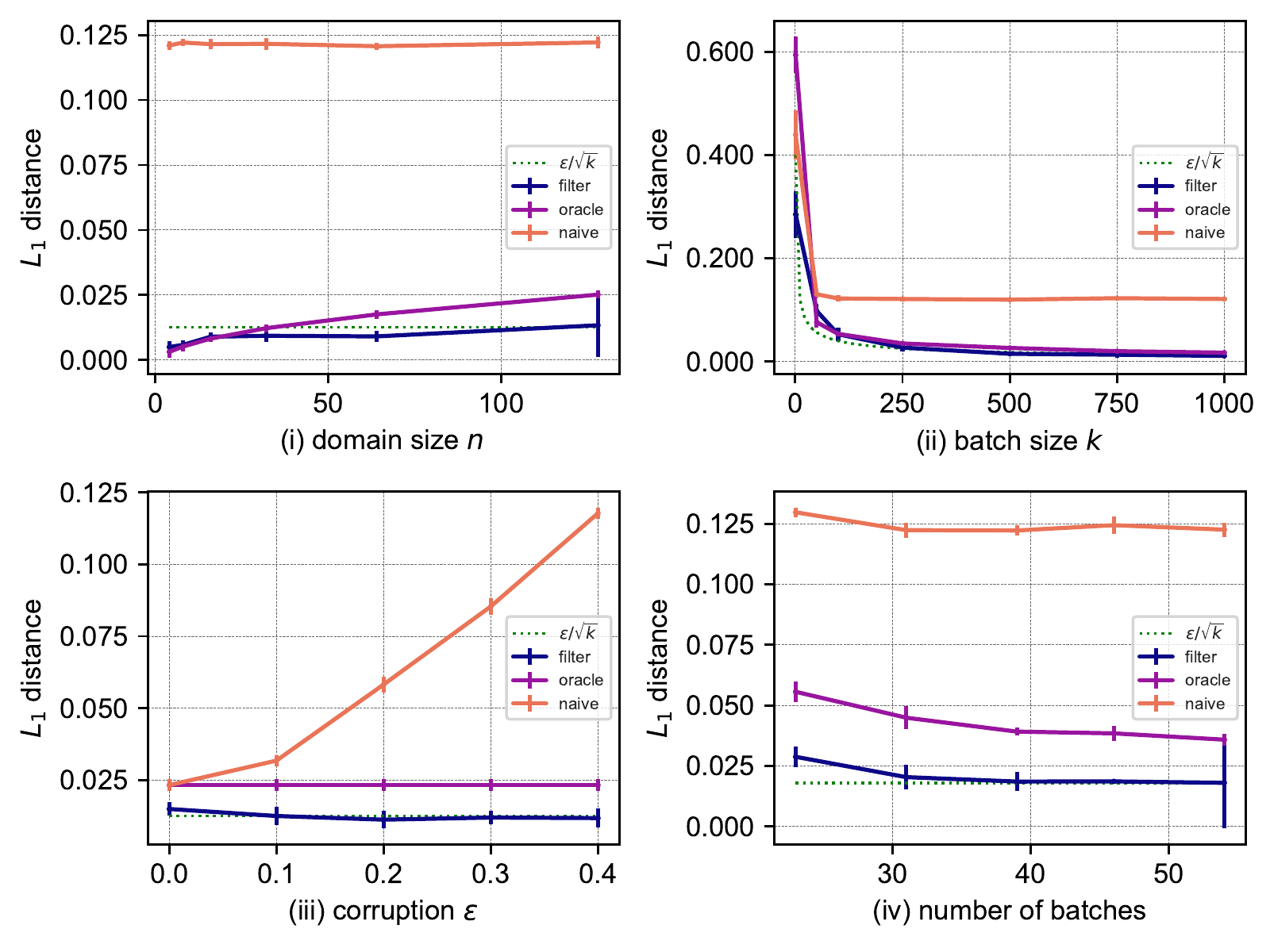}
	\caption{\textbf{Structured Distributions}: }
	\label{fig:struct}
\end{figure}

In this section we report on empirical evaluations of our algorithm on synthetic data. We compared our algorithm {\sc LearnWithFilter}, the naive estimator which simply takes the empirical mean of all samples, the ``oracle'' algorithm which computes the empirical mean of the \emph{uncorrupted samples}, and the threshold of $\epsilon/\sqrt{k}$ which our theorems show that {\sc LearnWithFilter} achieves, up to constant factors (in Figures~\ref{fig:unstruct} and \ref{fig:struct}, these are labeled ``filter'', ``naive'', ``oracle'', and $\epsilon/\sqrt{k}$ respectively). Note that by definition, the oracle dominates the algorithms considered in \cite{chen2019efficiently} and \cite{jain2019robust} for the \emph{unstructured} case, as those algorithms search for a subset of the data and output the empirical mean of that subset. But as Theorem~\ref{thm:main} predicts, {\sc LearnWithFilter} should actually \emph{outperform} the oracle in settings where the underlying distribution $\mu$ is \emph{structured} and there are too few samples for the empirical mean of the uncorrupted points to concentrate sufficiently. In these experiments, we confirm this empirically.

\subsection{Experimental Design}

Our experiments fall under two types: (A) those on learning an \emph{arbitrary distribution} in \emph{$\calA_{\ell/2}$ norm} and B) those on learning a \emph{structured distribution} in \emph{total variation distance}. The purpose of experiments of type (A) will be to convey that {\sc LearnWithFilter} can be used to learn from untrusted batches in $\calA_{\ell/2}$ norm even for distributions which are not necessarily structured. The purpose of experiments of type (B) will be to demonstrate that {\sc LearnWithFilter} can outperform the oracle for structured distributions.

 Throughout, $\omega = 0$ and $\ell=10$. While our algorithm can also be implemented for larger $\ell$ (as the size of the SDP we solve does not depend on $\ell$), we choose $\ell = 5$ because it is small enough that the sample complexity savings of our algorithm are very pronounced, yet large enough that for the domain sizes $n$ we work with, enumerating over $\calV^n_{\ell}$ would be prohibitively expensive, justifying the need to use an SDP.

For experiments of type (A), we chose the true underlying distribution $\mu$ by sampling uniformly from $[0,1]^n$ and normalizing, and for experiments of type B), we chose $\mu$ by sampling a uniformly random piecewise constant function with $\ell = 5$ pieces.

Given $\mu$ and a prescribed parameter $\delta$, the distribution from which the corrupted batches were drawn was taken to be $\Mul_k(\nu)$, where $\nu$ was constructed to satisfy $\tvd(\mu,\nu) = \delta$ by adding $\frac{2\delta}{n}$ to the smallest entries of $\mu$ and subtracting $\frac{2\delta}{n}$ from the largest. Sometimes this does not give a probability distribution, in which case we resample $\mu$. When $k,\epsilon, N$ are clear from context and we say that $N$ $\epsilon$-corrupted batches are drawn from the distribution specified by $(\mu,\nu)$, we mean that $\lfloor (1 - \epsilon)N\rfloor$ samples are drawn from $\Mul_k(\mu)$ and $N - \lfloor (1 - \epsilon)N\rfloor$ from $\Mul_k(\nu)$.

As noted in \cite{jain2019robust}, choosing $\delta$ too high makes it too easy to detect the corruptions in the data, while choosing $\delta$ too low means the naive estimator will already perform quite well. In light of this and the fact that the above process for generating $\nu$ only ensures that $\tvd(\mu,\nu) = \delta$, whereas $\AKnorm{\mu - \nu}$ might be much smaller, we chose $\delta$ for our experiments as follows. For experiments of type (A), we took $\delta = 0.5$ to ensure that the typical $\calA_{\ell/2}$ distance between the empirical mean and the truth was still sufficiently large that the the naive estimator was not competitive. For experiments of type B) where we measure error in terms of total variation distance, we could afford to choose $\delta$ slightly smaller, namely $\delta = 0.3$.

We first describe the experiments of type (A). We examined the effect of varying one of the following four parameters at a time: domain size $n$, batch size $k$, corruption fraction $\epsilon$, and total number of batches $N$. Each of the following four experiments was repeated for a total of ten trials.

\begin{enumerate}[label=(\alph*)]
	\item \emph{Varying domain size $n$}: We fixed $\epsilon = 0.4$, $k = 1000$, and $N = \lfloor \frac{\ell/\epsilon^2}{1 - \epsilon}\rfloor$ to ensure $\lfloor \ell/\epsilon^2\rfloor$ samples from $\Mul_k(\mu)$. We chose such large $k$ to ensure the gap between empirical mean and our algorithm was very noticable. In each trial and for each $n\in[4,8,16,32,64,128]$, we randomly generated $(\mu,\nu)$ via the above procedure, drew $N$ $\epsilon$-corrupted samples from distribution specified by $(\mu,\nu)$. Note that while $N$ is independent of $n$, the performance of our algorithm is comparable to that of the oracle.\footnote{The naive estimator's error is decreasing in $n$ for an unrelated reason: as $n$ increases, the above procedure for sampling $(\mu,\nu)$ appears to skew towards $\mu$ for which the resulting perturbation $\nu$ is close in $\calA_{\ell/2}$.}
	\item \emph{Varying batch size $k$}: We fixed $\epsilon = 0.4$, $n = 64$, and $N = \frac{\ell/\epsilon^2}{1 - \epsilon}\rfloor$. In each trial, we randomly generated $(\mu,\nu)$ via the above procedure, and then for each value of $k\in[1, 50, 100, 250, 500, 750, 1000]$ we drew $N$ samples from the distribution specified by $(\mu,\nu)$. Note that while our algorithm's error and the oracle's error decay with $k$, the empirical mean's error remains fixed.
	\item \emph{Varying corruption fraction $\epsilon$}: We fixed $\epsilon^* = 0.4$, $n = 64$, $k = 1000$, and $N = \lfloor \ell/{\epsilon^*}^2\rfloor$. In each trial, we randomly generated $(\mu,\nu)$ via the above procedure and drew $N$ samples from $\Mul(k,\mu)$. Then for each $\epsilon\in[0.0, 0.1, 0.2, 0.3, 0.4]$, we augmented this with an additional $\lfloor\frac{\epsilon N}{1 - \epsilon}$ samples from $\Mul(k,\nu)$. Note that while our algorithm's error remains close to $\epsilon^*/\sqrt{k}$, the empirical mean's error increases linearly in $\epsilon$.
	\item \emph{Varying number of batches $N$}: We fixed $\epsilon = 0.4$, $n = 128$, and $k = 500$. In each trial, we randomly generated $(\mu,\nu)$ via the above procedure, and then for each $\rho\in[0.5,0.75,1,1.25,1.5]$, we drew $N = \lfloor \rho\cdot \ell/\epsilon^2\rfloor$ samples from the distribution specified by $(\mu,\nu)$. Note that even with such a small number of samples, our algorithm can compete with the oracle. Also note that our error bottoms out at $\epsilon/\sqrt{k}$ while the oracle's error goes beneath this threshold.
\end{enumerate}

For type (B), we ran the exact same set of four experiments but over structured $\mu$, with the key difference that after generating an estimate with {\sc LearnWithFilter}, we post-processed it by rounding to a piecewise constant function via a simple dynamic program. We then compare the error of this piecewise constant estimator in \emph{total variation distance} to that of the empirical mean of the whole dataset, and the empirical mean of the uncorrupted points. 

As is evident from Figure~\ref{fig:struct}, our algorithm outperforms even the oracle, as predicted by Theorem~\ref{thm:main}.

\subsection{Implementation Details}

The experiments were conducted on a MacBook Pro with 2.6 GHz Dual-Core Intel Core i5 processor and 8 GB of RAM. The experiments of type (A) respectively took 110m36.499s, 73m19.477s, 50m54.655s, and 536m39.212s to run. The experiments of type (B) respectively took 64m28.346s, 52m7.859s, 39m36.754s, and 362m50.742s to run. The discrepancy in runtimes between (A) and (B) can be explained by the fact that a number of unrelated processes were also running at the time of the former. The experiment of varying the number of batches $N$ was the most expensive because we chose domain size $n = 128$ to accentuate the gap between our algorithm and the oracle. The abovementioned runtimes imply that over a domain of size 128, {\sc LearnWithFilter} takes roughly 7-10 minutes.

For the implementation, we used the SCS solver in CVXPY for our semidefinite programs. In order to achieve reasonable runtimes, we needed to set the feasibility tolerance to $\mathsf{1e-2}$, and as a result the SDP solver would occasionally output matrices $\Sigma$ which are moderately far from $\calK$; in particular, one mode of failure that arose was that $\Sigma$ might be non-PSD and give rise to negative scores in {\sc LearnWithFilter}. We chose to address this mode of failure heuristically by terminating the algorithm whenever this happened and simply outputting the estimate for $\mu$ at that point in time. Of the 480 total trials that were run across all experiments, this happened 53 times. Another heuristic that we used was to terminate the algorithm as soon as $\sosnorm{\Sigma}$ stopped increasing during a run of {\sc LearnWithFilter}; this was primarily to have a stopping criterion that avoids the need to tune constant factors. As demonstrated by Figures~\ref{fig:unstruct} and \ref{fig:struct}, these heuristic decisions ultimately had negligible effect on the performance of our algorithm.

All code, data, and documentation can be found at \url{https://github.com/secanth/federated}.

\paragraph{Acknowledgments}

We would like to thank the authors of the concurrent work \cite{jain2020general} for coordinating submissions with us.

\bibliographystyle{alpha}
\bibliography{biblio}

\appendix


\section{Concentration}
\label{sec:conc}

In this section we prove Lemma~\ref{lem:conc}, restated here for convenience:

\regular*

\subsection{Technical Ingredients}

The key technical fact we use to get sample complexity that depend quadratically on $\ell$ is:

\begin{restatable}{lemma}{net}
	For every $0<\eta\le 1$, there exists a net $\calN\subset\R^{n\times n}$ of size $O(n^3\ell^2 \log^2 n/\eta)^{(\ell \log n+1)^2}$ of matrices such that for every $\Sigma\in\calK$, there exists some $\tilde{\Sigma} = \sum_{\nu} \Sigma^*_{\nu}$ for $\Sigma^*_{\nu}\in \calN$ such that the following holds: 1) $\norm{\Sigma - \tilde{\Sigma}}_F \le \eta$, 2) $\sum_{\nu}\alpha_{\nu}\le 1$, and 3) $\norm{\Sigma^*_{\nu}}_{\max}\le O(1)$.
	\label{lem:net}
\end{restatable}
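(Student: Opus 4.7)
The plan is to work in the Haar-transformed coordinates, where the constraints defining $\calK$ force approximate sparsity. Since $H$ is orthonormal, $\|\Sigma - \tilde{\Sigma}\|_F = \|H\Sigma H^\top - H\tilde{\Sigma} H^\top\|_F$, so it suffices to approximate $\hat{\Sigma} \triangleq H\Sigma H^\top$ in Frobenius norm. Writing $w_{a,b} \triangleq \haar{a}\haar{b}$ and $L \triangleq \ell\log n + 1$, Constraints~\ref{constraint:L1}--\ref{constraint:Linf} of Definition~\ref{defn:calK} directly give
\begin{equation*}
\sum_{a,b}w_{a,b}|\hat{\Sigma}_{a,b}| \le L, \quad \sum_{a,b}w_{a,b}^2\hat{\Sigma}_{a,b}^2 \le L, \quad w_{a,b}|\hat{\Sigma}_{a,b}| \le 1.
\end{equation*}

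First I would partition the entries of $\hat{\Sigma}$ by Haar scale-pair: since $\haar{a}$ is constant on each level $T_i$, there are $O(\log^2 n)$ blocks $T_i \times T_j$ on each of which $w_{a,b}$ equals a single value $w_{i,j}$, and within each block the three constraints translate into unweighted $L_\infty$, $L_1$, and $L_2$ bounds on the restriction of $\hat{\Sigma}$ scaled by the appropriate power of $1/w_{i,j}$. I would then truncate entries below a per-block threshold $\tau_{i,j}$ and round the survivors to a grid of spacing $\delta_{i,j}$, calibrating $\tau_{i,j}$ and $\delta_{i,j}$ so that each block contributes at most $\eta^2/O(\log^2 n)$ to the total squared Frobenius error while the surviving supports sum to $O(L^2)$ across all blocks.

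The resulting sparsified, discretized approximation $\widehat{\tilde{\Sigma}}$ has $O(L^2)$ nonzero entries, each on a grid. Mapping back by setting $\Sigma^*_\nu \triangleq v\cdot\psi_a\psi_b^\top$ for each surviving entry $(a,b)$ with rounded value $v$, condition~(3) holds because $\|\psi_a\psi_b^\top\|_{\max} \le \haar{a}\haar{b} = w_{a,b}$ and $|v|\le 1/w_{a,b}$, giving $\|\Sigma^*_\nu\|_{\max} \le O(1)$; condition~(2) follows from a normalization argument based on the weighted $L_1$ constraint after absorbing the factor of $L$ into the atoms. Enumerating positions in $[n]^2$ and grid values for each of the $O(L^2)$ surviving entries then yields the stated cardinality, since each of the $L^2$ ``slots'' is specified by one of $n^2$ positions and one of $\widetilde{O}(n\ell^2/\eta)$ grid values.

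The main obstacle will be the second step: showing that the per-block thresholds can be calibrated so that the surviving support is $O(L^2)$ rather than scaling polynomially with $n$. A naive argument using only the weighted $L_1$ constraint inflates the support count by a polynomial factor in $n$, because coarse-scale blocks (where $w_{i,j}$ is small) have very large unweighted $L_1$ budgets $L/w_{i,j}$. The new Constraints~\ref{constraint:L2} and~\ref{constraint:Linf} highlighted in Remark~\ref{remark:tight}, absent from~\cite{chen2019efficiently}, are what allow one to trade off across scales: the weighted $L_2$ bound forces most of the unweighted Frobenius mass in each block to concentrate on $O(L)$ entries, so that the total support after thresholding sums to $O(L^2)$ up to polylogarithmic slack, and the weighted $L_\infty$ bound prevents per-entry grid ranges from blowing up. This refinement is exactly what yields the $(\ell\log n + 1)^2$ exponent in the final net size, tightening the analogous bound in~\cite{chen2019efficiently}.
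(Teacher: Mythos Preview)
Your high-level strategy---pass to Haar coordinates and exploit the weighted-norm constraints to build a net over sparse matrices---matches the paper, and you correctly identify that Constraints~\ref{constraint:L2} and~\ref{constraint:Linf} are what drive the $(\ell\log n+1)^2$ exponent. But the single-entry atom decomposition $\Sigma^*_\nu = v\,\psi_a\psi_b^\top$ cannot meet conditions (2) and (3) simultaneously, and this is a genuine gap.

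If each atom is a single Haar outer product $c_\nu\,\psi_{a_\nu}\psi_{b_\nu}^\top$, then $\|\Sigma^*_\nu\|_{\max}=|c_\nu|\,w_{a_\nu,b_\nu}$, so (3) forces $|c_\nu|\,w_{a_\nu,b_\nu}=O(1)$. Writing $\tilde\Sigma=\sum_\nu\alpha_\nu\Sigma^*_\nu$ then forces $\alpha_\nu$ to be proportional to $w_{a_\nu,b_\nu}|\hat\Sigma_{a_\nu,b_\nu}|$, and the weighted $L_1$ constraint only gives $\sum_\nu\alpha_\nu\lesssim L$ (indeed, for $\calK$ to contain $vv^\top$ with $v\in\calV^n_\ell$ one actually needs $\|\hat\Sigma\|_{1,1;\vec h}\le L^2$, making the gap worse). ``Absorbing the factor of $L$ into the atoms'' multiplies each $c_\nu$ by $L$ and breaks (3). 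This tension is intrinsic to rank-one Haar atoms: it is precisely the statement that the weighted $\ell_1$ mass of $\hat\Sigma$ is $\Theta(L)$ rather than $O(1)$.

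The paper resolves this via the shelling trick (Lemma~\ref{lem:shelling} / Corollary~\ref{cor:shelling_reweighted}): sort \emph{all} entries of $\hat\Sigma$ by Haar-weighted magnitude and partition into consecutive chunks $\vec L^1,\vec L^2,\ldots$, each $s^2$-sparse with $s=L$. The crucial output is part (3) of that lemma, which gives
\[
\sum_j\|\vec L^j\|_{\max;\vec h}\;\le\;\frac{1}{s^2}\|\hat\Sigma\|_{1,1;\vec h}+\|\hat\Sigma\|_{\max;\vec h}\;\le\;2,
\]
combining the weighted $L_1$ and $L_\infty$ constraints. Setting $\alpha_j=\tfrac12\|\vec L^j\|_{\max;\vec h}$ yields (2), and (3) follows from the fact that $\|H^{-1}M(H^{-1})^\top\|_{\max}\le\|M\|_{\max;\vec h}$ (Haar wavelets at a fixed scale have disjoint supports and $\ell_\infty$ norm exactly $\haar{a}$). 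Note also that the paper never truncates to $O(L^2)$ total entries: it keeps all of $\hat\Sigma$, decomposes into many $L^2$-sparse pieces, and rounds each to a common net of $L^2$-sparse unit-Frobenius matrices. Your per-block thresholding scheme would additionally need to bound the \emph{unweighted} Frobenius error of the discarded entries using only the weighted norms, which does not follow directly since $1/w_{a,b}$ can be as large as $n$.
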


Note that this is a strengthening of a special case of Lemma 6.9 from \cite{chen2019efficiently}. We defer the proof of Lemma~\ref{lem:net} to Appendix~\ref{app:net}.

For $\epsilon$-goodness to hold, it will be crucial to establish the following sub-exponential tail bounds for the empirical covariance of a set of samples $X_1,\cdots ,X_N\sim\Mul_k(\mu)$, as well as for $\sosnorm{\hat{\mu} - \mu}^2$, where $\hat{\mu}$ is the empirical mean of those samples.

\begin{restatable}{lemma}{applybernsteinA}
	Let $\xi > 0$ and let $\calN\subset\R^{n\times n}$ be any finite set for which $\norm{\Sigma}_{\max} \le O(1)$ for all $\Sigma\in\calN$. Let $\mu_1,...,\mu_N,\barmu\in\Delta^n$ satisfy $\barmu\triangleq \frac{1}{N}\sum^N_{i=1}\mu_i$. Then for $X_i\sim\Mul_k(\mu_i)$ for $i\in[N]$, \begin{equation}
		\Pr\left[\left|\left\langle \frac{1}{N}\sum^N_{i=1}(X_i - \mu_i)^{\otimes 2} - \E_{X\sim\Mul_k(\mu_i)}\left[(X - \mu_i)^{\otimes 2}\right],\Sigma\right\rangle\right| > t \ \forall \ \Sigma\in\calN\right] < 2|\calN|\exp\left(-\Omega\left(\frac{Nk^2t^2}{1 + kt}\right)\right),
	\end{equation} where the probability is over the samples $X_1,\cdots ,X_N$.
	\label{lem:apply_bernstein1}
\end{restatable}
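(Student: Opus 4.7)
The plan is to prove this by fixing a single $\Sigma\in\calN$, applying a Bernstein-type inequality to the sum of $N$ independent mean-zero contributions, and then taking a union bound over $\calN$ (producing the $2|\calN|$ factor). Concretely, for fixed $\Sigma$ define
\begin{equation}
    Z_i \triangleq \left\langle (X_i-\mu_i)^{\otimes 2} - \E_{X\sim\Mul_k(\mu_i)}[(X-\mu_i)^{\otimes 2}],\Sigma\right\rangle,
\end{equation}
so that the random variables $Z_i$ are independent and mean-zero, and the quantity of interest is $\frac{1}{N}\sum_i Z_i$. The standard Bernstein inequality yields the target tail $\exp(-\Omega(Nk^2t^2/(1+kt)))$ provided we can show $|Z_i|\le O(1)$ almost surely and $\mathrm{Var}(Z_i)\le O(1/k^2)$; the almost-sure bound is immediate from $\|\Sigma\|_{\max}\le O(1)$ together with $\|X_i-\mu_i\|_1\le 2$ and the observation that $\E[(X-\mu_i)^\top \Sigma (X-\mu_i)] = \mathrm{tr}(\Sigma\cdot B(\mu_i)) = O(1/k)$.

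The main work is the variance bound. I would decompose $X_i$ into its underlying iid atoms: writing $X_i = \frac{1}{k}\sum_{j=1}^k e_{W_{i,j}}$ with $W_{i,j}\sim\mu_i$ iid and setting $\xi_{i,j}\triangleq e_{W_{i,j}}-\mu_i$, one has
\begin{equation}
(X_i-\mu_i)^\top \Sigma (X_i-\mu_i) \;=\; \frac{1}{k^2}\sum_{j}\xi_{i,j}^\top \Sigma \xi_{i,j} \;+\; \frac{1}{k^2}\sum_{j\ne j'}\xi_{i,j}^\top \Sigma \xi_{i,j'}.
\end{equation}
The first (``diagonal'') piece, minus its mean, is an average of $k$ iid bounded random variables with variance $O(1/k^3)$. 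The second (``off-diagonal'') piece has mean zero, and by expanding its square only the pairings $\{j_1,j_1'\}=\{j_2,j_2'\}$ contribute, leaving $O(k^{-2})\cdot \E[(\xi_j^\top \Sigma \xi_{j'})^2] = O(k^{-2})\cdot \mathrm{tr}(\Sigma_\xi \Sigma \Sigma_\xi \Sigma)$ where $\Sigma_\xi = \diag(\mu_i)-\mu_i\mu_i^\top$. Bounding $\mathrm{tr}(\Sigma_\xi \Sigma \Sigma_\xi \Sigma)\le \|\Sigma\|_{\max}^2\cdot \|\Sigma_\xi\|_{1,1}^2$ and noting $\|\Sigma_\xi\|_{1,1}\le 2$ yields variance $O(1/k^2)$ for $Z_i$, as required.

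Plugging $M=O(1)$ and $\sigma^2=O(1/k^2)$ into Bernstein gives a tail of the form $\exp(-\Omega(Nt^2/(1/k^2+t)))$, then union-bounding over $\calN$ gives the factor $|\calN|$ in front. The main technical obstacle is that the naive bound just described yields a sub-exponential denominator of the form $1+k^2 t$ rather than the sharper $1+kt$ claimed in the statement. To close this gap, I would treat the diagonal and off-diagonal parts separately: the diagonal part is a genuine average of $Nk$ iid bounded scalars, to which Bernstein directly yields a tail stronger than what is needed. For the off-diagonal part, which is a sum of $N$ independent degenerate U-statistics of size $k$, one should either appeal to a Bernstein-type bound tailored to degenerate U-statistics (e.g., via decoupling the within-batch pairs and then applying scalar Bernstein to the $\Theta(Nk^2)$ decoupled summands, each bounded by $O(1)$ with variance $O(1)$) or control the higher moments $\E[|B_i|^m]$ directly to verify the Bernstein moment condition $\E[|B_i|^m]\le \frac{m!}{2}(O(1/k))^{m-2}\cdot O(1/k^2)$. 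Either route produces the $(1+kt)$ denominator and, after the union bound over $\calN$, completes the proof.
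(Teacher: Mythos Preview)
Your overall plan---fix $\Sigma$, write the quantity as an average of $N$ independent mean-zero summands, apply a Bernstein-type inequality, and union-bound over $\calN$---is exactly the paper's approach, and you correctly diagnose that standard Bernstein with only a variance bound and an $O(1)$ uniform bound yields a denominator $1+k^2 t$ rather than the required $1+kt$. Your proposed fix~(b), namely verifying the full Bernstein moment condition with scale $O(1/k)$, is precisely what the paper does.

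The difference is that the paper does not take the detour through the insufficient version. It proves directly (Lemma~\ref{lem:subexp}) that for any $\Sigma$ with $\norm{\Sigma}_{\max}\le O(1)$ and any $r\ge 1$,
\[
\E\bigl[\bigl((X_i-\mu_i)^\top\Sigma(X_i-\mu_i)\bigr)^r\bigr]\;\le\;O(1/k)^r\,r!,
\]
by expanding over the $k$ underlying atoms $\xi_{i,1},\dots,\xi_{i,k}$ and observing that the only nonvanishing terms in the sum over $2r$-tuples are those in which every index appears an even number of times; a short combinatorial count (Fact~\ref{fact:combo}) shows there are at most $O(k)^r r!$ such tuples, and each contributes $O(1)^r$. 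Plugging $\nu=O(1/k)$ into the sub-exponential tail bound (Fact~\ref{fact:subexp_tail}) and union-bounding over $\calN$ finishes. Your separate diagonal/off-diagonal split and the $\mathrm{tr}(\Sigma_\xi\Sigma\Sigma_\xi\Sigma)$ calculation are subsumed by the $r=2$ case of this moment lemma. Your alternative fix~(a) via decoupling of the within-batch U-statistic would also work, but it is heavier machinery than needed: since the $Z_i$ are already independent across batches, a direct moment bound on a single $Z_i$ suffices, and no decoupling is required.
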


\begin{restatable}{lemma}{applybernsteinB}
	Let $\xi > 0$ and let $\calN\subset\R^{n\times n}$ be any finite set for which $\norm{\Sigma}_{\max} \le O(1)$ for all $\Sigma\in\calN$. For $X_i\sim\Mul_k(\mu_i)$ for $i\in[N]$, $\hat{\mu}\triangleq \frac{1}{N}\sum^N_{i=1}X_i$, and $\barmu\triangleq \frac{1}{N}\sum^N_{i=1}\mu_i$, \begin{equation}
		\Pr\left[\left|\left\langle (\hat{\mu} - \mu)^{\otimes 2},\Sigma\right\rangle - \E\left[\left\langle (\hat{\mu} - \mu)^{\otimes 2},\Sigma\right\rangle\right] \right| > t \ \forall \ \Sigma\in\calN\right] < 2|\calN| \exp\left(-\Omega\left(\frac{N^2 k^2 t^2}{1 + Nkt}\right)\right),
	\end{equation} where the probability is over the samples $X_1,\cdots ,X_N$.
	\label{lem:apply_bernstein2}
\end{restatable}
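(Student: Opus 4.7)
The plan is to fix a single $\Sigma \in \calN$, prove the tail bound for that $\Sigma$, and union-bound over $\calN$ at the end. Write $Z_i = X_i - \mu_i$, so $\hat{\mu}-\barmu = \tfrac{1}{N}\sum_i Z_i$ with the $Z_i$ independent, centered, and satisfying $\norm{Z_i}_1 \le 2$. Since cross-terms $Z_i^\top \Sigma Z_j$ with $i\ne j$ have expectation zero by independence and centering, the deviation splits cleanly as
\[
\langle (\hat{\mu}-\barmu)^{\otimes 2},\Sigma\rangle - \E[\cdot]
= \underbrace{\tfrac{1}{N^2}\sum_i\bigl(Z_i^\top \Sigma Z_i - \E[Z_i^\top \Sigma Z_i]\bigr)}_{D} \;+\; \underbrace{\tfrac{2}{N^2}\sum_{i<j}Z_i^\top \Sigma Z_j}_{U}.
\]
I would handle $\Pr[|D|>t/2]$ and $\Pr[|U|>t/2]$ separately, then union bound.

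The diagonal piece is essentially free. Observe $D = \tfrac{1}{N}\cdot\bigl\langle \tfrac{1}{N}\sum_i(Z_i^{\otimes 2}-\E[Z_i^{\otimes 2}]),\Sigma\bigr\rangle$, and the inner bracketed quantity is exactly the object Lemma~\ref{lem:apply_bernstein1} controls. Invoking that lemma with threshold $Nt/2$ yields $\Pr[|D|>t/2] \le 2|\calN|\exp(-\Omega(Nk^2(Nt)^2/(1+kNt))) = 2|\calN|\exp(-\Omega(N^3k^2t^2/(1+Nkt)))$, strictly stronger than what we need.

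The off-diagonal piece is the heart of the argument. Let $S_j = \sum_{i\le j}Z_i$ and rewrite $U = \tfrac{2}{N^2}\sum_{j=2}^N Z_j^\top \Sigma S_{j-1}$, so that the summands $M_j \triangleq \tfrac{2}{N^2}Z_j^\top \Sigma S_{j-1}$ form a martingale difference sequence with respect to $\calF_j = \sigma(Z_1,\dots,Z_j)$. I would apply Freedman's inequality, which requires (a) an a.s. bound $|M_j|\le R$ and (b) a high-probability bound $V$ on $\sum_j \E[M_j^2\mid\calF_{j-1}]$. Using $B(\mu_j) \preceq \tfrac{1}{k}\mathrm{diag}(\mu_j)$ (from \eqref{eq:Bdef}), we have $\E[M_j^2\mid\calF_{j-1}] \le \tfrac{4}{N^4 k}\sum_a \mu_{j,a}(\Sigma S_{j-1})_a^2$; and since $\E[(\Sigma S_{j-1})_a^2] = \Sigma_a^\top \mathrm{Cov}(S_{j-1})\Sigma_a \le (j-1)/k$ by the same $B\preceq \tfrac{1}{k}\mathrm{diag}(\mu)$ bound combined with $\norm{\Sigma}_{\max}\le 1$, the \emph{expected} total conditional variance telescopes to $O(1/(N^2 k^2))$, which is exactly the Bernstein scale we want.

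The main obstacle is upgrading this expectation-level bound to a high-probability bound so that Freedman can be invoked with $V = O(1/(N^2 k^2))$ and, simultaneously, producing a matching range $R = O(1/(Nk))$ for the increments. The naive worst-case estimate $\norm{S_{j-1}}_1 \le 2(j-1)$ would cost a factor of order $Nk$ in the exponent; to avoid this, I would obtain a uniform high-probability bound on $\norm{S_j}_1$ by applying scalar Bernstein coordinatewise to each $S_{j,a} = \sum_{i\le j}Z_{i,a}$ (bounded centered with per-sample variance $\le \barmu_a/k$) and union bounding over $(j,a)\in[N]\times[n]$, and then argue that $\sum_j \E[M_j^2\mid\calF_{j-1}]$ itself concentrates around its mean via one further round of martingale Bernstein applied to this quadratic functional of $(S_j)$. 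On the resulting good event, Freedman delivers $\Pr[|U|>t/2]\le \exp(-\Omega(N^2k^2t^2/(1+Nkt)))$, matching the target after union bounding with the diagonal piece and over $\Sigma \in \calN$.
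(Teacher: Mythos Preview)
Your diagonal/off-diagonal split is reasonable, and the treatment of $D$ via Lemma~\ref{lem:apply_bernstein1} is fine. The paper, however, takes a much shorter and quite different route for the whole quantity at once: it unrolls each $X_i$ into its $k$ constituent one-hot draws, so that $\hat\mu-\barmu=\frac{1}{Nk}\sum_{\ell=1}^{Nk}(Y_\ell-\mu_{(\ell)})$ is an average of $m=Nk$ independent, centered, $L_1$-bounded vectors. A direct combinatorial moment computation (Lemma~\ref{lem:subexp}, via Fact~\ref{fact:combo}) then gives $\E\bigl[((\hat\mu-\barmu)^\top\Sigma(\hat\mu-\barmu))^r\bigr]\le O(1/(Nk))^r\,r!$ for every $r\ge 1$, after which a single-variable sub-exponential tail bound (Fact~\ref{fact:subexp_tail} with $m=1$, $\nu=O(1/(Nk))$) yields exactly the target exponent $\Omega(N^2k^2t^2/(1+Nkt))$, and one union-bounds over $\calN$. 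There is no martingale, no intermediate good event, and no $n$-dependence anywhere.

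Your Freedman route for $U$, by contrast, has a real gap where you flag the ``main obstacle.'' You correctly compute $\E[V]=O(1/(N^2k^2))$ for the predictable quadratic variation, but the upgrade to a high-probability bound is not carried out, and the sketched plan does not deliver the stated exponent. If you control $(\Sigma S_{j-1})_a$ by scalar Bernstein and union-bound over $(j,a)\in[N]\times[n]$, the good-event bounds are $|(\Sigma S_{j-1})_a|\le O(\sqrt{N\log(Nn)/k}+\log(Nn))$, so the increment range becomes $R=O\bigl(\sqrt{\log(Nn)/(N^3k)}+\log(Nn)/N^2\bigr)$ rather than $O(1/(Nk))$; this matches the target only when $N\gtrsim k\log(Nn)$, an assumption absent from the statement, and otherwise Freedman gives an exponent off by a $\log(Nn)$ factor. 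Your alternative of bounding $\norm{S_j}_1$ coordinate-wise is worse still, since summing the per-coordinate bounds introduces a genuine $\sqrt{n}$ or $n$ factor. Finally, the proposed ``one further round of martingale Bernstein'' on $V$ itself is a degree-two concentration problem of the same shape as concentrating $U$, so it does not obviously terminate; pushing it through essentially forces you back to a moment computation of the kind the paper does directly.
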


\begin{restatable}{lemma}{applybernsteinC}
	Let $\xi > 0$ and let $\calN\subset\R^{n\times n}$ be any finite set for which $\norm{\Sigma}_{\max} \le O(1)$ for all $\Sigma\in\calN$. Let $\mu_1,...,\mu_N,\barmu\in\Delta^n$ satisfy $\norm{\mu_i - \barmu}_1 \le \omega$ for all $i\in[N]$. For $X_i\sim\Mul_k(\mu_i)$ for $i\in[N]$, \begin{equation}
		\Pr\left[\left|\frac{1}{N}\sum^N_{i=1}(\mu_i - \barmu)^{\top}\Sigma(X_i - \mu_i)\right| > \omega\cdot t \ \forall \ \Sigma\in\calN\right] < 2|\calN| \exp\left(-\Omega\left(kNt^2\right)\right),
	\end{equation} where the probability is over the samples $X_1,\cdots, X_N$.
	\label{lem:apply_bernstein3}
\end{restatable}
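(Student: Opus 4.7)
The plan is to reduce the bound to an application of Bernstein's inequality on $kN$ independent summands (one per individual draw, not per batch), together with a union bound over $\calN$. The reason for breaking things at the per-draw granularity is that each batch sum $Z_i\triangleq (\mu_i - \barmu)^{\top}\Sigma(X_i - \mu_i)$ has variance $O(\omega^2/k)$ but range $O(\omega)$, so a direct application of Bernstein on $Z_1,\dots,Z_N$ would give only $\exp(-\Omega(Nk t^2/(1+kt)))$, which degrades into a subexponential tail $\exp(-\Omega(Nt))$ once $t$ exceeds $1/k$. Splitting inside each $Z_i$ as an average of $k$ iid quantities recovers sub-Gaussian behaviour in the relevant regime.

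Concretely, for a fixed $\Sigma\in\calN$, write $X_i = \frac{1}{k}\sum_{j=1}^k e_{Y_i^j}$ for iid $Y_i^j\sim \mu_i$, set $u^{(i)}\triangleq \Sigma^{\top}(\mu_i - \barmu)$, and define
\begin{equation}
	W_{i,j} \triangleq u^{(i)}_{Y_i^j} - \langle u^{(i)},\mu_i\rangle, \qquad i\in[N],\ j\in[k],
\end{equation}
so that $(\mu_i-\barmu)^{\top}\Sigma(X_i-\mu_i) = \frac{1}{k}\sum_{j=1}^k W_{i,j}$. The collection $\{W_{i,j}\}$ is jointly independent (independence across $i$ comes from independence of batches, and independence across $j$ for fixed $i$ from iid draws within a batch), each $W_{i,j}$ has mean zero, and by H\"older's inequality $\|u^{(i)}\|_\infty \le \|\Sigma\|_{\max}\cdot\|\mu_i - \barmu\|_1 \le O(\omega)$, giving the range bound $|W_{i,j}| \le 2\|u^{(i)}\|_\infty \le O(\omega)$ and the variance bound
\begin{equation}
	\mathrm{Var}(W_{i,j}) \le \E\!\left[(u^{(i)}_{Y_i^j})^2\right] = \sum_{a}\mu_i^a (u^{(i)}_a)^2 \le \|u^{(i)}\|_\infty^2 \le O(\omega^2).
\end{equation}

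Applying Bernstein's inequality to the $kN$ independent summands $\{W_{i,j}\}$ at deviation level $kN\omega t$, and observing that $\frac{1}{N}\sum_i Z_i = \frac{1}{kN}\sum_{i,j} W_{i,j}$, yields
\begin{equation}
	\Pr\!\left[\Bigl|\tfrac{1}{N}\textstyle\sum_{i}(\mu_i-\barmu)^{\top}\Sigma(X_i-\mu_i)\Bigr| > \omega t\right] \le 2\exp\!\left(-\Omega\!\left(\tfrac{kNt^2}{1+t}\right)\right).
\end{equation}
In the relevant regime $t = O(1)$ (which is all that is ever invoked: Condition~\ref{epsgood:weird} only needs $t\lesssim \sqrt{\log(1/\epsilon)/k}$), the denominator is $\Theta(1)$ and the right-hand side is $2\exp(-\Omega(kNt^2))$. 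A union bound over $\calN$ then supplies the extra factor of $|\calN|$. The main technical point is the decomposition into the $kN$ bounded variables $W_{i,j}$; once that is in place, the rest is a routine Bernstein estimate, mirroring the strategy already used in the proofs of Lemmas~\ref{lem:apply_bernstein1} and~\ref{lem:apply_bernstein2}, only simpler here since the linear (rather than quadratic) dependence on $X_i-\mu_i$ means we do not need to decouple products of the form $(X_i-\mu_i)_a(X_i-\mu_i)_b$.
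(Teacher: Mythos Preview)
Your argument is correct and in fact more elementary than the paper's. The paper does not apply Bernstein directly; instead it first proves a sub-Gaussian moment bound (Lemma~\ref{lem:subgauss}) for the batch-level quantities $Z_i=(\mu_i-\barmu)^{\top}\Sigma(X_i-\mu_i)$ by expanding the $r$-th moment and counting tuples (Fact~\ref{fact:combo}), obtaining $\E[Z_i^{2r}]\le O(r\omega^2/k)^r$, and then invokes the generic sub-Gaussian tail bound (Fact~\ref{fact:subgauss_tail}) across the $N$ batches. Your route bypasses the moment computation entirely by splitting $Z_i$ into $k$ independent bounded summands $W_{i,j}$ and applying a concentration inequality to the full collection of $kN$ terms. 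This is cleaner for the present lemma precisely because, as you note, the dependence on $X_i-\mu_i$ is linear; the paper's moment-counting machinery is only genuinely needed for the quadratic Lemmas~\ref{lem:apply_bernstein1} and~\ref{lem:apply_bernstein2}, where no such nice per-draw decomposition is available.

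One small cleanup: since each $W_{i,j}$ is bounded by $O(\omega)$, Hoeffding's inequality already gives $2\exp(-\Omega(kNt^2))$ for all $t>0$, with no $1/(1+t)$ factor. Using Bernstein and then invoking ``$t=O(1)$ in applications'' is fine in context but unnecessary; Hoeffding delivers the statement exactly as written.
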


Note that if $\calN$ consisted solely of matrices of the form $vv^{\top}$ for $v\in\{\pm 1\}^n$, these lemmas would follow straightforwardly from standard binomial tail bounds. Instead, we only have entrywise bounds for the matrices in $\calN$ and will therefore need to compute moment estimates from scratch in order to prove Lemmas~\ref{lem:apply_bernstein1} and \ref{lem:apply_bernstein2}. We defer the details of this to Appendix~\ref{app:subexp}.

Lastly, we will need the following elementary consequence of Stirling's formula:

\begin{fact}
	For any $m\ge 1$, $\log\binom{m}{\epsilon m} \le 2m\cdot \epsilon\log 1/\epsilon$.\label{fact:stirling}
\end{fact}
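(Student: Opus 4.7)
The plan is to derive the bound directly from the standard inequality $\binom{m}{k} \le (em/k)^k$, which itself follows from Stirling's formula (or, more elementarily, from the estimate $k! \ge (k/e)^k$ applied to $\binom{m}{k} = \frac{m(m-1)\cdots(m-k+1)}{k!} \le \frac{m^k}{k!}$). Specializing to $k = \epsilon m$ gives
\begin{equation}
  \binom{m}{\epsilon m} \;\le\; \left(\frac{e}{\epsilon}\right)^{\epsilon m}.
\end{equation}

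Taking natural logarithms, this immediately yields
\begin{equation}
  \log\binom{m}{\epsilon m} \;\le\; \epsilon m \cdot \log(e/\epsilon) \;=\; \epsilon m + \epsilon m \log(1/\epsilon).
\end{equation}
The next step is to absorb the additive term $\epsilon m$ into the target bound by noting that in the regime in which this fact is applied (i.e., $\epsilon$ bounded away from $1$ by an absolute constant, in particular $\epsilon \le 1/e$, which is the only case used in Appendix~\ref{sec:conc}), we have $\log(1/\epsilon) \ge 1$, so $\epsilon m \le \epsilon m \log(1/\epsilon)$. Substituting into the previous inequality gives
\begin{equation}
  \log\binom{m}{\epsilon m} \;\le\; 2\,\epsilon m \log(1/\epsilon),
\end{equation}
as claimed.

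There is really no obstacle here: the estimate is a textbook consequence of Stirling, and the factor $2$ in the statement is precisely the slack needed to absorb the additive $\epsilon m$ term coming from $\log e = 1$. If one wished to handle the full range $\epsilon \in (0,1)$ uniformly, one could instead start from the tighter binary-entropy bound $\log\binom{m}{\epsilon m} \le m\,H(\epsilon)$ with $H(\epsilon) = \epsilon\log(1/\epsilon) + (1-\epsilon)\log(1/(1-\epsilon))$ and use $(1-\epsilon)\log(1/(1-\epsilon)) \le \epsilon$ together with the same observation $\log(1/\epsilon) \ge 1$; but since the fact is only invoked in settings where the corruption fraction $\epsilon$ is a small absolute constant, the cleaner derivation above suffices.
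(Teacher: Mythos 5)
Your proposal is correct and matches what the paper intends: the paper states Fact~\ref{fact:stirling} without proof as an ``elementary consequence of Stirling's formula,'' and your derivation via $\binom{m}{\epsilon m}\le (e/\epsilon)^{\epsilon m}$, absorbing the additive $\epsilon m$ using $\log(1/\epsilon)\ge 1$, is exactly that standard argument, valid in the regime $\epsilon\le 1/e$ in which the fact is actually invoked (the main theorem assumes $\epsilon$ is a small constant). One small caveat about your closing remark: the inequality cannot hold ``uniformly for $\epsilon\in(0,1)$'' by any argument, since it fails for $\epsilon$ close to $1$ (e.g.\ $\epsilon=0.9$ and $m$ large, where $\log\binom{m}{0.9m}\approx 0.325\,m$ exceeds $2m\cdot 0.9\log(1/0.9)\approx 0.19\,m$), so some restriction on $\epsilon$ is inherent to the statement --- but this does not affect its use in the paper.
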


\subsection{Proof of Lemma~\ref{lem:conc}}

We are now ready to prove that the four conditions for $\epsilon$-goodness hold for a set $U$ of independent draws from $\Mul_k(\mu_1),...,\Mul_k(\mu_{|U|})$ respectively, of size \begin{equation}
	|U| = \widetilde{\Omega}\left(\log(1/\delta) (\ell^2/\epsilon^2) \cdot \log^3(n) \right).\label{eq:SG}
\end{equation}

\begin{proof}[Proof of Lemma~\ref{lem:conc}]
	As $\sosnorm{\cdot}$ is defined as a supremum over $\calK$, we will reduce controlling the infinitely many directions in $\calK$ to controlling a finite net of such directions by invoking Lemma~\ref{lem:net}. Specifically, recall that for any $\Sigma\in\calK$, by Lemma~\ref{lem:net}, there is some $\tilde{\Sigma} = \sum_{\nu}\alpha_{\nu}\Sigma^*_{\nu}$ such that $\Sigma^*_{\nu}\in\calN$ and $\norm{\Sigma - \tilde{\Sigma}}_F\le \eta$.

	(\textbf{Condition~\ref{epsgood:mean_conc}}) By Lemma~\ref{lem:apply_bernstein2}, with probability at least $1 - 2|\calN|\exp\left(-\Omega(\frac{N^2 k^2 t^2}{1 + Nkt})\right)$, we have that for all $\Sigma\in\calK$, \begin{align}
		\left\langle (\mu(U) - \barmu)^{\otimes 2},\Sigma\right\rangle &\le \left\langle (\mu(U) - \barmu)^{\otimes 2},\tilde{\Sigma}\right\rangle + \norm{\mu(U) - \barmu}^2_2\cdot \norm{\Sigma - \tilde{\Sigma}}_F \\&\le \left\langle (\mu(U) - \barmu)^{\otimes 2},\tilde{\Sigma}\right\rangle + 2\eta \\
		&= \sum_{\nu}\alpha_{\nu}\left\langle (\mu(U) - \barmu)^{\otimes 2}, \Sigma^*_{\nu}\right\rangle + 2\eta \\
		&\le \frac{1}{N}\sum^N_{i=1}\E\left[\left\langle (X - \mu_i)^{\otimes 2}, \Sigma^*_{\nu}\right\rangle\right] + \sum_{\nu}\alpha_{\nu}\cdot t + 2\eta \\
		&\le O(1/k|U|) + t + 2\eta,\label{eq:usenet2}
	\end{align}
	where the first step follows by Cauchy-Schwarz and triangle inequality, the second step follows by the trivial bound $\norm{\mu(U) - \mu_i}^2_2 \le 2$ and the bound on $\norm{\Sigma - \tilde{\Sigma}}_F$ guaranteed by Lemma~\ref{lem:net}, the fourth step holds with the claimed probability by Lemma~\ref{lem:apply_bernstein2} and the fact that $\norm{\Sigma^*_{\nu}}_{\max} \le O(1)$ for all $\nu$ by the guarantees of Lemma~\ref{lem:net}, and the last step follows by the bound on $\sum \alpha_{\nu}$ by the guarantees of Lemma~\ref{lem:net}, as well as the moment bound in Lemma~\ref{lem:subexp} applied to $r = 1$.

	If $|U|$ satisfies \eqref{eq:SG} and $\eta, t = O(\frac{\epsilon^2}{k}\log 1/\epsilon)$, the first part of Condition~\ref{epsgood:mean_conc} holds.

	For the second part, by the steps leading to \eqref{eq:usenet2}, a union bound over the $\binom{|U|}{\epsilon|U|}$ subsets $W$ and Fact~\ref{fact:stirling}, with probability at least \begin{equation}
		1 - 2\exp(2|U|\cdot \epsilon\log 1/\epsilon)\cdot|\calN|\exp\left(-\Omega\left(\frac{\epsilon^2|U|^2k^2t^2}{1 + \epsilon |U| k t}\right)\right)
	\end{equation}
	we have that $\sosnorm{\mu(W) - \barmu_W}^2 \le O\left(\frac{1}{\epsilon k|U|}\right) + t+ 2\eta$ for all $W$. Note that $2\log 1/\epsilon \le O\left(\frac{\epsilon|U|^2k^2 t^2}{1 + \epsilon|U|kt}\right)$ provided $t = \Omega\left(\frac{\log 1/\epsilon}{k}\right)$, so if $|U|$ satisfies \eqref{eq:SG} and $\eta = O(\frac{\log 1/\epsilon}{k})$, the second part of Condition~\ref{epsgood:mean_conc} holds.

	(\textbf{Condition~\ref{epsgood:var_conc}}) For the first part, let $\hat{\vec{M}}\triangleq M(\hat{w}(U),\{\mu_i\}_{i\in U})$. By Lemma~\ref{lem:apply_bernstein1}, with probability at least $1 - 2|\calN|\exp\left(-\Omega\left(\frac{|U|k^2t^2}{1 + kt}\right)\right)$, we have that for all $\Sigma\in\calK$,
	\begin{align}
		\langle \hat{\vec{M}},\Sigma\rangle &\le \langle \hat{\vec{M}},\tilde{\Sigma}\rangle + \norm{\hat{\vec{M}}}_F \cdot \norm{\Sigma - \tilde{\Sigma}}_F \\
		&\le \langle \hat{\vec{M}},\tilde{\Sigma}\rangle + 3\eta\\
		&\le \sum_{\nu}\alpha_{\nu}\langle \hat{\vec{M}},\Sigma^*_{\nu}\rangle + 3\eta \\
		&\le \sum_{\nu}\alpha_{\nu}\cdot t + 3\eta \\
		&\le t + 3\eta\label{eq:usenet}
	\end{align} where the first step follows by Cauchy-Schwarz and triangle inequality, and the second step follows by Lemma~\ref{lem:frob_bound} and the bound on $\norm{\Sigma - \tilde{\Sigma}}_F$ guaranteed by Lemma~\ref{lem:net}, the fourth step holds with the claimed probability by Lemma~\ref{lem:apply_bernstein1} and the fact that $\norm{\Sigma^*_{\nu}}_{\max} \le O(1)$ for all $\nu$ by the guarantees of Lemma~\ref{lem:net}, and the last step follows by the bound on $\sum \alpha_{\nu}$ by the guarantees of Lemma~\ref{lem:net}.

	If $|U|$ satisfies \eqref{eq:SG}, $\eta = O\left(\frac{\epsilon}{k}\log 1/\epsilon\right)$, $t= O\left(\frac{\epsilon}{k}\log 1/\epsilon\right)$, the first part of Condition~\ref{epsgood:var_conc} holds.

	For the second part, first note that it is slightly different from the first part because we do not subtract out $B(\barmu)$, the reason being that $\sosnorm{B(\barmu)}\le O(1/k) = o(\frac{\log 1/\epsilon}{k})$, so this term is negligible. By the steps leading to \eqref{eq:usenet}, a union bound over the $\binom{|U|}{\epsilon|U|}$ subsets $W$, and Fact~\ref{fact:stirling}, with probability at least \begin{equation}1 - 2|\calN|\exp(2\epsilon|U|\log 1/\epsilon)\cdot\exp\left(-\Omega\left(\frac{\epsilon|U|k^2 t^2}{1+kt}\right)\right),\end{equation} we have that $\sosnorm{M(\hat{w}(W),\{\mu_i\}_{i\in W})} \le t + 3\eta$ for all $W$. Note that $2\log 1/\epsilon \le O\left(\frac{k^2t^2}{1 + kt}\right)$ provided $t = \Omega\left(\frac{\log 1/\epsilon}{k}\right)$, so if $|U|$ satisfies \eqref{eq:SG} and $\eta = O\left(\frac{\log 1/\epsilon}{k}\right)$, the second part of Condition~\ref{epsgood:var_conc} holds.

	(\textbf{Condition~\ref{epsgood:proxy_conc}}) First note that \begin{equation}B(\{\mu_i\}) - B(\barmu) = \frac{1}{|U|}\sum_{i\in U}\frac{1}{k}\left(\diag(\mu_i - \barmu) - (\mu_i^{\otimes 2} - \barmu^{\otimes 2})\right) = -\frac{1}{|U|}\sum_{i\in U}\frac{1}{k}(\mu_i^{\otimes 2} - \barmu^{\otimes 2}).\end{equation} Also note that \begin{equation}\left\langle \Sigma,\frac{1}{|U|}\sum_{i\in U}(\mu^{\otimes 2}_i - \barmu^{\otimes 2})\right\rangle = \frac{1}{|U|}\sum_{i\in U}\left\langle (\mu_i - \barmu)^{\otimes 2},\Sigma\right\rangle \le \max_i \norm{\mu_i - \barmu}^2_1 \le \omega^2,\end{equation} where in the last step we used Fact~\ref{fact:naive_sos}. So $\sosnorm{B(\{\mu_i\}) - B(\barmu)} \le \omega^2/k$. 

	It remains to bound $\sosnorm{B(\hat{\mu}(U)) - B(\barmu)}$. As we only need to show extremely mild concentration here, we will not make an effort to obtain tight bounds. Note that by \eqref{eq:Bdef}, \begin{equation}
		\left|\langle \Sigma,B(\hat{\mu}(U)) - B(\barmu)\rangle\right| \le \frac{1}{k}\left|\langle \diag(\hat{\mu}(U) - \barmu), \Sigma \rangle\right| + \frac{1}{k}\left|\langle \hat{\mu}(U)^{\otimes 2} - \barmu^{\otimes 2},\Sigma\rangle\right|.\label{eq:SigmaB}
	\end{equation} We have \begin{align}
		\langle \diag(\hat{\mu}(U) - \barmu), \Sigma \rangle &\le \sum_{\nu}\alpha_{\nu}\langle \diag(\hat{\mu}(U) - \barmu), \Sigma^*_{\nu} \rangle + \norm{\Sigma - \tilde{\Sigma}}_F \cdot \norm{\hat{\mu}(U) - \barmu}_2 \\
		&\le \sum_{\nu}\alpha_{\nu}\langle \hat{\mu}(U) - \barmu, \diag(\Sigma^*_{\nu}) \rangle + O(\eta). \label{eq:diagbound}
	\end{align} Note that for any $\nu$, $\langle \hat{\mu}(U) - \barmu, \diag(\Sigma^*_{\nu})\rangle = \frac{1}{|U|}\sum_{i\in U}Z^{\nu}_i$ for $Z^{\nu}_i\triangleq \langle X_i - \mu_i, \diag(\Sigma^*_{\nu})$. These are independent, mean-zero, $O(1)$-bounded random variables, so by Hoeffding's, for any fixed $\nu$ we have that $|\langle \hat{\mu}(U) - \barmu, \diag(\Sigma^*_{\nu})\rangle| \le t$ with probability at least $1 - 2\exp(-\Omega(|U|t^2))$. If we union bound over $\calN$, then by taking $\eta, t = O(\epsilon)$, and $|U|$ satisfying \eqref{eq:SG}, \eqref{eq:diagbound} will be at most $O(\epsilon)$.

	We also have that \begin{align}
		\left|\langle \hat{\mu}(U)^{\otimes 2} - \barmu^{\otimes 2},\Sigma\rangle\right| &= \left|\langle (\hat{\mu}(U) - \barmu)^{\otimes 2},\Sigma\rangle - 2\barmu^{\top}\Sigma(\hat{\mu}(U) - \barmu)\right| \\
		&\le O\left(\frac{\epsilon^2\log 1/\epsilon}{k}\right) + 2\left|\barmu^{\top}\Sigma(\hat{\mu}(U) - \barmu)\right|,\label{eq:nondiag_bound}
	\end{align} where the second step follows by the first part of this lemma. For the other term, we have \begin{align}
		\barmu^{\top}\Sigma(\hat{\mu}(U) - \barmu) &\le \sum_{\nu}\alpha_{\nu}\barmu^{\top}\Sigma^*_{\nu}(\hat{\mu}(U) - \barmu) + \norm{\Sigma - \tilde{\Sigma}}_F \cdot \norm{\barmu}_2\cdot\norm{\hat{\mu}(U) - \barmu}_2 \\
		&\le \sum_{\nu}\alpha_{\nu}\barmu^{\top}\Sigma^*_{\nu}(\hat{\mu}(U) - \barmu) + O(\eta).\label{eq:nondiag_bound2}
	\end{align} For any $\nu$, $\barmu^{\top}\Sigma^*_{\nu}(\hat{\mu}(U) - \barmu) = \frac{1}{|U|}\sum_{i\in U}W^{\nu}_i$ for $W^{\nu}_i\triangleq \barmu^{\top}\Sigma^*_{\nu}(X_i - \mu_i)$. These are independent, mean-zero, $O(1)$-bounded random variables, so by Hoeffding's, for any fixed $\nu$, we have that $|\barmu^{\top}\Sigma(\hat{\mu}(U) - \barmu)| \le t$ with probability at least $1 - 2\exp(-\Omega(|U|t^2))$. If we union bound over $\calN$, then by taking $\eta,t = O(\epsilon)$ and $|U|$ satisfying \eqref{eq:SG} again, \eqref{eq:nondiag_bound2} and thus \eqref{eq:nondiag_bound} will be at most $O(\epsilon)$.

	By \eqref{eq:SigmaB}, we thus conclude that $\sosnorm{B(\hat{\mu}(U)) - B(\barmu)} \le O(\epsilon/k)$ as claimed.

	(\textbf{Condition~\ref{epsgood:weird}}) By Lemma~\ref{lem:apply_bernstein3}, with probability at least $1 - 2|\calN| \exp\left(-\Omega\left(k|U|t^2\right)\right)$, we have that for all $\Sigma\in \calK$,\begin{align}
		\MoveEqLeft \frac{1}{|U|}\sum_{i\in U}(\mu_i - \barmu)^{\top}\Sigma (X_i - \mu_i) \\
		&\le \frac{1}{|U|}\sum_{i\in U}(\mu_i - \barmu)^{\top}\tilde{\Sigma}(X_i - \mu_i) + \frac{1}{|U|}\sum_{i\in U}\norm{\Sigma - \tilde{\Sigma}}_F \cdot\norm{\mu_i - \barmu}_2\cdot \norm{X_i - \mu_i}_2 \\
		&\le \sum_{\nu}\alpha_{\nu}\cdot \frac{1}{|U|}\sum_{i\in U}(\mu_i - \barmu)^{\top}\Sigma^*_{\nu}(X_i - \mu_i) + 2\omega\cdot\eta \\
		&\le \sum_{\nu}\alpha_{\nu}\cdot t + 2\omega\cdot\eta \\
		&\le \omega\cdot t + 2\omega\cdot\eta \label{eq:lead_to}
	\end{align} where the first step follows by triangle inequality and Cauchy-Schwarz, the second step follows by the bound on $\norm{\Sigma - \tilde{\Sigma}}_F$ guaranteed by Lemma~\ref{lem:net} and the assumption that $\norm{\mu_i - \barmu}_2 \le \omega$, and the third step holds with the claimed probability by Lemma~\ref{lem:apply_bernstein3} and the fact that $\norm{\Sigma^*_{\nu}}_{\max}\le O(1)$ for all $\nu$ by Lemma~\ref{lem:net}, and the last step follows by the bound on $\sum \alpha_{\nu}$ by the guarantees of Lemma~\ref{lem:net}. If $|U|$ satisfies \eqref{eq:SG} and $\eta, t = O\left(\frac{\epsilon\sqrt{\log 1/\epsilon}}{\sqrt{k}}\right)$, the first part of Condition~\ref{epsgood:weird} holds.

	For the second part, by the steps leading to \eqref{eq:lead_to}, a union bound over $W$, and Fact~\ref{fact:stirling}, with probability at least \begin{equation}1 - 2|\calN|\exp(2\epsilon|U|\log 1/\epsilon)\cdot \exp\left(-\Omega\left(\epsilon k|U|t^2\right)\right),\end{equation} we have that $\frac{1}{|W|}\sum_{i\in W}(\mu_i - \barmu)^{\top}\Sigma (X_i - \mu_i) \le \omega\cdot t + 2\omega\cdot \eta$ for all $W$.

	Note that $2\log 1/\epsilon \le O(kt^2)$ provided $t = \Omega\left(\frac{\sqrt{\log 1/\epsilon}}{\sqrt{k}}\right)$, so if $|U|$ satisfies \eqref{eq:SG} and $\eta = O\left(\frac{\sqrt{\log 1/\epsilon}}{\sqrt{k}}\right)$, the second part of Condition~\ref{epsgood:weird} holds.
\end{proof}


\section{Netting Over \texorpdfstring{$\calK$}{K}}
\label{app:net}

In this section we prove Lemma~\ref{lem:net}, restated here for convenience:

\net*

As alluded to in Remark~\ref{remark:tight} and Appendix~\ref{sec:conc}, we will use the extra Constraints~\ref{constraint:L2} and \ref{constraint:Linf} in the definition of $\calK$ to tighten the proof of Lemma 6.9 from \cite{chen2019efficiently} to obtain Lemma~\ref{lem:net} above.

The following well-known trick will be useful.

\begin{lemma}[``Shelling'']
	If $v\in\R^m$ satisfies $\norm{v}_{2}\le C$ and $\norm{v}_{1} = C\cdot \sqrt{k}$, then there exist $k$-sparse vectors $v[1],...,v[m/k]$ with disjoint supports for which 1) $v = \sum^{m/k}_{i=1}v[i]$, 2) $\sum^{m/k}_{i=1}\norm{v[i]}_{2} \le 2C$, and 3) $\sum^{m/k}_{i=1}\norm{v[i]}_{\infty} \le \frac{1}{k}\norm{v}_1 + \norm{v}_{\infty}$.\label{lem:shelling}
\end{lemma}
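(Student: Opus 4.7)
}
The plan is the standard ``peeling'' or shelling decomposition: sort the entries of $v$ by magnitude and chop the sorted sequence into consecutive blocks of $k$ coordinates each. Concretely, I would let $\pi$ be a permutation of $[m]$ with $|v_{\pi(1)}| \ge |v_{\pi(2)}| \ge \cdots \ge |v_{\pi(m)}|$, and for each $i\in[m/k]$ define $v[i]$ to be the restriction of $v$ to the coordinates $\{\pi((i-1)k+1),\ldots,\pi(ik)\}$ (padding with zero coordinates if $m$ is not divisible by $k$, which is harmless). Property $(1)$ and the disjoint-supports property are immediate from this construction, and each $v[i]$ is $k$-sparse by design.

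For property $(3)$, the key observation is that for $i\ge 2$ every coordinate of $v[i]$ is, in absolute value, bounded above by every coordinate of $v[i-1]$ (since we took the coordinates in decreasing order of magnitude). In particular $\|v[i]\|_\infty \le \tfrac{1}{k}\|v[i-1]\|_1$, because the minimum magnitude in $v[i-1]$ is at most the average magnitude there. Summing,
\begin{equation}
\sum_{i=1}^{m/k}\|v[i]\|_\infty \;\le\; \|v[1]\|_\infty + \sum_{i\ge 2}\tfrac{1}{k}\|v[i-1]\|_1 \;\le\; \|v\|_\infty + \tfrac{1}{k}\|v\|_1,
\end{equation}
which is exactly $(3)$.

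For property $(2)$, I would use the same monotonicity to bound $\|v[i]\|_2$ for $i\ge 2$: since $v[i]$ is $k$-sparse and $\|v[i]\|_\infty \le \tfrac{1}{k}\|v[i-1]\|_1$, we get
\begin{equation}
\|v[i]\|_2 \;\le\; \sqrt{k}\,\|v[i]\|_\infty \;\le\; \tfrac{1}{\sqrt{k}}\,\|v[i-1]\|_1.
\end{equation}
Summing over $i\ge 2$ and using that the $v[i-1]$'s have disjoint supports inside $v$,
\begin{equation}
\sum_{i\ge 2}\|v[i]\|_2 \;\le\; \tfrac{1}{\sqrt{k}}\sum_{i\ge 1}\|v[i-1]\|_1 \;\le\; \tfrac{1}{\sqrt{k}}\,\|v\|_1 \;=\; C,
\end{equation}
using the hypothesis $\|v\|_1 = C\sqrt{k}$. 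Combining with $\|v[1]\|_2 \le \|v\|_2 \le C$ yields $\sum_i \|v[i]\|_2 \le 2C$, giving $(2)$.

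There isn't really a hard step here; the only mild subtlety is making sure the ``$\|v[i]\|_\infty \le \tfrac{1}{k}\|v[i-1]\|_1$'' inequality is applied only for $i\ge 2$ (so that $\|v[1]\|_\infty$, resp.\ $\|v[1]\|_2$, is handled separately by the trivial bound $\|v\|_\infty$, resp.\ $\|v\|_2\le C$), which is precisely how the extra $\|v\|_\infty$ term in $(3)$ and the factor of $2$ in $(2)$ arise.
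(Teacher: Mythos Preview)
Your proposal is correct and follows essentially the same approach as the paper: the paper also sorts coordinates by magnitude into blocks $B_1,\ldots,B_{m/k}$ of size $k$, uses the same key inequality $\|v[i]\|_\infty \le \tfrac{1}{k}\|v[i-1]\|_1$ for $i\ge 2$, and derives (2) and (3) exactly as you do (handling $v[1]$ separately via $\|v[1]\|_2\le\|v\|_2$ and $\|v[1]\|_\infty\le\|v\|_\infty$). The only cosmetic difference is that the paper bounds $\|v[i]\|_2^2 \le k\cdot\tfrac{1}{k^2}\|v[i-1]\|_1^2$ directly rather than passing through $\sqrt{k}\,\|v[i]\|_\infty$, which is of course equivalent.
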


\begin{proof}
	Assume without loss of generality that $C = 1$. Letting $B_1\subset[m]$ be the indices of the $k$ largest entries of $v$ in absolute value, $B_2$ those of the next $k$ largest, etc., we can write $[m] = B_1\sqcup\cdots\sqcup B_{m/k}$. For $i\in[m/k]$, define $v[i]\in\R^m$ to be the restriction of $v$ to the coordinates indexed by $B_i$. For any $i$ and $j\in B_i$, $|v_j| \le \frac{1}{k}\norm{v[i-1]}_1$. This immediately implies that \begin{equation}
		\sum^{m/k}_{i=1}\norm{v[i]}_{\infty} \le \norm{v}_{\infty} + \frac{1}{k}\sum^{m/k}_{i=1}\norm{v[i]}_1,
	\end{equation} yielding 3) above. Likewise, it implies that \begin{equation}
		\norm{v[i]}^2_2 = \sum_{j\in B_i}v_j^2 \le k\cdot \frac{1}{k^2}\cdot\norm{v[i-1]}_1^2 = \frac{1}{k}\norm{v[i-1]}^2_1.
	\end{equation} So $\norm{v[i]}_2 \le \norm{v[i-1]}_1/\sqrt{k}$ and thus \begin{equation}
		\sum^{m/k}_{i=1}\norm{v[i]}_2 \le \norm{v[1]}_2 + \frac{1}{\sqrt{k}}\norm{v}_1 \le 2,
	\end{equation} giving 2) above.
\end{proof}

By rescaling the entries of $v$ in Lemma~\ref{lem:shelling}, we immediately get the following extension to Haar-weighted norms:

\begin{corollary}
	If $v\in\R^m$ satisfies $\norm{v}_{2;\vec{h}}\le C$ and $\norm{v}_{1;\vec{h}} = C\cdot \sqrt{k}$, then there exist $k$-sparse vectors $v_1,...,v_{m/k}$ with disjoint supports for which 1) $v = \sum^{m/k}_{i=1}v_i$, 2) $\sum^{m/k}_{i=1}\norm{v_i}_{2;\vec{h}} \le 2C$, and 3) $\sum^{m/k}_{i=1}\norm{v[i]}_{\infty;\vec{h}} \le \frac{1}{k}\norm{v}_{1;\vec{h}} + \norm{v}_{\infty;\vec{h}}$.\label{cor:shelling_reweighted}
\end{corollary}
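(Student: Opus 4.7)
The plan is to reduce directly to Lemma~\ref{lem:shelling} by a coordinate-wise rescaling. Define $w \in \R^m$ by $w_a \triangleq \haar{a} \cdot v_a$ for every $a \in [m]$. By the definition of the Haar-weighted norms in Section~\ref{subsec:haar}, we have $\norm{w}_2 = \norm{v}_{2;\vec{h}} \le C$ and $\norm{w}_1 = \norm{v}_{1;\vec{h}} = C\sqrt{k}$, so $w$ satisfies the hypotheses of Lemma~\ref{lem:shelling}.

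Next, I would apply Lemma~\ref{lem:shelling} to $w$ to obtain $k$-sparse vectors $w[1], \ldots, w[m/k]$ with disjoint supports such that $w = \sum_{i=1}^{m/k} w[i]$, $\sum_i \norm{w[i]}_2 \le 2C$, and $\sum_i \norm{w[i]}_\infty \le \frac{1}{k}\norm{w}_1 + \norm{w}_\infty$. I would then pull these back through the Haar weights by defining $v[i] \in \R^m$ via $v[i]_a \triangleq w[i]_a / \haar{a}$ for every $a \in [m]$. Since the Haar weights $\haar{a}$ are strictly positive, each $v[i]$ has the same support as $w[i]$, so the $v[i]$ are $k$-sparse with pairwise disjoint supports. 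Moreover, summing coordinate-wise gives $\sum_i v[i]_a = \frac{1}{\haar{a}} \sum_i w[i]_a = \frac{w_a}{\haar{a}} = v_a$, establishing part 1).

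For parts 2) and 3), the key observation is that by construction, $\norm{v[i]}_{p;\vec{h}} = \norm{w[i]}_p$ for every $p \in \{1, 2, \infty\}$ and every $i$, since the reweighting cancels exactly. Consequently $\sum_i \norm{v[i]}_{2;\vec{h}} = \sum_i \norm{w[i]}_2 \le 2C$, yielding part 2), and $\sum_i \norm{v[i]}_{\infty;\vec{h}} = \sum_i \norm{w[i]}_\infty \le \frac{1}{k}\norm{w}_1 + \norm{w}_\infty = \frac{1}{k}\norm{v}_{1;\vec{h}} + \norm{v}_{\infty;\vec{h}}$, yielding part 3).

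There is no real obstacle here; the only thing to verify carefully is that the shelling procedure in Lemma~\ref{lem:shelling} (which sorts coordinates by absolute value before partitioning into blocks of size $k$) operates on $w$, meaning the sorting is with respect to the \emph{Haar-weighted} magnitudes $\haar{a}|v_a|$ rather than the raw magnitudes $|v_a|$. This is exactly what we want, since the three norms in the conclusion are all Haar-weighted and the bounds in Lemma~\ref{lem:shelling} are proved by comparing each block to the $\ell^1$ mass of the previous block under the same ordering.
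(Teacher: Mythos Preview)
Your proposal is correct and is precisely the approach the paper indicates: the paper's entire proof is the one-line remark ``By rescaling the entries of $v$ in Lemma~\ref{lem:shelling}, we immediately get the following extension,'' and you have spelled out exactly this rescaling argument.
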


We remark that whereas in \cite{chen2019efficiently}, shelling was applied to the unweighted $L_1,L_2$ norms, and the only $L_2$ information used about $v\in\calV^n_{\ell}$ was that $\norm{v}^2_2 = n$, in the sequel we will shell under the Haar-weighted norms and use the refined bounds on the Haar-weighted norms given by Constraints~\ref{constraint:L2} and \ref{constraint:Linf} from Definition~\ref{defn:calK}. This will be crucial to getting a net of size exponential in $\ell^2$ rather than just $\poly(\ell)$.

We now complete the proof of Lemma~\ref{lem:net}.

\begin{proof}[Proof of Lemma~\ref{lem:net}]
	Let $s = \ell\log n + 1$, and let $m = \log n$. Let $\mathcal{N}'$ be an $O\left(\frac{\eta}{n\cdot s^2}\right)$-net in Frobenius norm for all $s^2$-sparse $n\times n$ matrices of unit Frobenius norm. Because $\S^{s^2 - 1}$ has an $O\left(\frac{\eta}{n\cdot s^2}\right)$-net in $L_2$ norm of size $O(n\cdot s^2/\eta)^{s^2}$, by a union bound we have that \begin{equation}
		|\mathcal{N}'| \le \binom{n^2}{s^2}\cdot O(n\cdot s^2/\eta)^{s^2} = O(n^3\ell^2\log^2 n/\eta)^{s^2}
	\end{equation}

	Take any $\Sigma\in \calK$ and consider $\vec{L}\triangleq H\Sigma H^{\top}$. 
	By Constraints~\ref{constraint:L1}, \ref{constraint:L2}, \ref{constraint:Linf} in Definition~\ref{defn:calK}, \begin{equation}
		\norm{\vec{L}}_{1,1;\vec{h}}\le s^2, \qquad \norm{\vec{L}}_{F;\vec{h}}^2\le s^2, \qquad \text{and} \qquad \norm{\vec{L}}_{\max;\vec{h}} \le 1. \label{eq:Lconstraints}
	\end{equation} We can use the first two of these and apply Corollary~\ref{cor:shelling_reweighted} to the $n^2$-dimensional vector $\vec{L}$ to conclude that $\vec{L} = \sum_j \vec{L}^j$ for some matrices $\{\vec{L}^j\}_j$ of sparsity at most $s^2$ and for which $\sum_j \norm{\vec{L}^j}_{F;\vec{h}}\le 2s^2$ and $\sum_j \norm{\vec{L}^j}_{\max;\vec{h}} \le \frac{1}{s^2}\norm{\vec{L}^j}_{1,1;\vec{h}} + \norm{\vec{L}^j}_{\max;\vec{h}}$.

	By definition of the Haar-weighted Frobenius norm, $\norm{\vec{L}^j}_F \le n\cdot\norm{\vec{L}^j}_{F,\mu}$, so \begin{equation}
		\sum_{j}\norm{\vec{L}^j}_F \le O(n\cdot s^2).
	\end{equation}
	For each $\vec{L}^j$, there is some $(\vec{L}')^j\in\calN'$ such that for $\tilde{\vec{L}}^j \triangleq \norm{\vec{L}^j}_F\cdot (\vec{L}')^j$,
	\begin{equation}
		\norm{\vec{L}^j - \tilde{\vec{L}}^j}_F \le O\left(\frac{\eta}{n\cdot s^2}\right)\norm{\vec{L}^j}_F.
		\label{eq:net_diff_moment}
	\end{equation} We conclude that if we define $\tilde{\vec{L}} \triangleq \sum_{j} \tilde{\vec{L}}^j$, then $\norm{\vec{L} - \tilde{\vec{L}}}_F \le \eta$.

	Now let $\calN \triangleq H^{-1}\calN^{-1}(H^{-1})^{\top}$. As $\Sigma = H^{-1}\vec{L}(H^{-1})^{\top}$ and $H^{-1}$ is an isometry, if we define $\tilde{\Sigma}^j \triangleq H^{-1}\tilde{\vec{L}}^j(H^{-1})^{\top}$ and $\tilde{\Sigma}\triangleq \sum_{j} \tilde{\Sigma}^j$, then we likewise get that $\norm{\Sigma - \tilde{\Sigma}}_F \le \eta$, and clearly $\tilde{\Sigma}^j\in\P\calN$ for every $j$, concluding the proof of part 1) of the lemma.

	For each $\tilde{\Sigma}^j$, define \begin{equation}
		\alpha_j \triangleq \norm{\vec{L}^j}_{\max;\vec{h}}/2 \label{eq:Mstardef}
	\end{equation} and define $\Sigma^j_* \triangleq \tilde{\Sigma}^j/\alpha_j$ so that $\tilde{\Sigma} = \sum_{j,\sigma,\tau}\alpha_j \cdot \Sigma^j_*$. Note that by part 3) of Corollary~\ref{cor:shelling_reweighted} and \eqref{eq:Lconstraints}, \begin{align}
		\sum_{j} \alpha_j &= \frac{1}{2}\sum_{j} \norm{\vec{L}^j}_{\max;\vec{h}} \\
		&\le \frac{1}{2}\frac{1}{s^2}\norm{\vec{L}}_{1,1;\vec{h}} + \norm{\vec{L}}_{\max;\vec{h}} \le 1
	\end{align} where in the last step we used the fact that $\norm{\vec{L}}_{1,1;\vec{h}} \le s^2$ and $\norm{\vec{L}[\sigma,\tau]}_{\max;\vec{h}} \le 1$. This concludes the proof of part 2) of the lemma.

	Finally, we need to bound $\norm{\Sigma^j_*}_{\max}$. Note first that for any matrix $\vec{J}$ supported only on a submatrix consisting of entries of $\vec{L}$ from the rows $i$ (resp. columns $j$) for which $i\in T_{\sigma}$ (resp. $j \in T_{\tau}$), we have that \begin{equation}
		\norm{H^{-1} \vec{J}(H^{-1})^{\top}}_{\max} = 2^{-(m-\sigma)/2}\cdot 2^{-(m-\tau)/2}\cdot \norm{\vec{J}}_{\max} = \frac{2^{(\sigma+\tau)/2}}{n}\norm{\vec{J}}_{\max}
	\end{equation} because the Haar wavelets $\{\psi_{\sigma,j}\}_j$ (resp. $\{\psi_{\tau,j}\}_j$) have disjoint supports and $L_{\infty}$ norm $2^{-(m-\sigma)/2}$ (resp. $2^{-(m-\tau)/2}$). For general $\vec{J}$, by decomposing $\vec{J}$ into such submatrices, call them $\vec{J}[\sigma,\tau]$, we get by triangle inequality that \begin{equation}
		\norm{H^{-1} \vec{J}(H^{-1})^{\top}}_{\max} \le \sum_{\sigma,\tau}\frac{2^{(\sigma+\tau)/2}}{n}\norm{\vec{J[\sigma,\tau]}}_{\max} \le \norm{\vec{J}}_{\max}.\label{eq:Hinvtensor}
	\end{equation} By applying this to $\vec{J} = \tilde{\Sigma}^j$, we get \begin{align*}
		\norm{\tilde{\Sigma}^j}_{\max}
		&\le \left(\norm{H^{-1}\vec{L}^{j}(H^{-1})^{\top}}_{\max} + \norm{H^{-1}\left(\vec{L}^j - \tilde{\vec{L}}^j\right)(H^{-1})^{\top}}_{\max}\right) \\
		&\le \norm{\vec{L}^j}_{\max} + \norm{\vec{L}^j - \tilde{\vec{L}}^j}_{\max} \\
		&\le \norm{\vec{L}^j}_{\max} + \norm{\vec{L}^j - \tilde{\vec{L}}^j}_{F} \\ 
		&\le \norm{\vec{L}^j}_{\max} + O\left(\frac{\eta}{n\cdot s^2}\right)\norm{\vec{L}^j}_F \\
		&\le \norm{\vec{L}^j}_{\max}\cdot\left(1 + O\left(\eta/n\right)\right) \\
		&\le 2\cdot  \norm{\vec{L}^j}_{\max},
	\end{align*} where the first inequality is triangle inequality, the second inequality follows by \eqref{eq:Hinvtensor}, the third inequality follows from monotonicity of $L_p$ norms, the fourth inequality follows from \eqref{eq:net_diff_moment}, and the fifth inequality follows from the fact that $\vec{L}^j$ is $s^2$ sparse.

	Recalling \eqref{eq:Mstardef} and the definition of $\Sigma^{\sigma,\tau;j}_*$, we conclude that $\norm{\Sigma^{\sigma,\tau;j}_*}_{\max} \le O(1)$ as claimed.
\end{proof}


\section{Sub-Exponential Tail Bounds From Section~\ref{sec:conc}}
\label{app:subexp}

In this section, we provide proofs for Lemmas~\ref{lem:apply_bernstein1}, \ref{lem:apply_bernstein2}, and \ref{lem:apply_bernstein3}, restated here for convenience.

\applybernsteinA*

\applybernsteinB*

\applybernsteinC*

We remark that if we restricted our attention to test matrices of the form $\Sigma = vv^{\top}$ for $v\in\{\pm 1\}^n$, these lemmas would follow straightforwardly from Bernstein's and the sub-Gaussianity of binomial distributions.

We will need the following well-known combinatorial fact, a proof of which we include for completeness in Section~\ref{sec:combo}

\begin{fact}
	For any $m,r\in\Z$, there are at most $O(m)^{r}\cdot r!$ tuples $(i_1,...,i_{2r})\in[m]^t$ for which every element of $[m]$ occurs an even (possibly zero) number of times.
	\label{fact:combo}
\end{fact}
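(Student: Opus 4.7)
The plan is to use a pairing argument to bound the count. Fix a tuple $(i_1,\ldots,i_{2r})\in[m]^{2r}$ in which every element of $[m]$ occurs an even number of times. For each value $a\in[m]$, the set $P_a\subseteq[2r]$ of positions where $a$ appears has even cardinality, so we can arbitrarily pair up its elements. Concatenating these pairings over all $a$ produces a perfect matching $M$ of $[2r]$ together with an assignment $f\colon M\to[m]$ (the common value of the two positions in each pair) such that the tuple is exactly reconstructed from $(M,f)$. In particular, every good tuple arises as at least one such $(M,f)$, so the number of good tuples is upper bounded by the total number of (matching, assignment) pairs.

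The number of perfect matchings of $[2r]$ is $(2r-1)!!$, and for any matching there are exactly $m^r$ ways to assign a value in $[m]$ to each of the $r$ pairs, giving the bound $(2r-1)!!\cdot m^r$. Finally, $(2r-1)!!=\prod_{i=1}^r(2i-1)\le 2^r\prod_{i=1}^r i=2^r\cdot r!$, so the count is at most $(2m)^r\cdot r!=O(m)^r\cdot r!$, as claimed. The argument is essentially routine once one sees the pairing trick; the only thing to be careful about is that a good tuple may arise from many matchings (whenever some value appears more than twice), but this only helps, since we are only after an upper bound. An alternative route would be to identify the count with $\mathbb{E}[(X_1+\cdots+X_m)^{2r}]$ for i.i.d.\ Rademachers $X_i$ and invoke a standard sub-Gaussian moment bound, but the direct combinatorial argument above is more transparent and self-contained.
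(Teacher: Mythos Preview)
Your proof is correct and is in fact cleaner than the paper's. The paper argues by stratifying over the number $s$ of distinct values that appear in the tuple: it chooses an $s$-subset of $[m]$, a composition $\beta\in\{2,4,\dots,2r\}^s$ of $2r$ recording multiplicities, and then a multinomial $\binom{2r}{\beta_1,\dots,\beta_s}$ placement of positions, and finally sums crude bounds on each of these pieces over $s$. Your pairing argument bypasses all of this: you encode any admissible tuple by a perfect matching of $[2r]$ (pairing up positions carrying the same value) together with a label in $[m]$ on each of the $r$ pairs, and count the encodings as $(2r-1)!!\cdot m^r\le (2m)^r r!$. This is both shorter and more robust; the paper's route requires keeping track of several estimates (and its bound $N_s\le(3es/2r)^r$ is a bit loose in its stated form), whereas yours is a one-line injection. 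Your parenthetical remark that the count equals $\E[(X_1+\cdots+X_m)^{2r}]$ for i.i.d.\ Rademachers is also a perfectly good alternative, since the standard Gaussian-comparison or Khintchine bound gives the same $O(m)^r r!$.
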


Central to the proofs of Lemmas~\ref{lem:apply_bernstein1} and \ref{lem:apply_bernstein2} is the following sub-exponential moment bound. We remark that this moment bound would be an immediate consequence of McDiarmid's if $\Sigma$ not only satisfied $\norm{\Sigma}_{\max}$ but was also psd, but because the matrices arising from shelling need not be psd, it turns out to be unavoidable that we must prove this moment bound from scratch.

In this section, given $\mu\in\Delta^n$, let $\calD_{\mu}$ denote the distribution over standard basis vectors $\{e_i\}$ of $\R^n$ where for any $i\in[n]$, $e_i$ has probability mass equal to the $i$-th entry of $\mu$.

\begin{lemma}
	Let $\Sigma\in\R^{n\times n}$ have entries bounded in absolute value by $O(1)$, and for $\mu_1,...,\mu_m,\barmu\in\Delta^n$, let $\barmu\triangleq \frac{1}{m}\sum^m_{i=1}\mu_i$. If $Y_1,...,Y_m$ are independent draws from $\calD_{\mu_i}$ respectively, and $\hat{\mu}\triangleq \frac{1}{m}\sum^m_{i=1}Y_i$, then for every $r\ge 1$, $\E\left[\left((\hat{\mu}-\barmu)^{\top}\Sigma(\hat{\barmu}-\mu)\right)^r\right] \le \Omega(m)^{-r}\cdot r!$.\label{lem:subexp}
\end{lemma}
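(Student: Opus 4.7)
The plan is to expand the $r$-th moment into an explicit tuple sum, identify which terms survive taking expectations, and then combine a count of the surviving index patterns with a crude entrywise bound on each term. Write $Z_i \triangleq Y_i - \mu_i$, so that $\{Z_i\}_{i\in[m]}$ are independent, mean-zero random vectors with $\norm{Z_i}_1 \le \norm{Y_i}_1 + \norm{\mu_i}_1 = 2$ almost surely. Then $\hat\mu - \barmu = \frac{1}{m}\sum_{i=1}^m Z_i$, and for $Q \triangleq (\hat\mu - \barmu)^\top \Sigma (\hat\mu - \barmu)$,
\begin{equation*}
Q^r \;=\; \frac{1}{m^{2r}}\sum_{(i_1,j_1,\ldots,i_r,j_r) \in [m]^{2r}} \prod_{t=1}^r Z_{i_t}^{\top}\Sigma Z_{j_t}.
\end{equation*}

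Next, take expectations. By independence and the fact that each $Z_k$ is mean-zero, any tuple in which some index $k$ appears only once contributes $0$ (factor out $\E[Z_k]$ on the odd-one-out coordinate). So only tuples in which every appearing index occurs at least twice survive. For each such tuple, the absolute value of the product is at most $\prod_{t=1}^r \norm{Z_{i_t}}_1 \cdot \norm{\Sigma}_{\max} \cdot \norm{Z_{j_t}}_1 \le (4\norm{\Sigma}_{\max})^r = O(1)^r$, using only Cauchy--Schwarz style entrywise bounds and crucially not requiring that $\Sigma$ be psd.

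It remains to count the number of surviving tuples. With $2r$ slots and every appearing index used at least twice, there can be at most $r$ distinct indices, and Fact~\ref{fact:combo} (suitably applied, possibly after a small adaptation from ``each index appears an even number of times'' to ``each index appears at least twice'') gives an upper bound of $O(m)^r\cdot r!$. Indeed, stratifying by the number $s\le r$ of distinct indices, the count is at most $\sum_{s=1}^r \binom{m}{s}s^{2r} \le \sum_{s=1}^r (m^s/s!)\cdot s^{2r}$, and the dominant term $s=r$ contributes $m^r r^{2r}/r! \le O(1)^r \cdot m^r \cdot r!$ by Stirling ($r^{2r}/r! \le e^{2r}\cdot r!$). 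Combining everything,
\begin{equation*}
\E[Q^r] \;\le\; m^{-2r}\cdot O(m)^r\cdot r!\cdot O(1)^r \;=\; \Omega(m)^{-r}\cdot r!,
\end{equation*}
as claimed.

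The main obstacle is precisely the combinatorial counting of nonvanishing patterns: one must ensure that the ``at least twice'' condition, which is the genuine nonvanishing condition here, still enjoys the $O(m)^r\cdot r!$ count that Fact~\ref{fact:combo} records for the ``even number of times'' condition. The Stirling computation above handles this, but it is the step that distinguishes the argument from the standard McDiarmid-style bound that would apply in the psd case (and that motivates doing this moment computation from scratch, as the authors note immediately before the lemma).
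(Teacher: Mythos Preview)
Your argument is correct and follows the same overall strategy as the paper: expand $Q^r$ as a sum over tuples in $[m]^{2r}$, discard the vanishing terms, bound each survivor by $O(1)^r$ using $\|Y_i-\mu_i\|_1\le 2$ and $\|\Sigma\|_{\max}\le O(1)$, and count survivors as $O(m)^r\cdot r!$.

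There is one substantive difference worth flagging. The paper asserts that a term vanishes whenever some index has \emph{odd} multiplicity, invoking ``$\E[(Y_i-\mu_i)^{\otimes a}]=0$ for odd $a$''; that claim is false for $a\ge 3$ (the third central moment tensor of a categorical distribution is generally nonzero). You instead use the correct vanishing condition --- some index appears \emph{exactly once} --- and then count tuples in which every appearing index has multiplicity at least two. Your Stirling-based count $\sum_{s\le r}\binom{m}{s}s^{2r}\le O(m)^r\cdot r!$ handles this larger set directly, so your version actually closes a small gap in the paper's write-up while arriving at the same bound. (When $m<r$ the trivial bound $m^{2r}\le e^r m^r r!$ suffices, so the restriction to $s\le r$ in your sum is harmless.)
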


\begin{proof}
	Without loss of generality, suppose $\Sigma$ has entries bounded in absolute value by 1. For $i,i'\in [m]$, define $Z_{i,i'} \triangleq (Y_i - \mu_i)^{\top}\Sigma(Y_{i'} - \mu_{i'})$. Note that because $\norm{Y_i - \mu_i}_1\le 2$ with probability 1 for all $i\in[m]$, and the entries of $\Sigma$ are bounded in absolute value by 1, $|Z_{i,i'}| \le 4$ with probability 1 for all $i,i'\in[m]$. We can write $\E\left[\left((\hat{\mu}-\barmu)^{\top}\Sigma(\hat{\mu}-\barmu)\right)^r\right]$ as \begin{equation}
		\frac{1}{m^{2r}}\E\left[\left(\sum_{i,i'\in[m]}Z_{i,i'}\right)^r\right] = \frac{1}{m^{2r}}\sum_{(i_1,i'_1),...,(i_r,i'_r)}\E\left[\prod^r_{j=1}Z_{i_j,i'_j}\right].\label{eq:momentsum}
	\end{equation} 

	Now that if there exists some index $i\in[m]$ which occurs an odd number of times among $i_1,i'_1,...,i_r,i'_r$, then by the fact that the tensor $\E\left[\left(Y_i - \mu_i\right)^{\otimes a}\right]$ is identically zero for odd $a$, we have that $\E\left[\prod^r_{j=1}Z_{i_j,i'_j}\right]$. So the nonzero summands on the right-hand side of \eqref{eq:momentsum} correspond to indices $\{(i_j,i'_j)\}_{j\in[r]}$ which must satisfy that every index appearing among $i_1,i'_1,...,i_r,i'_r$ appears an even number of times. By Fact~\ref{fact:combo}, there are $O(m)^r \cdot r!$ such tuples.

	Finally, by the fact that $|Z_{i,i'}| \le 4$ with probability 1 for all $i,i'\in[M]$, each monomial $\E\left[\prod^r_{j=1}Z_{i_j,i'_j}\right]$ is upper bounded by $4^r$. We conclude that $\E\left[\left((\hat{\mu}-\barmu)^{\top}\Sigma(\hat{\mu}-\barmu)\right)^r\right] \le \frac{1}{m^{2r}}\cdot O(m)^r \cdot r! \cdot 4^r$, from which the claim follows.
\end{proof}

Similarly, a crucial ingredient to the proof of Lemma~\ref{lem:apply_bernstein3} is the following moment bound.

\begin{lemma}
	Let $\Sigma\in\R^{n\times n}$ have entries bounded in absolute value by $O(1)$, and suppose $\mu_1,...,\mu_m,\barmu\in\Delta^n$ satisfy $\norm{\mu_i - \mu_m}_1 \le \omega$ for all $i\in[m]$. Then for every $r\in\Z$, $\E\left[\left(\frac{1}{m}\sum^m_{i=1}(\mu_i - \barmu)^{\top}\Sigma(Y_i - \mu_i)\right)^r\right]$ is $0$ if $r$ is odd and at most $O(r\omega^2/m)^{r/2}$ otherwise.\label{lem:subgauss}
\end{lemma}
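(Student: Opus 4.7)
The plan is to mirror the moment-method proof of Lemma~\ref{lem:subexp}, which is simpler here because each summand involves a single $Y_i$ rather than a pair. Define $Z_i \triangleq (\mu_i - \barmu)^{\top}\Sigma(Y_i - \mu_i)$. The $Z_i$ are independent and mean zero (since $\E[Y_i] = \mu_i$), and almost-surely bounded: using $\norm{\mu_i - \barmu}_1 \le \omega$, $\norm{\Sigma}_{\max} \le O(1)$, and $\norm{Y_i - \mu_i}_1 \le 2$, we get $|Z_i| \le \norm{\mu_i - \barmu}_1 \cdot \norm{\Sigma}_{\max} \cdot \norm{Y_i - \mu_i}_1 \le O(\omega)$.

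Next I would expand the $r$-th moment as
\begin{equation}
\E\left[\left(\frac{1}{m}\sum_{i=1}^m Z_i\right)^r\right] = m^{-r}\sum_{(i_1,\ldots,i_r)\in[m]^r}\E\left[\prod_{j=1}^r Z_{i_j}\right],
\end{equation}
and use independence combined with $\E[Z_i]=0$ to observe that every tuple in which some index appears exactly once contributes zero. Thus surviving tuples must have each distinct index appearing at least twice, and in particular they involve at most $\lfloor r/2\rfloor$ distinct indices.

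For even $r = 2s$, the same counting argument underlying Fact~\ref{fact:combo} (weakened from ``every index appears an even number of times'' to ``every index appears at least twice'') bounds the number of surviving tuples by $O(m)^s \cdot s!$, and each surviving summand is bounded in absolute value by $O(\omega)^r$. Combining,
\begin{equation}
\E\left[\left(\frac{1}{m}\sum_i Z_i\right)^r\right] \le m^{-r}\cdot O(m)^{r/2}\cdot (r/2)! \cdot O(\omega)^r \le \left(O(r\omega^2/m)\right)^{r/2},
\end{equation}
via Stirling, $(r/2)! \le (r/2)^{r/2}$. For odd $r$, the quickest route is Cauchy--Schwarz against the just-established even bound at $2r$, which gives the same $(O(r\omega^2/m))^{r/2}$ order; the literal vanishing in the statement can be recovered by symmetrizing with an independent copy $Y'_i$ of $Y_i$ so that each $Z_i - Z'_i$ is symmetric, and in any case the downstream use of this lemma in Lemma~\ref{lem:apply_bernstein3} only needs a moment bound of this order.

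The main obstacle is making the surviving-tuple count rigorous: Fact~\ref{fact:combo} is stated for tuples in which every element appears an \emph{even} number of times, whereas here we only control ``appears at least twice.'' Adapting the argument amounts to partitioning on the set of $s\le r/2$ distinct indices and bounding the number of surjections from $[r]$ onto this set with every fiber of size $\ge 2$; this contributes only a $\poly(r)$ factor that is absorbed into the implicit constants in $(O(r\omega^2/m))^{r/2}$.
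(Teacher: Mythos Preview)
Your approach is essentially the paper's: define $Z_i\triangleq (\mu_i-\barmu)^\top\Sigma(Y_i-\mu_i)$, bound $|Z_i|\le O(\omega)$ almost surely, expand the $r$-th moment as a sum over tuples, discard tuples with a singleton index by independence and $\E[Z_i]=0$, and apply the counting in Fact~\ref{fact:combo}.

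One point worth flagging: the paper asserts that the odd moments literally vanish and that only tuples in which every index appears an \emph{even} number of times survive. This would require $\E[Z_i^a]=0$ for all odd $a$, which is false in general (e.g.\ $n=2$, $\mu_i=(0.3,0.7)$ gives a nonzero third moment). Your ``every index appears at least twice'' formulation is the correct one, and as you note the resulting tuple count is still $O(m)^{r/2}(r/2)!$, so the even-$r$ bound $(O(r\omega^2/m))^{r/2}$ is unaffected. Your handling of odd $r$ via Cauchy--Schwarz is exactly the right patch for the downstream use in Lemma~\ref{lem:apply_bernstein3}. The adaptation of Fact~\ref{fact:combo} you worry about is benign: the proof there only uses that at most $r/2$ distinct indices occur, not parity.
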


\begin{proof}
	It is clear that the $r$-th moment is zero when $r$ is odd. Henceforth, write $r$ as $2r$. Without loss of generality, suppose $\Sigma$ has entries bounded in absolute value by 1. For $i\in[m]$, define $Z_i \triangleq (\mu_i - \barmu)^{\top}\Sigma(Y_i - \mu_i)$. Note that because $\norm{Y_i - \mu_i}_1 \le 2$ with probability 1 for all $i\in[m]$, and the entries of $\Sigma$ are bounded in absolute value by 1, $|Z_i|\le 2\omega$ with probability 1 for all $i\in[m]$. We can write $\E\left[\left(\frac{1}{m}\sum^m_{i=1}(\mu_i - \barmu)^{\top}\Sigma(Y_i - \mu_i)\right)^{2r}\right]$ as \begin{equation}
		\frac{1}{m^r}\E\left[\left(\sum_{i\in[m]}Z_i\right)^r\right] = \frac{1}{m^r}\sum_{i_1,....,i_{2r}}\E\left[\prod^{2r}_{j=1}Z_{i_j}\right].
	\end{equation} As in the proof of Lemma~\ref{lem:subexp}, the only nonzero summands correspond to tuples $(i_1,...,i_{2r})$ such that every element of $[m]$ appears an even (possibly zero) number of times. By Fact~\ref{fact:combo}, there are at most $O(m)^r\cdot r!$ such tuples, from which we can complete the proof.
\end{proof}

Lemmas~\ref{lem:apply_bernstein1} and \ref{lem:apply_bernstein2} will now follow as consequences of Lemma~\ref{lem:subexp} and the following standard tail bound for random variables with sub-exponential moments:

\begin{fact}
	Let $Z_1,...,Z_m$ be random variables for which there exists a constant $\nu>0$ such that $\E[Z_i^r] \le \frac{1}{2}\nu^r \cdot r!$ for all integers $r\ge 1$ and $i\in[m]$. Then \begin{equation}
		\Pr\left[\left|\frac{1}{m}\sum^m_{i=1}Z_i - \E[Z]\right| > t\right] \le 2e^{-\Omega\left(\frac{mt^2}{\nu^2 + \nu t}\right)}.
	\end{equation}
	\label{fact:subexp_tail}
\end{fact}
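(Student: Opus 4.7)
The plan is to apply the standard Chernoff/MGF approach for Bernstein-type tail inequalities. The hypothesis on $\E[Z_i^r]$ is essentially the classical definition of a sub-exponential random variable with parameter $\nu$, so this reduces to the textbook Bernstein inequality for sums of independent sub-exponential variables.

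First, I would pass from $Z_i$ to its centering $\tilde Z_i \triangleq Z_i - \E[Z_i]$. The hypothesis at $r=1$ gives $|\E[Z_i]| \le \nu/2$, so by triangle inequality in $L^r$ the centered moments satisfy $\E[|\tilde Z_i|^r] \lesssim (2\nu)^r r!$, with only a change in the absolute constants.

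Next, I would bound the MGF of each $\tilde Z_i$. Expanding $\E[e^{\lambda \tilde Z_i}] = 1 + \sum_{r \ge 2} \lambda^r \E[\tilde Z_i^r]/r!$ (the $r=1$ term vanishes by centering), plugging in the moment bound, and summing the resulting geometric series yields an estimate of the form $\E[e^{\lambda \tilde Z_i}] \le \exp\bigl(O(\lambda^2 \nu^2)/(1-O(\lambda\nu))\bigr)$ valid for $|\lambda| < \Omega(1/\nu)$. By independence, the MGF of $\sum_i \tilde Z_i$ factors, so Markov's inequality applied to $e^{\lambda \sum_i \tilde Z_i}$ gives
\begin{equation*}
\Pr\!\left[\tfrac{1}{m}\sum_i \tilde Z_i > t\right] \le \exp\!\left(-\lambda m t + \frac{C m \lambda^2 \nu^2}{1 - C' \lambda \nu}\right).
\end{equation*}
Optimizing in the standard way by taking $\lambda \asymp t/(\nu^2 + \nu t)$, which lies in the admissible range, balances the two terms in the exponent and yields the claimed bound $\exp(-\Omega(m t^2/(\nu^2 + \nu t)))$. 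The lower tail follows by applying the same argument to $-Z_i$ (whose moments are controlled by the triangle-inequality argument above), and a union bound produces the factor of $2$.

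No step here is genuinely difficult — this is the classical Bernstein inequality for sub-exponential sums. The only mild subtlety worth flagging is that the choice of $\lambda$ must smoothly interpolate between the sub-Gaussian regime ($t \lesssim \nu$, where the exponent becomes $\Omega(m t^2/\nu^2)$) and the sub-exponential regime ($t \gg \nu$, where it is $\Omega(m t/\nu)$); the single formula $\lambda \asymp t/(\nu^2+\nu t)$ handles both regimes automatically, which is why the bound takes the unified form stated.
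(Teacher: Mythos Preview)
The paper does not actually supply a proof of this Fact: it is stated as a ``standard tail bound'' and invoked as a black box in the proofs of Lemmas~\ref{lem:apply_bernstein1} and~\ref{lem:apply_bernstein2}. Your sketch is precisely the textbook Bernstein argument and is correct in outline. Two small remarks: first, the Fact as written omits the independence hypothesis, which you (correctly) assume and which is implicit from how the paper applies it; second, your passage from raw-moment bounds on $\E[Z_i^r]$ to absolute-moment bounds on $\E[|\tilde Z_i|^r]$ via ``triangle inequality in $L^r$'' is fine for even $r$, but for odd $r$ the stated hypothesis only upper-bounds $\E[Z_i^r]$, not $\E[|Z_i|^r]$ --- you would need a one-line interpolation (e.g.\ $\E|Z_i|^r \le (\E Z_i^{r+1})^{r/(r+1)}$) to close that gap. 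This does not affect the conclusion.
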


Similarly, Lemma~\ref{lem:apply_bernstein3} will follow as a consequence of Lemma~\ref{lem:subgauss} and the following standard tail bound for random variables with sub-Gaussian moments:

\begin{fact}
	Let $Z_1,...,Z_m$ be random variables for which there exists a constant $\nu>0$ such that $\E[Z_i^r] \le (r\cdot\nu^2)^{r/2}$ for all integers $r\ge 1$ and $i\in[m]$. Then \begin{equation}
		\Pr\left[\left|\frac{1}{m}\sum^m_{i=1}Z_i - \E[Z]\right| > t\right] \le 2e^{-\Omega\left(mt^2/\nu^2\right)}.
	\end{equation}
	\label{fact:subgauss_tail}
\end{fact}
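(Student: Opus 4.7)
This is a standard Chernoff--Cram\'er argument that parallels the proof of Fact~\ref{fact:subexp_tail}. As is implicit in the statement (and in Fact~\ref{fact:subexp_tail}), I take $Z_1,\dots,Z_m$ to be independent. Replacing $Z_i$ by $Z_i - \E[Z]$ reduces the problem to the mean-zero case: centering preserves sub-Gaussian moment growth up to an absolute constant, since by Minkowski's inequality $\|Z_i - \E Z\|_r \le 2\|Z_i\|_r \le 2(r\nu^2)^{1/2}$. So I may assume $\E[Z_i]=0$ and $\E[Z_i^r]\le (Cr\nu^2)^{r/2}$ for all integers $r\ge 1$.

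The first step is to turn the moment bound into a bound on the moment generating function. Expanding $e^{\lambda Z_i}$ as a power series and applying Stirling's inequality $r!\ge \sqrt{2\pi r}\,(r/e)^r$ gives
$$\E[e^{\lambda Z_i}] \;=\; 1 + \sum_{r\ge 2}\frac{\lambda^r\,\E[Z_i^r]}{r!} \;\le\; 1 + \sum_{r\ge 2}\frac{1}{\sqrt{2\pi r}}\left(\frac{e\lambda\nu\sqrt{C}}{\sqrt{r}}\right)^{\!r}.$$
The crucial observation is that the general term decays superexponentially in $r$ (because the moment growth $r^{r/2}$ is exactly dominated by $r!$), so the series converges for \emph{every} $\lambda\in\R$, and the $r=2$ term controls the sum up to a multiplicative absolute constant. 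It follows that $\E[e^{\lambda Z_i}]\le \exp(C'\lambda^2\nu^2)$ for some absolute constant $C'>0$ and all real $\lambda$.

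The second step is the standard Chernoff bound. By independence, for any $\lambda>0$,
$$\Pr\!\left[\frac{1}{m}\sum_{i=1}^m Z_i > t\right] \;\le\; e^{-\lambda m t}\prod_{i=1}^m \E[e^{\lambda Z_i}] \;\le\; \exp\!\left(-\lambda m t + C'm\lambda^2\nu^2\right).$$
Optimizing over $\lambda$ with the choice $\lambda = t/(2C'\nu^2)$ yields the upper-tail bound $\exp\!\left(-m t^2/(4C'\nu^2)\right)$. Running the identical argument with $-\lambda$ in place of $\lambda$ controls the lower tail, and a union bound over the two tails produces the factor of $2$ in the stated inequality.

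I do not foresee any significant obstacle: the only care required is in verifying that the MGF bound in the first step holds for \emph{all} $\lambda$, not merely small $|\lambda|$ as in the sub-exponential case. This is where the sub-Gaussian moment growth rate is essential, since the ratio $(r\nu^2)^{r/2}/r!$ decays faster than geometrically in $r$. If centering in the reduction step turns out to be inconvenient (e.g.\ if the hypothesis is meant to apply to uncentered $Z_i$ and $|\E Z|$ is comparable to $\nu$), the same analysis can be run directly on $Z_i$ by noting that the $r=1$ term $\lambda\E[Z]$ in the Taylor expansion just reproduces the mean shift on the left-hand side of the Chernoff inequality and does not disrupt the tail estimate.
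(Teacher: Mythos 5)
The paper itself states Fact~\ref{fact:subgauss_tail} without proof, as a standard sub-Gaussian tail bound, so there is no in-paper argument to compare against; your Chernoff--Cram\'er route (moments $\Rightarrow$ MGF bound $\Rightarrow$ exponentiate and optimize) is exactly the standard way this fact is proved, and your reduction to the mean-zero case, the optimization $\lambda = t/(2C'\nu^2)$, and the two-sided union bound are all fine. (You are also right that independence must be read into the hypothesis; the paper uses the fact only for independent $Z_i$. One small point: the hypothesis bounds the signed moments $\E[Z_i^r]$, so for odd $r$ you should pass to $\E|Z_i|^r \le (\E[Z_i^{2r}])^{1/2} \le (2r\nu^2)^{r/2}$ via Cauchy--Schwarz before invoking $\|Z_i\|_r$; this only costs an absolute constant.)

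The one step that needs repair is the assertion that in the Taylor expansion of $\E[e^{\lambda Z_i}]$ ``the $r=2$ term controls the sum up to a multiplicative absolute constant.'' That is true only in the regime $|\lambda|\nu = O(1)$; for $|\lambda|\nu$ large the series $\sum_{r\ge 2}\frac{1}{\sqrt{2\pi r}}\bigl(e|\lambda|\nu\sqrt{C}/\sqrt{r}\bigr)^r$ is of order $\exp(\Theta(\lambda^2\nu^2))$, not $O(\lambda^2\nu^2)$, and your choice $\lambda = t/(2C'\nu^2)$ can certainly put you in that regime. Fortunately the conclusion you draw, $\E[e^{\lambda Z_i}] \le \exp(C'\lambda^2\nu^2)$ for \emph{all} $\lambda\in\R$ when $\E[Z_i]=0$, is still correct; it just needs a different justification for large $|\lambda|\nu$. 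Two standard fixes: (i) bound the even terms by $\frac{\lambda^{2s}(2s\nu^2)^s}{(2s)!} \le \frac{(2\lambda^2\nu^2)^s}{s!}$ using $(2s)! \ge s^s\, s!$, so they sum to $e^{2\lambda^2\nu^2}$, and dominate each odd term by its two even neighbors via Cauchy--Schwarz/AM--GM; or (ii) argue in two regimes, using the Taylor/$r=2$ argument when $|\lambda|\nu \le 1$ and, when $|\lambda|\nu \ge 1$, the inequality $\lambda Z \le \tfrac{c}{2}\lambda^2\nu^2 + \tfrac{1}{2c\nu^2}Z^2$ together with $\E[e^{Z^2/(2c\nu^2)}] = O(1)$ (which follows from the same moment hypothesis), absorbing the resulting $O(1)$ factor into $e^{O(\lambda^2\nu^2)}$ since $\lambda^2\nu^2 \ge 1$ there. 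Either patch is routine, after which your Chernoff step goes through verbatim and yields the stated $2e^{-\Omega(mt^2/\nu^2)}$ bound.
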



\begin{proof}[Proof of Lemma~\ref{lem:apply_bernstein1}]
	This follows by taking $m = k$ in Lemma~\ref{lem:subexp} and $m = N$ in Fact~\ref{fact:subexp_tail} and noting that for any $\Sigma\in\calN$, $\norm{\Sigma}_{\max} \le O(1)$ by Lemma~\ref{lem:net}.
\end{proof}

\begin{proof}[Proof of Lemma~\ref{lem:apply_bernstein2}]
	This follows by taking $m = kN$ in Lemma~\ref{lem:subexp} and $m = 1$ in Fact~\ref{fact:subexp_tail} and noting that for any $\Sigma\in\calN$, $\norm{\Sigma}_{\max} \le O(1)$ by Lemma~\ref{lem:net}.
\end{proof}

\begin{proof}[Proof of Lemma~\ref{lem:apply_bernstein3}]
	This follows by taking $m = k$ in Lemma~\ref{lem:subgauss} and $m = N$ in Fact~\ref{fact:subgauss_tail} and noting that for any $\Sigma\in\calN$, $\norm{\Sigma}_{\max} \le O(1)$ by Lemma~\ref{lem:net}.
\end{proof}

\subsection{Proof of Fact~\ref{fact:combo}}
\label{sec:combo}

\begin{proof}
	To count the number $N^*$ of such tuples $(i_1,...,i_{2r})$, for every $1\le s\le r$ let $N_s$ denote the number of tuples $\beta\in\{2,4...,2r\}^s$ for which $\sum^s_{i=1}\beta_i = 2r$. By balls-and-bins, $N_s = \binom{r + s - 1}{r} \le (\frac{3es}{2r})^r$. Now note that to enumerate $N^*$, we can 1) choose the number $1\le s\le \min(m,r)$ of unique indices among $\{i_j\}$, 2) choose a subset $S$ of $[m]$ of size $s$, 3) choose one of the $N_s$ tuples $\beta$, and 4) choose one of the $\binom{2r}{\beta_1,...,\beta_s}$ ways of assigning index $S_1$ to $\beta_1$ indices in $\{i_j\}$, $S_2$ to $\beta_2$ indices, etc. For convenience, let $r'\triangleq \min(m,r)$. We get an upper bound of \begin{align} 
		N^* &\le \sum^{\min(m,r)}_{s=1}\binom{m}{s}\cdot N_s\cdot\binom{2r}{\beta_1,...,\beta_s}\\
	 	&\le \sum^{\min(m,r)}_{s=1}\frac{m^s}{s!}\left(\frac{3es}{2r}\right)^r \cdot (2s)! \\
		&\le \frac{m^{r'}}{(r')!}\cdot r'\cdot \left(\frac{3er'}{2r}\right)^r\cdot (2r')! \\
		&\le \frac{m^{r}}{(r)!}\cdot r\cdot \left(3e/2\right)^r\cdot (2r)! \\
		&= m^r \cdot r\cdot (3e/2)^r \cdot \binom{2r}{r}\cdot r! \\
		&\le O(m)^r\cdot r!,
	\end{align}
	where in the second step we used basic bounds on binomial and multinomial coefficients together with the above bound on $N_s$, in the third step we used the fact that the summands are increasing in $s$, and in the fourth step we used this fact along with the fact that $r'\le r$ by definition.
\end{proof}

\end{document}